\newcommand\blfootnote[1]{%
  \begingroup
  \renewcommand\thefootnote{}\footnote{#1}%
  \addtocounter{footnote}{-1}%
  \endgroup
}
\title{Finite-Sum Optimization: A New Perspective for Convergence to a Global Solution}
\author{Lam M. Nguyen$^{1*}$, Trang H. Tran$^{2*}$, \textbf{Marten van Dijk}$^{3}$ \\
$^{1}$ IBM Research, Thomas J. Watson Research Center, Yorktown Heights, NY, USA \\
$^{2}$ School of Operations Research and Information Engineering, Cornell University, Ithaca, NY, USA\\
$^{3}$ CWI Amsterdam, The Netherlands \\
\\
\texttt{LamNguyen.MLTD@ibm.com}, \texttt{htt27@cornell.edu}, \texttt{marten.van.dijk@cwi.nl}
}
\newtheorem{theorem}{Theorem}
\newtheorem{lemma}{Lemma}
\newtheorem{corollary}{Corollary}
\newtheorem{definition}{Definition}
\newtheorem{remark}{Remark}
\newtheorem{assumption}{Assumption}
\newcolumntype{C}[1]{>{\centering\let\newline\\\arraybackslash\hspace{0pt}}m{#1}}
\newcommand\tagthis{\addtocounter{equation}{1}\tag{\theequation}}
\DeclareMathOperator{\Ocal}{\mathcal{O}}
\renewcommand{\top}{T}
\newcommand{\R}{\mathbb{R}}
\newcommand{\zero}[1]{{\boldsymbol{0}}}
\newenvironment{framework}[1][hpt!]{%
    \renewcommand{\ALG@name}{Framework}% Update algorithm name
    \begin{algorithm}[#1]%
    }{\end{algorithm}
}
\begin{document}

\maketitle

\begin{abstract}
Deep neural networks (DNNs) have shown great success in many machine learning tasks. Their training is challenging since the loss surface of the network architecture is generally non-convex, or even non-smooth. How and under what assumptions is guaranteed convergence to a \textit{global} minimum possible? We propose a reformulation of the minimization problem allowing for a new recursive algorithmic framework. By using bounded style assumptions, we prove convergence to an $\varepsilon$-(global) minimum using $\mathcal{\tilde{O}}(1/\varepsilon^3)$ gradient computations. Our theoretical foundation motivates further study, implementation, and optimization of the new  algorithmic framework and further investigation of its non-standard bounded style assumptions. This new direction broadens our understanding of why and under what circumstances training of a DNN converges to a global minimum.
\end{abstract}

\blfootnote{$^{*}$ Equal contribution. Correspondence to: Lam M. Nguyen.}

\section{Introduction}\label{sec_intro}

% \lam{Should we revise ``global convergence'' to ``Convergence to a global solution''?}

In recent years, deep neural networks (DNNs) have shown a great success in many machine learning tasks. 
However, training these neural networks is challenging since the loss surface of network architecture is generally non-convex, or even non-smooth. 
Thus, there have been a long-standing question on how optimization algorithms may converge to a global minimum. 
Many previous work have investigated Gradient Descent algorithm and its stochastic version for over-parameterized setting \citep{pmlr-v80-arora18a, soudry2018implicit, pmlr-v97-allen-zhu19a, pmlr-v97-du19c,ZouG19nips}. 
Although these works have shown promising convergence results under certain assumptions, there is still a lack of new efficient methods that can guarantee convergence to a global solution for machine learning optimization. 
In this paper, we address this problem using a different perspective. Instead of analyzing the traditional finite-sum formulation, we adopt a new {\it composite formulation} that exactly depicts the structure of machine learning where a data set is used to learn a common classifier.

%In this paper, we propose a novel model followed by a new framework that address this problem. Our formulation exactly depicts the structure of machine learning tasks where a training data set of inputs and outputs is used to learn a common classifier. 

%\textcolor{red}{We use $x^{(i)}$ for data and $x^{(t)}_i$ in the analysis. I have put a bar on top of $x$ -- check whether appendix needs similar adaptation.}

{\bf Representation.} Let $ \left\{( {x}^{(i)}, y^{(i)}) \right\}_{i=1}^n$ be a given training set with $ {x}^{(i)} \in \R^m, y^{(i)} \in \R^c$, we investigate the following novel representation for deep learning tasks: 
% \begin{align*}
%     \min_{w \in \mathbb{R}^d} \left\{ F(w) = \frac{1}{n} \sum_{i=1}^n f ( w ; i ) = \frac{1}{n} \sum_{i=1}^n \phi_i ( h ( w ; i) ) \right \}, \tagthis \label{new_loss_01}
% \end{align*}
\begin{align*}
    \min_{w \in \mathbb{R}^d} \left\{ F(w) = \frac{1}{n} \sum_{i=1}^n \phi_i ( h ( w ; i) ) \right \}, \tagthis \label{new_loss_01}
\end{align*}
% \quoc{Should it be $\phi(h(w;i), y^{(i)})$? Usually, the response $y^{(i)}$ does not enter into the classifier. Otherwise, we should not call it a classifier. In addition, we should not claim this is a new model (in fact, it is not new), but rather will will focus on how we exploit this composite structure.} \lam{We do not claim the model is new. We say that ``We propose a new optimization framework for training the machine learning losses. This can be considered as a new perspective since it is an alternative formulation of the classical finite-sum problem''. Hence, this claim should be correct. Moreover, we should not use $\phi(h(w;i), y^{(i)})$ since it is not new and not different with the previous work while $\phi_i ( h ( w ; i) )$ is new and harder to analyze since $i$ index involved both $\phi_i$ and $h_i$. That's why we have the new framework for the analysis. We call $h(w;i)$ as a classifier but not $\phi_i$.}
where $h(\cdot ; i): \mathbb{R}^d \to \mathbb{R}^c$, $i\in [n]=\{1,\ldots,n\}$, %, $i = 1,\dots,n$ 
is the classifier for each input data $ {x}^{(i)}$;
%for $i = 1,\dots,n$; 
and $\phi_i: \mathbb{R}^c \to \mathbb{R}$, $i\in [n]$, is the loss function corresponding to each output data $y^{(i)}$.
%$i = 1,\dots,n$. 
Our {\it composite formulation} \eqref{new_loss_01} is a special case of the finite-sum problem %\eqref{eq:finite_sum_main} 
$\min_{w \in \mathbb{R}^d} \left\{ F(w) = \frac{1}{n}
\sum_{i=1}^n f(w; i) \right\}$
where each individual function $f(\cdot; i)$ is a composition of the loss function $\phi_i$ and the classifier $h(\cdot; i)$.
This problem covers various important applications in machine learning, including logistic regression and neural networks. 
The most common approach for the finite-sum problem is using first-order methods such as (stochastic) gradient algorithms and making assumptions on the component functions $f(\cdot; i)$. As an alternative, we further investigate the structure of the loss function $\phi_i$ and narrow our assumption on the classifier $h(\cdot; i)$. 
For the purpose of this work, we first consider convex and Lipschitz-smooth loss functions while the classifiers can be non-convex. 
Using this representation, we propose a new framework followed by two algorithms that guarantee convergence to a global solution for the minimization problem. 

{\bf Algorithmic Framework.}
Representation (\ref{new_loss_01}) admits  a new perspective. Our key insight is to (A) define $ z_i^{(t)}=h(w^{(t)};i)$, where $t$ is an iteration count of the outer loop in our algorithmic framework. Next (B), we want to approximate the change  $ z_i^{(t+1)}- z_i^{(t)}$ in terms of a step size times the gradient  
% $$\nabla \phi_i( z_i^{(t)})= ( \partial \phi_i( z_i^{(t)} /\partial_a )_a$$ 
$$\nabla \phi_i( z_i^{(t)})= ( \partial \phi_i( z ) /\partial z_a )_{a \in [c]} \big |_{z=z_i^{(t)}},$$  
and (C) we approximate the change $h(w^{(t+1)};i)-h(w^{(t)};i)$ in terms of the first order derivative
% $$H^{(t)}_i = ( \partial h_a(w^{(t)};i)/\partial w^{(t)}_b )_{a,b}.$$ 
$$H^{(t)}_i = ( \partial h_a(w ;i)/\partial w_b )_{a \in [c],b \in [d]} \big |_{w = w^{(t)}}.$$
Finally, we combine (A), (B), and (C) to equate the approximations of $ z_i^{(t+1)}- z_i^{(t)}$ and $h(w^{(t+1)};i)-h(w^{(t)};i)$.
This leads to a recurrence on $w^{(t)}$ of the form $w^{(t+1)}=w^{(t)}-\eta^{(t)} v^{(t)}$, where $\eta^{(t)}$ is a step size and which involves computing $v^{(t)}$ by solving a convex quadratic subproblem, see the details in Section~\ref{sec_new_framework}. 
We explain two methods for approximating a solution for the derived subproblem.
We show how to approximate the subproblem by transforming it into a strongly convex problem by adding a regularizer which can be solved in closed form. And we show how to use Gradient Descent (GD) on the subproblem to find an approximation $v^{(t)}$ of its solution.

{\bf Convergence Analysis.} Our analysis introduces non-standard bounded style assumptions. Intuitively, we assume that our convex and quadratic subproblem has a {\em bounded} solution.
This allows us to prove a total complexity of {$\tilde{\Ocal}(\frac{1}{\varepsilon^3})$} to find an $\varepsilon$-(global) solution that satisfies $F(\hat{w}) - F_* \leq \varepsilon$, where $F_*$ is the global minimizer of $F$.
% Notice that this is similar to a Stochastic Gradient Descent approach which converges to some local, {\em not necessarily global}, stationary point. 
Our analysis applies to a wide range of applications in machine learning: Our results hold for squared loss and softmax cross-entropy loss and applicable for a range of activation functions in DNN as we only assume that the $h(\cdot;i)$ are twice continuously differentiable 
%(we discuss how this can be relaxed) 
and their Hessian matrices (second order derivatives) as well as their gradients (first order derivatives) are bounded. 
%Our problem is generally non-convex and can also be extended to non-smooth settings (by relaxing the twice continuously differentiability).

{\bf Contributions and Outline.}
Our contributions in this paper can be summarized as follows. 
\begin{itemize}
    \item % We propose a new optimization framework for training the machine learning losses.  This can be considered as a new perspective since it is an alternative formulation of the classical finite-sum problem. 
    We propose a new representation (\ref{new_loss_01}) for analyzing the machine learning minimization problem.  Our formulation utilizes the structure of machine learning tasks where a training data set of inputs and outputs is used to learn a common classifier. Related work in Section \ref{subsec:related_work}  shows how (\ref{new_loss_01}) is different from the classical finite-sum problem. 
     
    \item %We propose a novel algorithm framework for solving this problem. Our algorithm framework consists of finding the solutions of the sub-problems by using a closed form or using gradient descent to approximate the solution. 
    Based on the new representation we propose a novel algorithm framework. The algorithmic framework approximates a solution to a subproblem  for which we show two distinct approaches. 
    
    %for convex loss functions, which consists of approximating the solution to the sub-problems. This results in two individual algorithms, either solving for a closed form formula or using gradient descent to approximate the solution. Our framework provides helpful insights that can lead to advanced algorithm development. 
    \item 
    %We prove the complexity of $\Ocal(\frac{1}{\varepsilon})$ for finding an $\varepsilon$-(global) solution that satisfies $F(\hat{w}) - F_* \leq \varepsilon$, where $F_*$ is the global minimizer of $F$. Our problem is also non-convex and non-smooth in general. All assumptions are based on neural network architectures. 
   For general DNNs and based on bounded style assumptions, we prove a total complexity of {$\tilde{\Ocal}(\frac{1}{\varepsilon^3})$}  to find an $\varepsilon$-(global) solution that satisfies $F(\hat{w}) - F_* \leq \varepsilon$, where $F_*$ is the global minimizer of $F$. 
   % Our problem is generally non-convex, and can be extended to non-smooth settings,
   % which have a wide range of applications in machine learning.
   % Our assumptions are based on neural network architectures, which support further investigations in the structure of deep networks.
    % \item %Our paper aim to handle convex loss funtions which have a wide range of applications in deep learning. Moreover, we are able to handle the bias term of neural network as well as non-differentiable and differentiable activation functions. 
    % Our problem is generally non-convex and non-smooth, %Our paper aim to handle convex loss funtions that,
    % which have a wide range of applications in machine learning.
    % Moreover, our results work with the bias term of neural network as well as non-differentiable activation functions, both of which are standard practice for deep learning architectures.
\end{itemize}
We emphasize  that our focus is on developing a new theoretical foundation and that a  translation to a practical implementation with  empirical results is for future work.
Our theoretical foundation motivates  further study, implementation, and optimization of the new  algorithmic framework and further investigation of its non-standard bounded style assumptions. This new direction broadens our understanding of why and under what circumstances  training of a DNN converges to a global minimum.

%  Our problem is generally non-convex, and can be extended to non-smooth settings,
%    which have a wide range of applications in machine learning.
%    Our assumptions are based on neural network architectures, which support further investigations in the structure of deep networks.

The rest of this paper is organized as follows. Section \ref{subsec:related_work} discusses related work. Section \ref{sec_background} describes our setting and deep learning representation. Section \ref{sec_new_framework} explains our key insight and derives our Framework \ref{new_alg_framework}. Section \ref{sec_new_algs} presents our algorithms and their convergence to a global solution. All technical proofs are deferred to the Appendix. 

%    which have a     Our assumptions are based on neural network architectures, which support further investigations in the structure of deep networks.

%============

%Our work is purely theoretical, and we believe that these new perspectives will provide useful insights to find more effective algorithms for machine learning tasks in the near future.
%This has a wide range of applications in deep learning. 
% \begin{align*}
%     h(w ; i) = W^{(L)\top} \sigma_{L-1} ( W^{(L-1)\top} \sigma_{L-2} ( \dots \sigma_1 (W^{(1)\top} x^{(i)} + b^{(1)} ) \dots ) + b^{(L-1)}  ) + b^{(L)}, \tagthis \label{classifier}
% \end{align*}
% We denote
% \begin{align*}
%     f ( w ; i ) = \phi_i ( h ( w  ; i) ) = \phi_i ( x ) \big |_{x = h ( w ; i )} \ , \ i = 1,\dots,n, \tagthis \label{eq_component_loss_01}
% \end{align*}
% where $\phi_i: \mathbb{R}^c \to \mathbb{R}$ is convex and $L_{\phi}$-smooth with respect to $x$. %We note that $\phi_i$ is convex and $L_{\phi}$-smooth with respect to $x$ following by the applications of Lemmas~\ref{lem_squared_convex_smooth} and \ref{lem_softmax_cross_entropy_convex_smooth}. 
% The optimization problem for training many ML
% models, including DNNs, using a training set $\{(x_i,y_i)\}_{i=1}^n$ of $n$ samples can be formulated as as a finite-sum minimization problem as follows
% \begin{equation}\label{eq:finite_sum_main}
% \min_{\m{w} \in \mathbb{R}^d} \left\{ F(\m{w}) = \frac{1}{n}
% \sum_{i=1}^n f(\m{w}; x_i, y_i) \right\}.
% \end{equation}

%TO DO: Give intuition, introduce assumption, contribution, why important

\section{Related Work}\label{subsec:related_work}
\textbf{Formulation for Machine Learning Problems.}
% The common practice to analyze machine learning problems is using the finite-sum formulation: 
% \begin{equation}\label{eq:finite_sum_main}
% \min_{w \in \mathbb{R}^d} \left\{ F(w) = \frac{1}{n}
% \sum_{i=1}^n f\left(w; x^{(i)}, y^{(i)}\right) = \frac{1}{n}
% \sum_{i=1}^n f(w; i) \right\}.
% \end{equation}
% Our formulation \eqref{new_loss_01} is a special case of the finite-sum problem \eqref{eq:finite_sum_main} where each component function $f(\cdot; i)$ is a composition of the loss function $\phi_i$ and the classifier $h(\cdot; i)$. 
% Our formulation exactly depict the structure of machine learning tasks where a training data set of inputs and outputs are used to learn a common classifier. 
The finite-sum problem is one of the most important and fundamental problems in machine learning. 
Analyzing this model is the most popular approach in the machine learning literature and it has been studied intensively throughout the years \citep{bottou_survey,svrg_nonconvex-reddi16,2011duchi11a}.
Our new formulation \eqref{new_loss_01} is a special case of the finite-sum problem, however, it is much more complicated than the previous model since it involves the data index $i$ both inside the classifiers $h(\cdot;i)$ and the loss functions $\phi_i$. 
% and our formulation \eqref{new_loss_01} is a special case of this problem.
% However, our novel model is much more complicated than the previous finite-sum formulation since it involves the data index $i$ both inside the classifiers $h(\cdot;i)$ and the loss functions $\phi_i$. 
For a comparison, previous works only consider a common loss function $l (\hat{y}, y)$ for the predicted value $\hat{y}$ and output data $y$ \citep{zou2018stochastic, soudry2018implicit}. 
{Our modified version of loss function $\phi_i$ is a natural setting for machine learning. We note that when $h(w;i)$ is the output produced by a model, our goal is to match this output with the corresponding target $y^{(i)}$. For that reason, the loss function for each output has a dependence on the output data $y^{(i)}$, and is denoted by $\phi_i$. This fact reflects the natural setting of machine learning where the outputs are designed to fit different targets, and the optimization process depends on both outer function $\phi_i$ and inner functions $h(\cdot; i)$.}
This complication may potentially bring a challenge to theoretical analysis. However, with separate loss functions, we believe this model will help to exploit better the structure of machine learning problems and gain more insights on the neural network architecture. 

Other related composite optimization models are also investigated thoroughly in \citep{Lewis2016APM, zhang2019stochastic, pmlr-v119-tran-dinh20a}. {Our model is different from these works as it does not have a common function wrapping outside the finite-sum term, as in} \citep{Lewis2016APM}. {Note that a broad class of variance reduction algorithms} (e.g. SAG \citep{SAG}, SAGA \citep{SAGA}, SVRG \citep{SVRG}, SARAH \citep{Nguyen2017sarah}) {is designed specifically for the finite-sum formulation and is known to have certain benefits over Gradient Descent.}  In addition, the multilevel composite problem considered in \citep{zhang2021multilevel} also covers empirical risk minimization problem. However our formulation does not match their work since our inner function $h(w;i)$ is not an independent expectation over some data distribution, but a specific function that depends on the current data.

\textbf{Global Convergence for Neural Networks.}
A recent popular line of research is studying the dynamics of optimization methods on some specific neural network architectures.
There are some early works that show the global convergence of Gradient Descent (GD) for simple linear network and two-layer network \citep{brutzkus2017sgd,soudry2018implicit,pmlr-v97-arora19a,du2019gradient}.
Some further works extend these results to deep learning architectures \citep{pmlr-v97-allen-zhu19a,pmlr-v97-du19c,ZouG19nips}. These theoretical guarantees are generally proved for the case when the last output layer is fixed, which is not standard in practice. A recent work \citep{nguyen2020global} prove the global convergence for GD when all layers are trained with some initial conditions. 
However, these results are for neural networks without bias neurons and it is unclear how these analyses can be extended to handle the bias terms of deep networks with different activations. 
Our novel framework and algorithms do not exclude learning bias layers %and do not require any special initialization 
as in \citep{nguyen2020global}.
% \trang{The work \citep{nguyen2020global} has no bias and requires special initialization. \citep{DanielyNIPS2017} has bias, but it is for SGD and needs unrealistic over-parameterized condition. Our work: new algorithm, has bias neurons and doesn't requires any special initialization.
% }

Using a different algorithm, \cite{brutzkus2017sgd} investigate Stochastic Gradient Descent (SGD) for two-layer networks in a restricted linearly separable data setting. %They prove that SGD converges to a global minimum using Leaky ReLU activations. However, when ReLU activations are used, their results hold with high probability only. 
This line of research continues with the works from \citep{pmlr-v97-allen-zhu19a,zou2018stochastic} and later with \citep{ZouG19nips}. They justify the global convergence of SGD for deep neural networks for some probability depending on the number of input data and the initialization process. 

\textbf{Over-Paramaterized Settings and other Assumptions for Machine Learning.}
Most of the modern learning architectures are over-parameterized, which means that the number of parameters are very large and often far more than the number of input data. % This leads to the fact that learning classifiers are trained closer to zero emprirical losses.
% Some recent works consider the over-parameterized setting where all the individual functions $f(\cdot;i)$ are minimized to zero at the optimal solution $w_*$ \citep{pmlr-v80-ma18a,pmlr-v108-meng20a}.
% As an alternative, \cite{pmlr-v130-loizou21a} and \cite{vaswani2021adaptive} introduce more relaxed assumptions that only requires the small distance between individual functions.
    Some recent works prove the global convergence of Gradient Descent when the number of neurons are extensively large, e.g. \citep{ZouG19nips} requires $\Omega(n^8)$ neurons for every hidden layer, and \citep{nguyen2020global} improves this number to $\Omega(n^3)$. If the initial point satisfies some special conditions, then they can show a better dependence of  $\Omega(n)$. 
In \citep{pmlr-v97-allen-zhu19a}, {the authors
initialize the weights using a random Gaussian distribution where the variance depends on the dimension of the problem. In non-convex setting, they prove the convergence of SGD using the assumption that the dimension depends inversely on the tolerance $\epsilon$.}
We will discuss how these over-paramaterized settings might be a necessary condition to develop our theory. 

Other standard assumptions for machine learning include the bounded gradient assumption \citep{Nemirovski2009, pegasos, svrg_nonconvex-reddi16, smg_tran21b}. 
It is also common to assume all the iterations of an algorithm stay in a bounded domain \citep{2011duchi11a,2018Levy,2019gurbuzbalaban,reddi2019,vaswani2021adaptive}.
Since we are analyzing a new {\it composite formulation}, it is understandable that our assumptions may also not be standard. However, we believe that there is a strong connection between our assumptions and the traditional setting of machine learning. We will discuss this point more clearly in Section \ref{sec_new_framework}.

\section{Background}\label{sec_background}

%\subsection{Deep Learning Representation}\label{sec_representation}

In this section, we discuss our formulation and notations in detail. Although this paper focuses on deep neural networks, our framework and theoretical analysis are general and applicable for other learning architectures. 

\textbf{Deep Learning Representation.} 
Let $\{( {x}^{(i)},y^{(i)})\}_{i=1}^n$ be a training data set where $ {x}^{(i)} \in \mathbb{R}^m$ is a training input and $y^{(i)} \in \mathbb{R}^c$ is a training output. 
We consider a fully-connected neural network with $L$ layers, where the $l$-th layer, $l \in \{ 0,1,\dots,L\}$, has $n_l$ neurons. 
We represent layer $0$-th and $L$-th layer  as input and output layers, respectively, that is, $n_0 = d$ and $n_L = c$. For $l\in \{1,\ldots, L\}$, let $W^{(l)} \in \mathbb{R}^{n_{l-1} \times n_{l}}$ and $b^{(l)} \in \mathbb{R}^{n_l}$, where $\{ (W^{(l)}, b^{(l)})_{l=1}^L \}$ represent the parameters of the neural network. A classifier $h(w ; i)$ is formulated as
\begin{align*}
    & h(w ; i) & = W^{(L)\top} \sigma_{L-1} ( W^{(L-1)\top} \sigma_{L-2} ( \dots \sigma_1 (W^{(1)\top}  {x}^{(i)} + b^{(1)} ) \dots ) + b^{(L-1)}  )  + b^{(L)}, %\tagthis \label{classifier}
\end{align*}
where $w = \textbf{\text{vec}}(\{ W^{(1)} , b^{(1)} , \dots, W^{(L)} , b^{(L)} \}) \in \mathbb{R}^d$ is the vectorized weight and $\{\sigma_l\}_{l=1}^{L-1}$ are some activation functions. The most common choices for machine learning are ReLU, sigmoid, hyperbolic tangent and softplus. 
%\textbf{Notations}. 
For $j\in [c]$,
%\{1,\ldots, c\}$,  
$h_j(\cdot;i): \R^d \to \R$ denotes the component function of the output $h(\cdot;i)$, for each data $i \in [n]$ respectively. %Let $k(\cdot;i): \R^d \to \R^c$ be the approximation of $h(\cdot;i)$, $i = 1,\dots,n$. Similarly, for each approximation $k(\cdot;i)$, we denote by $k_j(\cdot;i): \R^d \to \R, j = 1, \dots, c$ the component function of the approximation $k(\cdot;i)$, for each data $i = 1, \dots, n$ respectively.
Moreover, we define 
$h^*_i = \arg \min_{z \in \mathbb{R}^c} \phi_i(z), i \in [n]$.

\textbf{Loss Functions. }
The well-known loss functions in neural networks for solving classification and regression problems are \textit{softmax cross-entropy loss} and \textit{square loss}, respectively:
%The \textit{softmax} function is defined as $\sigma: \mathbb{R}^d \to \mathbb{R}^d$, for $z = (z_1,\dots,z_d)^\top \in \mathbb{R}^d$, 
%\begin{align*}
%\sigma(z) &= \text{softmax}(z) = \left(\tfrac{e^{z_1}}{e^{z_1} + \dots + e^{z_d}} , \dots,\tfrac{e^{z_d}}{e^{z_1} + \dots + e^{z_d}} \right)^\top. 
%\end{align*}

%We present these loss functions %%formulation~\eqref{eq:finite_sum_main}
%as follows: 
%\begin{itemize}
%   \item 
%\textbf{ 
{\em (Softmax) Cross-Entropy Loss:} $F(w)=\frac{1}{n}\sum_{i=1}^n f(w;i)$ with
% \begin{align*}
% F(w) &= -\frac{1}{n} \sum_{i=1}^n y^{(i)\top} \log( \text{softmax} ( h(w; i) ) ) 
%         = \frac{1}{n} \sum_{i=1}^n \left[  -\log \left( \tfrac{\exp( h_{I(y^{(i)})}(w;i) )}{\sum_{k=1}^c \exp( h_k(w;i) )} \right)  \right] \\
%     & = \frac{1}{n} \sum_{i=1}^n \log \left[ \sum_{k=1}^c \exp \left( h_k(w;i) - h_{I(y^{(i)})}(w;i)   \right)  \right], %\tagthis \label{entropy_loss_F}
% \end{align*}
% where $I(y^{(i)})$ is class of $y^{(i)}$ (output $y^{(i)} \in \mathbb{R}^c$ contains 1 at the $I(y^{(i)})$-th element and 0 otherwise). 
% and $e_k$ contains 1 at the $k$-th element and 0 otherwise. 
%Hence, $F(w)=\frac{1}{n}\sum_{i=1}^n f(w;i)$ with
\begin{align*}
    f (w; i) &= - y^{(i)\top} \log( \text{softmax} ( h(w ; i) ) ) .%=  \log \left[ \sum_{k=1}^c \exp \left( h_k(w;i) - h_{I(y^{(i)})}(w;i)   \right)\right].
    \tagthis \label{entropy_loss_f}
\end{align*}
%where $I(y^{(i)})$ is class of $y^{(i)}$ (output $y^{(i)} \in \mathbb{R}^c$ contains 1 at the $I(y^{(i)})$-th element and 0 otherwise). 
%    \item 
    
  % \textbf{
    {\em Squared Loss}: $F(w)=\frac{1}{n}\sum_{i=1}^n f(w;i)$ with
    \begin{align*}
      %  F(w) & = \frac{1}{n} \sum_{i=1}^n \frac{1}{2} \| h (w; i) - y^{(i)} \|^2, \text{ and }
        f(w ; i) = \frac{1}{2} \| h (w ; i) - y^{(i)} \|^2. \tagthis \label{squared_loss_F}
    \end{align*}
%\end{itemize}

% \subsection{Network Architecture}\label{sec_network}

% \begin{ass}\label{ass_activation}
% Activation function $\sigma: \mathbb{R}^n \to \mathbb{R}^n$ satisfies
% \begin{align*}
%     \| \sigma ( x ) - \sigma ( y ) \| \leq \| x - y \|, \ \forall x, y \in \mathbb{R}^n. \tagthis \label{eq_activation_01}
% \end{align*}
% \end{ass}

% \begin{lem}\label{lem_activation}
% Assumption~\ref{ass_activation} holds for \textit{ReLU} in \eqref{act_ReLU}, \textit{sigmoid} in \eqref{act_sigmoid}, and \textit{hyperbolic tangent} in \eqref{act_tanh} activation functions. 
% \end{lem}

%\subsection{Some Useful Lemmas}\label{sec_optimization}

We provide some basic definitions in optimization theory to support our theory. 
\begin{definition}[$L$-smooth]
\label{defn_smooth}
Function $\phi: \mathbb{R}^c \to \mathbb{R}$ is $L_{\phi}$-smooth if there exists a constant $L_{\phi} > 0$ such that, $\forall x_1,x_2 \in \mathbb{R}^c$, 
\begin{align*}
\| \nabla \phi(x_1) - \nabla \phi(x_2) \| \leq L_{\phi} \| x_1 - x_2 \|. \tagthis\label{eq:Lsmooth_basic}
\end{align*} 
\end{definition}

% As shown in \cite{nesterov2004}, the $L_{\phi}$-smoothness of $\phi$ could imply for any $x_1, x_2 \in \mathbb{R}^c$, 
% \begin{equation}%\label{eq:Lsmooth}
% \phi(x_1) \leq \phi(x_2) + \iprods{\nabla \phi(x_2), x_1 - x_2} + \frac{L_{\phi}}{2}\norms{x_1 - x_2}^2.
% \end{equation}

\begin{definition}[Convex] \label{defn_convex}
Function $\phi: \mathbb{R}^c \to \mathbb{R}$ is convex if    
 $\forall x_1, x_2 \in \mathbb{R}^c$,
\begin{gather*}
\phi(x_1)  - \phi(x_2) \geq \langle \nabla \phi(x_2), x_1 - x_2 \rangle. \tagthis\label{eq:convex_02}
\end{gather*}
\end{definition}

The following corollary shows the properties of softmax cross-entropy loss \eqref{entropy_loss_f} and squared loss  \eqref{squared_loss_F}.

\begin{corollary}\label{cor_slce_phi_convex}
For softmax cross-entropy loss  \eqref{entropy_loss_f} and squared loss  \eqref{squared_loss_F},
there exist functions $h(\cdot ; i): \mathbb{R}^d \to \mathbb{R}^c$ and $\phi_i: \mathbb{R}^c \to \mathbb{R}$ such that, for  $i\in [n]$, $\phi_i(z)$ is convex and $L_{\phi}$-smooth with $L_{\phi} = 1$, and
\begin{align*}
    f ( w ; i ) = \phi_i ( h ( w  ; i) ) = \phi_i ( z ) \big |_{z = h ( w ; i )}.
    %\ , \ i = 1,\dots,n. 
    \tagthis \label{eq_component_loss_01}
\end{align*}
\end{corollary}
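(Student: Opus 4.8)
The plan is to prove the statement constructively: in each of the two cases I would exhibit an explicit pair $(h(\cdot;i),\phi_i)$ realizing the composition $f(w;i)=\phi_i(h(w;i))$, and then verify convexity and $1$-smoothness of $\phi_i$ by computing its gradient and Hessian directly.

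For the squared loss \eqref{squared_loss_F}, I would take $h(\cdot;i)$ to be the network map defined in Section~\ref{sec_background} and $\phi_i(z)=\frac{1}{2}\norm{z-y^{(i)}}^2$. Then \eqref{eq_component_loss_01} holds by definition, $\nabla\phi_i(z)=z-y^{(i)}$ and $\nabla^2\phi_i(z)=I_c$, so $\phi_i$ is convex and \eqref{eq:Lsmooth_basic} holds (with equality) for $L_\phi=1$.

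For the softmax cross-entropy loss \eqref{entropy_loss_f}, I would again let $h(\cdot;i)$ be the network map and set $\phi_i(z)=-y^{(i)\top}\log(\text{softmax}(z))$. Using $\log(\text{softmax}(z))_j=z_j-\log\sum_{k=1}^c e^{z_k}$ together with the fact that a classification target $y^{(i)}$ is a probability vector (nonnegative entries summing to one), this rewrites as $\phi_i(z)=-\langle y^{(i)},z\rangle+\log\sum_{k=1}^c e^{z_k}$, i.e. an affine term plus the log-sum-exp function; convexity is then immediate. A short computation gives $\nabla\phi_i(z)=\text{softmax}(z)-y^{(i)}$ and $\nabla^2\phi_i(z)=\mathrm{diag}(p)-pp^\top$ with $p=\text{softmax}(z)$.

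The only nontrivial step — and the place I expect the actual work to be — is bounding $\norm{\nabla^2\phi_i(z)}\le 1$ uniformly in $z$. I would argue that for any unit vector $v\in\R^c$, $v^\top(\mathrm{diag}(p)-pp^\top)v=\sum_{j}p_j v_j^2-\big(\sum_j p_j v_j\big)^2$ is the variance of the coordinate map $j\mapsto v_j$ under the categorical distribution $p$; hence it is nonnegative (which re-confirms convexity) and bounded above by $\sum_j p_j v_j^2\le\max_j v_j^2\le\norm{v}^2=1$. This yields \eqref{eq:Lsmooth_basic} with $L_\phi=1$ after integrating the Hessian along the segment $[x_1,x_2]$ (in fact one could obtain $L_\phi=1/2$, but $1$ suffices). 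Combining the two cases establishes \eqref{eq_component_loss_01} together with the claimed convexity and $L_\phi=1$ smoothness of $\phi_i$.
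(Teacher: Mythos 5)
Your proof is correct and, for the squared-loss case, essentially identical to the paper's (the paper invokes Lemma~\ref{lem_squared_convex_smooth}, which argues convexity from the definition and smoothness from $\nabla\phi(z)=z-b$; you do the same). For the softmax cross-entropy case, however, you take a genuinely different and arguably cleaner route than the paper's Lemma~\ref{lem_softmax_cross_entropy_convex_smooth}. The paper fixes a one-hot target $y^{(i)}=e_a$, writes $\phi(z)=\log\sum_k\exp(z_k-z_a)$, proves convexity via H\"older's inequality applied to $\sum_k u_{k,1}^{\alpha}u_{k,2}^{1-\alpha}$, and proves $1$-smoothness by explicitly computing every entry of $\nabla^2\phi$, bounding $\|\nabla^2\phi\|_F^2$ by an algebraic manipulation, and closing with another H\"older step to get $\|\nabla^2\phi\|^2\le 1$. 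You instead write $\phi_i(z)=-\langle y^{(i)},z\rangle+\log\sum_k e^{z_k}$ (valid for any probability-vector target, so slightly more general than the paper's one-hot restriction), identify $\nabla^2\phi_i(z)=\mathrm{diag}(p)-pp^\top$ with $p=\mathrm{softmax}(z)$, and bound the quadratic form $v^\top(\mathrm{diag}(p)-pp^\top)v$ as the variance of $j\mapsto v_j$ under $p$, giving nonnegativity (convexity) and the upper bound $\sum_j p_j v_j^2\le\|v\|^2$ in a single probabilistic observation. The two $\phi$'s agree (the affine shift $-z_a$ versus $-\langle e_a,z\rangle$ produces the same function and the same Hessian), so the conclusions match, but your spectral-norm-via-variance argument avoids the paper's entrywise Frobenius computation entirely and makes the non-negativity and boundedness of the Hessian fall out of one line. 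Your integration-of-the-Hessian step to pass from $\|\nabla^2\phi_i\|\le 1$ to Lipschitz-continuity of the gradient is the same standard equivalence the paper invokes implicitly from \citep{nesterov2004}.
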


The following lemma is a standard result in \citep{nesterov2004}. 
\begin{lemma}[\citep{nesterov2004}]\label{lem_smooth_convex}
If $\phi$ is $L_{\phi}$-smooth and convex, then for $\forall z \in \mathbb{R}^c$,
\begin{align*}
    \| \nabla \phi (z ) \|^2 & \leq 2 L_{\phi} ( \phi ( z ) - \phi (z_*) ), \tagthis \label{eq_smooth_convex}
\end{align*}
where $z_* = \arg\min_{z} \phi(z)$. 
\end{lemma}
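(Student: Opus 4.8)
The plan is to prove the inequality $\|\nabla\phi(z)\|^2 \le 2L_\phi(\phi(z)-\phi(z_*))$ by a direct application of $L_\phi$-smoothness along a well-chosen line segment, combined with the optimality of $z_*$. First I would fix an arbitrary $z\in\R^c$ and invoke the standard descent-type consequence of Definition~\ref{defn_smooth}: for an $L_\phi$-smooth function one has, for all $u,v\in\R^c$,
\begin{align*}
\phi(v) \le \phi(u) + \langle \nabla\phi(u), v-u\rangle + \frac{L_\phi}{2}\|v-u\|^2.
\end{align*}
This quadratic upper bound follows from \eqref{eq:Lsmooth_basic} by writing $\phi(v)-\phi(u)=\int_0^1\langle\nabla\phi(u+t(v-u)),v-u\rangle\,dt$ and bounding the integrand using the Lipschitz gradient; since the paper cites \citep{nesterov2004}, I would be comfortable quoting it.

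Next I would apply this upper bound with $u=z$ and with the specific choice $v = z - \frac{1}{L_\phi}\nabla\phi(z)$, which is the point obtained by one gradient step minimizing the right-hand side. Substituting gives
\begin{align*}
\phi\!\left(z - \tfrac{1}{L_\phi}\nabla\phi(z)\right) \le \phi(z) - \frac{1}{L_\phi}\|\nabla\phi(z)\|^2 + \frac{L_\phi}{2}\cdot\frac{1}{L_\phi^2}\|\nabla\phi(z)\|^2 = \phi(z) - \frac{1}{2L_\phi}\|\nabla\phi(z)\|^2.
\end{align*}
Then I would use that $z_*$ is the global minimizer of $\phi$, so $\phi(z_*) \le \phi(z - \frac{1}{L_\phi}\nabla\phi(z))$, which yields $\phi(z_*) \le \phi(z) - \frac{1}{2L_\phi}\|\nabla\phi(z)\|^2$. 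Rearranging gives exactly \eqref{eq_smooth_convex}.

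There is no real obstacle here — the result is classical. The only point that needs a word of care is where convexity enters: strictly speaking the argument above only used smoothness and the existence of a global minimizer $z_*$, so convexity is not essential for this particular inequality, though it is what guarantees that the first-order stationarity and global minimality of $z_*$ coincide and that $z_* = \arg\min_z\phi(z)$ is meaningful as stated. If one wanted to lean on convexity explicitly, an alternative route is to apply \eqref{eq:convex_02} together with the smoothness bound to derive the co-coercivity inequality $\langle\nabla\phi(x_1)-\nabla\phi(x_2),x_1-x_2\rangle \ge \frac{1}{L_\phi}\|\nabla\phi(x_1)-\nabla\phi(x_2)\|^2$ and then set $x_1=z$, $x_2=z_*$, using $\nabla\phi(z_*)=0$ and $\phi(z)-\phi(z_*)\ge 0$; but the one-line gradient-step argument is cleaner, so that is the one I would write up.
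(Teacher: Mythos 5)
Your proof is correct and is essentially the standard argument from the cited source: the paper gives no proof of Lemma~\ref{lem_smooth_convex} (it is simply quoted from \citep{nesterov2004}), and the one-step gradient argument you give — apply the quadratic upper bound at $v = z - \tfrac{1}{L_\phi}\nabla\phi(z)$, then use $\phi(z_*)\le\phi(v)$ — is exactly how this inequality is derived there. Your side remark that convexity is not actually used in this derivation, only $L_\phi$-smoothness and the existence of a global minimizer, is also correct and worth keeping as a clarifying comment.
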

The following useful derivations can be used later in our theoretical analysis. 
% Since $\phi_i$ is $L_{\phi}$-smooth, by Definition~\ref{defn_smooth} we have
% \begin{align*}
%     \left\| \nabla_z \phi_{i} (x) \Big |_{x = h(w ; i)} - \nabla_z \phi_{i} (x) \Big |_{x = h(w' ; i)}  \right \| \leq L_{\phi} \left \| h(w ; i) - h(w' ; i)  \right \|, %\tagthis \label{eq_phi_Lsmooth_01}
% \end{align*}
% which implies
% \begin{align*}
%     \phi_{i} (h(w ; i)) & \leq \phi_{i} (h(w' ; i)) + \left \langle \nabla_z \phi_{i} (x) \Big |_{x = h(w' ; i)} ,  h(w ; i) - h(w' ; i)  \right \rangle \\ & \qquad + \frac{L_{\phi}}{2} \left \| h(w ; i) - h(w' ; i)  \right \|^2. \tagthis %\label{eq_phi_Lsmooth_02}
% \end{align*}
Since $\phi_i$ is convex, by Definition~\ref{defn_convex} we have
\begin{align*}
    \phi_{i} (h(w ; i)) & \geq \phi_{i} (h(w' ; i)) + \left \langle \nabla_z \phi_{i} (z) \Big |_{z = h(w' ; i)} ,  h(w ; i) - h(w' ; i)  \right \rangle. \tagthis \label{eq_phi_convex}
\end{align*}

If $\phi_i$ is convex and $L_{\phi}$-smooth, then by Lemma~\ref{lem_smooth_convex}
\begin{align*}
    \left\| \nabla_z \phi_{i} (z) \Big |_{z = h(w ; i)}  \right \|^2 &\leq 2 L_{\phi} \left [  \phi_{i} (h(w ; i)) - \phi_{i} ( h^*_i )  \right ],  \tagthis \label{eq_phi_convex_smooth}
\end{align*}
where $h^*_i = \arg \min_{z \in \mathbb{R}^c} \phi_i(z)$. 

We compute gradients of $f(w;i)$ in terms of $\phi_i ( h ( w  ; i) )$. 
\begin{itemize}
    \item \textbf{Gradient of softmax cross-entropy loss:}
    \begin{align*}
        & \nabla \phi_i ( z ) \big |_{z = h ( w ; i )} = \left( \frac{\partial \phi_i ( z )}{\partial z_1} \Big |_{z = h ( w ; i )} , \dots , \frac{\partial \phi_i ( z )}{\partial z_c} \Big |_{z = h ( w ; i )} \right)^\top,
    \end{align*}
    where for $j \in [c]$, $\frac{\partial \phi_i ( z )}{\partial z_j} \Big |_{z = h ( w ; i )}$ is
    \begin{align*}
        \begin{cases}
        \frac{\exp \left( [ h(w ; i) ]_j - [ h(w; i) ]_{I(y^{(i)})}   \right)}{\sum_{k=1}^c \exp \left( [ h(w ; i) ]_k - [ h(w; i) ]_{I(y^{(i)})}   \right)} \ , \ j \neq I(y^{(i)}) \\
        - \frac{\sum_{k \neq I(y^{(i)})} \exp \left( [ h(w ; i) ]_k - [ h(w; i) ]_{I(y^{(i)})}   \right)}{\sum_{k=1}^c \exp \left( [ h(w ; i) ]_k - [ h(w; i) ]_{I(y^{(i)})}   \right)} \ , \ j = I(y^{(i)})
        \end{cases}. \tagthis \label{entropy_gradient_phi}
    \end{align*}
    \item \textbf{Gradient of squared loss:}
    \begin{align*}
        \nabla \phi_i ( z ) \big |_{z = h ( w ; i )} &= h (w ; i) - y^{(i)}. \tagthis \label{squared_gradient_phi}
    \end{align*}
\end{itemize}

We introduce the notations that we use throughout the paper in Table~\ref{tab:notations}. 

\begin{table*}[]
\caption{Table of notations}
\label{tab:notations}
\renewcommand{\arraystretch}{1.25}
\begin{center}
    \begin{tabular}{|c l|} 
    \hline
    \textbf{Notation} & \textbf{Meaning}  \\ %[0.5ex] 
    \hline
    $F_*$ & Global minimization function of $F$ in \eqref{new_loss_01} \\ 
    & $F_* = \min_{w \in \mathbb{R}^d} F(w)$ \\ [1ex] 
    % \hline
    % $\phi_i (h_i^* ) $ & Global minimization function of each component function $f_i$, $i \in [n]$, in \eqref{new_loss_01} \\ 
    % & $\phi_i (h_i^* )  = \min_{w \in \mathbb{R}^d} f (w; i)$ \\ [1ex] 
    \hline
    $h^*_i$ & $h^*_i = \arg \min_{z \in \mathbb{R}^c} \phi_i(z)$, $i \in [n]$ \\ [1ex] 
    \hline
    $v_{*}^{(t)}$ & Solution of the convex problem in \eqref{opt_prob_01}\\
    & $\min_{v \in \mathbb{R}^d} \frac{1}{2} \frac{1}{n} \sum_{i=1}^{n} \| \eta^{(t)} H_i^{(t)} v - \alpha_i^{(t)} \nabla_z \phi_i (h(w^{(t)};i))  \|^2$ \\ [1ex] 
    \hline
    $v^{(t)}$ & An approximation of $v_{*}^{(t)}$ which is used as the search direction in Framework~\ref{new_alg_framework}~~~~~~~~~ \\
    %& $\min_{v \in \mathbb{R}^d} \frac{1}{2} \frac{1}{n} \sum_{i=1}^{n} \| \eta^{(t)} H_i^{(t)} v - \alpha_i^{(t)} \nabla_z \phi_i (h(w^{(t)};i))  \|^2$ \\ [1ex] 
    \hline
    $\hat{v}_{*\varepsilon }^{(t)}$ & A vector that satisfies \\
    & $ \frac{1}{2} \frac{1}{n} \sum_{i=1}^{n} \| \eta^{(t)} H_i^{(t)} v - \alpha_i^{(t)} \nabla_z \phi_i (h(w^{(t)};i))  \|^2\leq \varepsilon^2$ 
    \\ [1ex] 
    &for some $\varepsilon > 0$ and $\| \hat{v}_{*\varepsilon}^{(t)} \|^2 \leq V$, for some $V > 0$. %(defined in Assumption \ref{ass_solve_v}).
    \\
    \hline
     $v_{*\ \text{reg}}^{(t)}$ & Solution of the strongly convex problem in \eqref{opt_prob_L2}\\
    & $\min_{v \in \mathbb{R}^d} \left\{\frac{1}{2} \frac{1}{n} \sum_{i=1}^{n} \| \eta^{(t)} H_i^{(t)} v - \alpha_i^{(t)} \nabla_z \phi_i (h(w^{(t)};i))  \|^2 + \frac{\varepsilon^2}{2} \|v\|^2 \right\}$ \\ [1ex] 
    \hline
    \end{tabular}
\end{center}
\end{table*}

\section{New Algorithm Framework}\label{sec_new_framework}

\subsection{Key Insight}\label{subsec_motivation}

% In this section, we will use over-parameterized setting. 
% \begin{ass}\label{ass_overparameter}
%     The size of parameters is larger than the number of samples multiply by number of output features (or classes), that is, $d > n \cdot c$.  
% \end{ass}

% Although this paper focus on deep neural networks, our framework and theoretical analysis are general and applicable for other learning architectures. 
%%%%%%%%%%%%%%
%Motivated by the representations of softmax cross-entropy loss in \eqref{entropy_loss_F} and squared loss in \eqref{squared_loss_F}, we consider the following novel formulation:
%\begin{align*}
%    \min_{w \in \mathbb{R}^d} \left\{ F(w) = \frac{1}{n} \sum_{i=1}^n f ( w ; i ) = \frac{1}{n} \sum_{i=1}^n \phi_i ( h ( w ; i) ) \right \}, %\tagthis \label{new_loss_01}
%\end{align*}
%%%%%%%%%%%%%%%%%%%
% \lam{Do we need to assume that there exists $w$ such that $f(w;i)$ and $\phi_i ( h ( w ; i) )$ could attain the global minimization values?}
%%%%%%%%%%%%%%%
%where $h(\cdot ; i): \mathbb{R}^d \to \mathbb{R}^c$ and $\phi_i: \mathbb{R}^c \to \mathbb{R}$, $i = 1,\dots,n$.
%We denote
%\begin{align*}
%    f ( w ; i ) = \phi_i ( h ( w  ; i) ) = \phi_i ( x ) \big |_{x = h ( w ; i )} \ , \ i = 1,\dots,n, \tagthis \label{eq_component_loss_01}
%\end{align*}
%where $\phi_i: \mathbb{R}^c \to \mathbb{R}$. 
%%%%%%%%%%%%%%%%%%
%We are particularly interested in the case when $h(\cdot; i)$ is a neural network and $\phi_i$ is convex and $L_{\phi}$-smooth with respect to $x$. 

%In this paper, 
We assume $f ( w ; i ) = \phi_i ( h ( w  ; i) )$ with $\phi_i$  convex and $L_{\phi}$-smooth. 
%, $1\leq i\leq n$,
%The following corollary is based on the previous Lemmas~\ref{lem_squared_convex_smooth} and \ref{lem_softmax_cross_entropy_convex_smooth}. 
%\begin{cor}\label{cor_slce_phi_convex}
%For softmax cross-entropy loss in \eqref{entropy_loss_F} and squared loss in \eqref{squared_loss_F}, $\phi_i: \mathbb{R}^c \to \mathbb{R}$, $i = 1,\dots, n$, is convex and $L_{\phi}$-smooth with $L_{\phi} = 1$. 
%\end{cor}
%
Our goal is to utilize the convexity of the outer function $\phi_i$. 
In order to simplify notation, we write $\nabla_z \phi_i (h(w^{(t)};i))$ instead of $\nabla_z \phi_i (z) \big |_{z = h(w^{(t)};i)}$ and denote $z_i^{(t)} = h(w^{(t)};i)$.
%for all $i \in [n]$. 
Starting from the current weight $w^{(t)}$, we would like to find the next point $w^{(t+1)}$ that satisfies the following approximation {\em for all $i \in [n]$}: 
% \begin{align*}
%     h(w^{(t+1)};i) = z_i^{(t+1)} &= z_i^{(t)} - \alpha_i^{(t)} \nabla_z \phi_i (z_i^{(t)}) + \epsilon_i^{(t)} = h(w^{(t)};i) - \alpha_i^{(t)} \nabla_z \phi_i (h(w^{(t)};i)) + \epsilon_i^{(t)}, \tagthis \label{update_x}
% \end{align*}
\begin{align*}
    h(w^{(t+1)};i) & = z_i^{(t+1)} \approx z_i^{(t)} - \alpha_i^{(t)} \nabla_z \phi_i (z_i^{(t)}) = h(w^{(t)};i) - \alpha_i^{(t)} \nabla_z \phi_i (h(w^{(t)};i)). \  \tagthis \label{update_x}
\end{align*}
%\trang{I commented the error term here and use approximation, since we don't know the error yet}
% \quoc{This derivation is not clear to me. It looks like you want to solve the problem $\min_{h}\phi_i(h)$ by gradient descent, where $h$ is a functional.}

% \trang{This is just an approximation, yes, we want to find $h$ that gradually minimize $\phi_i$}

{We can see that} this approximation is a ``noisy" version of a gradient descent update for every function $\phi_i$, simultaneously for all $i \in [n]$. 
In order to do this, we use the following update
\begin{align*}
    w^{(t+1)} = w^{(t)} - \eta^{(t)} v^{(t)}, \tagthis \label{update_w}
\end{align*}
where $\eta^{(t)} > 0$ is a learning rate and $v^{(t)}$ is a search direction that helps us approximate equation \eqref{update_x}.
If the update term $\eta^{(t)} v^{(t)}$ is small enough, and if $h(\cdot; i)$ has some nice smooth properties, then from basic calculus we have the following approximation: 
% \begin{align*}
%     h(w^{(t+1)};i) = h(w^{(t)} - \eta^{(t)} v^{(t)} ;i) = h(w^{(t)};i) - H_i^{(t)} \eta^{(t)} v^{(t)} + \epsilon_i^{(t)}, \tagthis \label{eq_002.1}
% \end{align*}
% where $H_i^{(t)}$ is a first-order matrix in $\R^{c \times d}$ and the error term $\epsilon_i^{(t)}$ is sufficiently small.
%\trang{Or write 
\begin{align*}
    h(w^{(t+1)};i) &= h(w^{(t)} - \eta^{(t)} v^{(t)} ;i) \approx h(w^{(t)};i) - H_i^{(t)} \big(\eta^{(t)} v^{(t)}\big),\tagthis \label{eq_002.1}
\end{align*}
where $H_i^{(t)}$ is a  matrix in $\R^{c \times d}$ with first-order derivatives.
%}
% \begin{align*}
%     H_i^{(t)} = 
%     \begin{bmatrix}
%     \frac{\partial h(w^{(t)} ; i)_1}{\partial w_1} & \dots & \frac{\partial h(w^{(t)} ; i)_1}{\partial w_d} \\ 
%     \dots & \dots & \dots \\ 
%     \frac{\partial h(w^{(t)} ; i)_c}{\partial w_1} & \dots & \frac{\partial h(w^{(t)} ; i)_c}{\partial w_d}
%     \end{bmatrix} \in \mathbb{R}^{c \times d}, \tagthis \label{eq_H}
% \end{align*}
% and
% \begin{align*}
%     \epsilon_{i,j}^{(t)} = (\eta^{(t)} v^{(t)})^\top M_{i,j}^{(t)}(\tilde{w}^{(t)}) (\eta^{(t)} v^{(t)}), \ j = 1,\dots,c, \tagthis \label{eq_epsilon}
% \end{align*}
% where $M_{i,j}^{(t)}(\tilde{w}^{(t)})$ is the second-order matrix and $\tilde{w}^{(t)} = w^{(t)} + \alpha (w^{(t+1)} - w^{(t)})$ for some $\alpha = \alpha_t \in [0,1]$.
%$\nabla_z \phi_i (h(w^{(t)};i))$ can be computed in \eqref{entropy_gradient_phi} (softmax cross entropy loss) and \eqref{squared_gradient_phi} (squared loss). 
%Motivated by the previous
Motivated by approximations \eqref{update_x} and \eqref{eq_002.1}, we consider the following optimization problem: 
\begin{align*}
    v_*^{(t)} &= \arg \min_{v \in \mathbb{R}^d} \left\{ \frac{1}{2} \frac{1}{n} \sum_{i=1}^{n} \| H_i^{(t)} \big( \eta^{(t)} v\big) - \alpha_i^{(t)} \nabla_z \phi_i (h(w^{(t)};i))  \|^2 \right\} \tagthis \label{opt_prob_01}. 
\end{align*}

Hence, by solving for the solution $v_*^{(t)}$ of problem \eqref{opt_prob_01} we are able to find a search direction for the key approximation \eqref{update_x}. This yields our new algorithmic Framework 
\ref{new_alg_framework}, see below. %next page.

%is described as follows. [Give reference to alg]

% \begin{framework}[hpt!]
%   \caption{New Algorithm Framework}
%   \label{new_alg_framework}
% \begin{algorithmic}
%   \STATE {\bfseries Initialization:} Choose an initial point $w^{(0)}\in\R^d$;
%   \FOR{$t=0,1,\cdots,T-1 $}
%   \STATE Solve for an approximation $v^{(t)}$ of the solution $v_*^{(t)}$ of the problem in \eqref{opt_prob_01}
%   $$v_*^{(t)} = \arg \min_{v \in \mathbb{R}^d} \frac{1}{2} \frac{1}{n} \sum_{i=1}^{n} \| \eta^{(t)} H_i^{(t)} v - \alpha_i^{(t)} \nabla_z \phi_i (h(w^{(t)};i))  \|^2$$
%   \STATE Update $w^{(t+1)} = w^{(t)} - \eta^{(t)} v^{(t)}$
%   \ENDFOR
%   %\STATE\textbf{Output:} 
% %Choose $\hat{w}_T \in \{ w^{(0)},\cdots,w^{(T-1)}\}$ uniformly at random.
% \end{algorithmic}
% \end{framework} 

\begin{framework}[hpt!]
   \caption{New Algorithm Framework}
   \label{new_alg_framework}
\begin{algorithmic}
   \STATE {\bfseries Initialization:} Choose an initial point $w^{(0)}\in\R^d$;
   \FOR{$t=0,1,\cdots,T-1 $}
   \STATE Solve for an approximation $v^{(t)}$ of the solution $v_*^{(t)}$ of the problem in \eqref{opt_prob_01}
   \begin{align*}
    v_*^{(t)} &= \arg \min_{v \in \mathbb{R}^d} \left\{ \frac{1}{2} \frac{1}{n} \sum_{i=1}^{n} \| \eta^{(t)} H_i^{(t)} v - \alpha_i^{(t)} \nabla_z \phi_i (h(w^{(t)};i))  \|^2 \right\}.       
   \end{align*}
%   $$v_*^{(t)} = \arg \min_{v \in \mathbb{R}^d} \frac{1}{2} \frac{1}{n} \sum_{i=1}^{n} \| \eta^{(t)} H_i^{(t)} v - \alpha_i^{(t)} \nabla_z \phi_i (h(w^{(t)};i))  \|^2$$
   \STATE Update $w^{(t+1)} = w^{(t)} - \eta^{(t)} v^{(t)}$
   \ENDFOR
   %\STATE\textbf{Output:} 
%Choose $\hat{w}_T \in \{ w^{(0)},\cdots,w^{(T-1)}\}$ uniformly at random.
\end{algorithmic}
\end{framework}

\subsection{Technical Assumptions}\label{subsec_assumptions}

\begin{assumption}\label{ass_phi}
The loss function $\phi_i$ is convex and $L_{\phi}$-smooth for $i \in [n]$. Moreover, %let $h^*_i = \arg \min_{z \in \mathbb{R}^c} \phi_i(z), i \in [n]$, 
we assume that it is lower bounded, i.e. $\inf_{z \in \R^c}\phi_i (z) > -\infty$ for $i \in [n]$.
\end{assumption}
We have shown the convexity and smoothness of squared loss and softmax cross-entropy loss in Section \ref{sec_background}.
The bounded property of $\phi_i$ is required in any algorithm for the well-definedness of \eqref{new_loss_01}.
%===========================================================================================
% 1ST ASSUMPTION
Now, in order to use the Taylor series approximation, %each scalar output from $h(\cdot; i)$ need to be twice continuously differentiable. %(i.e. all second-order partial derivatives are continuous). 
%Therefore 
we need the following assumption on the neural network architecture $h$:

\begin{assumption}\label{ass_approx_h}
We assume that $h(\cdot;i)$ is twice continuously differentiable for all $i \in [n]$ (i.e. the second-order partial derivatives of all scalars $h_j(\cdot;i)$ are continuous for all $j\in[c]$ and $i \in [n]$), and that their Hessian matrices are bounded, that is, there exists a $G > 0$ such that for all $w \in \R^d$, $i \in [n]$  and  $j \in [c]$,
\begin{align*}
    \|M_{i, j} (w) \|  = \left \|\textbf{J}_w \left(\nabla_w h_j(w;i) \right)\right\|\leq G, \ %\forall w \in \R^d,\ i \in [n] \text{ and } j = 1, \dots, c.
    \tagthis \label{bounded_hessian}
\end{align*}
where $\textbf{J}_w$ denotes the Jacobian\footnote{For a continuously differentiable function $g(w): \R^d \to \R^c$ we define the Jacobian
 $\textbf{J}_w(g(w))$ as the matrix $(\partial g_a(w)/\partial w_b)_{a\in [c],b\in [d]}$.}.
\end{assumption}

% \begin{remark}[Non-smooth activations]\label{rem_ass1} {revise app}
% When the activation function is smooth and all the iterations $w^{(t)}$ stay in a bounded region for every $t$ then Assumption \ref{ass_approx_h} is satisfied. 
% However this fact may not be true for some other architectures with non-smooth activations, e.g. ReLU function. 
% A common solution to this problem is choosing a smooth version of ReLU to replace the non-smooth activation \citep{nguyen2020global}. 
% An alternative approach is to approximate $h(\cdot;i)$ by some nice smooth function $k(\cdot;i)$  that satisfies Assumption \ref{ass_approx_h}. In the Appendix we prove that our analysis still holds if the approximations satisfy  $|k_{j}(w;i) - h_j (w;i) | \leq \varepsilon$ for all $w \in \R^d$, $i \in [n]$ and $j\in [c]$, where $\varepsilon$ is from our theorems.

\begin{remark}[Relation to second-order methods]\label{rem_ass1}
 {Although our analysis requires an assumption on the Hessian matrices of $h(w;i)$, our algorithms do not use any second order information or try to approximate this information. 
Our theoretical analysis focused on the approximation of the classifier and the gradient information, therefore is not related to the second order type algorithms. 
It is currently unclear how to apply second order methods into our problem, however, this is an interesting research question to expand the scope of this work.}
%Alternatively, we offer a different approach for a class of network architectures where $h(\cdot;i)$ can be closely approximated by some nice smooth function. 
%Namely, our analysis still holds in case there exists some approximations $k(w;i): \R^d \to \R^c$ for all $i \in [n]$ such that
%\begin{align*}
%    |k_{j}(w;i) - h_j (w;i) | \leq \varepsilon, \ \forall w \in \R^d,\ i \in [n] \text{ and } j = 1, \dots, c;% \tagthis \label{eq_approx_eps}
%\end{align*}
%and $k(\cdot;i)$ satisfies Assumption \ref{ass_approx_h}. See Appendix for  theoretical details.
%%for this Remark in the Appendix. 
\end{remark}

%  Hence we choose this different approach for Assumption \ref{ass_approx_h}.% that allows our desired property. 

%When the activation functions and $h(\cdot; i)$ themselves are twice continuously differentiable, we can simply choose $k(\cdot; i) = h(\cdot; i)$ for every $i \in [n]$, and the bounded Hessian matrices are sufficient conditions. 

Assumption \ref{ass_approx_h} allows us to apply a Taylor approximation of each function $h_j(\cdot; i)$ with which we  prove the following Lemma that bounds the error in equation \eqref{eq_002.1}:

\begin{lemma}\label{lem_approx_h}
Suppose that Assumption \ref{ass_approx_h} holds for the classifier $h$. Then for all $i \in [n]$ and $0 \leq t < T$, 
%we have:
\begin{align*}
    h(w^{(t+1)};i) & = h(w^{(t)} - \eta^{(t)} v^{(t)} ;i) = h(w^{(t)};i) - \eta^{(t)} H_i^{(t)} v^{(t)} + \epsilon_i^{(t)}, \tagthis \label{eq_002.3}
\end{align*}
where 
\begin{equation}
    H_i^{(t)}=\textbf{J}_{w} (h ( w ; i ))|_{w = w^{(t)}}\in \mathbb{R}^{c \times d} \label{eq_define_Hit}
\end{equation} is defined as  the Jacobian matrix of $h(w; i)$  at $w^{(t)}$ %$ \textbf{J}_{w} h ( w ; i )|_{w = w^{(t)}}  \in \mathbb{R}^{c \times d} $.
%(In the non-smooth case, it is the Jacobian of 
%the approximation $k(w; i)$  at $w^{(t)}$: 
%\begin{align}
%    H_i^{(t)} := \textbf{J}_{w} (h ( w ; i ))|_{w = w^{(t)}} = \begin{bmatrix}
%    \frac{\partial h_1(w ; i)}{\partial w_1} & \dots & \frac{\partial h_1(w ; i)}{\partial w_d} \\ 
%    \dots & \dots & \dots \\ 
%    \frac{\partial h_c(w ; i)}{\partial w_1} & \dots & \frac{\partial h_c(w ; i)}{\partial w_d}
%    \end{bmatrix} \Bigg|_{w = w^{(t)}}
%\in \mathbb{R}^{c \times d}\label{eq_define_Hit}
%\end{align}
%$\textbf{J}_{w} k ( w ; i )|_{w = w^{(t)}}  \in \mathbb{R}^{c \times d}$%, respectively)
%Additionally we have,
and  entries $\epsilon_{i,j}^{(t)}$, $j\in [c]$, of vector $\epsilon_i^{(t)}$ satisfy
\begin{align*}
    |\epsilon_{i,j}^{(t)}| \leq\frac{1}{2}(\eta^{(t)})^2 \| v^{(t)} \|^2 G.
    %, \ j = 1,\dots,c, 
    \tagthis \label{eq_epsilon}
\end{align*}
% where $M_{i,j}(\tilde{w}^{(t)})$ is the Hessian matrices of %$h_j(\cdot; i)$ (or
% $k_j(\cdot; i)$ at $\tilde{w}^{(t)}$ and $\tilde{w}^{(t)} = \alpha w^{(t)} + (1-\alpha)w^{(t+1)} $ for some $\alpha \in [0,1]$. 
\end{lemma}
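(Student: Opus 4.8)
The plan is to apply Taylor's theorem with the Lagrange (or integral) form of the remainder to each scalar component function $h_j(\cdot;i)$, expanded around the point $w^{(t)}$ in the direction $-\eta^{(t)} v^{(t)}$. Concretely, I would define the univariate function $g_{i,j}(s) = h_j(w^{(t)} - s\,\eta^{(t)} v^{(t)};i)$ for $s \in [0,1]$, which is twice continuously differentiable by Assumption~\ref{ass_approx_h}. Then $h_j(w^{(t+1)};i) = g_{i,j}(1)$ and $h_j(w^{(t)};i) = g_{i,j}(0)$, and the first-order Taylor expansion with remainder gives $g_{i,j}(1) = g_{i,j}(0) + g_{i,j}'(0) + \frac{1}{2} g_{i,j}''(\xi)$ for some $\xi \in (0,1)$.

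Next I would compute the derivatives via the chain rule: $g_{i,j}'(0) = -\eta^{(t)} \langle \nabla_w h_j(w^{(t)};i), v^{(t)}\rangle$, which is exactly the $j$-th component of $-\eta^{(t)} H_i^{(t)} v^{(t)}$ by the definition \eqref{eq_define_Hit} of $H_i^{(t)}$ as the Jacobian. The second derivative is $g_{i,j}''(\xi) = (\eta^{(t)})^2\, (v^{(t)})^\top M_{i,j}(w^{(t)} - \xi \eta^{(t)} v^{(t)})\, v^{(t)}$, where $M_{i,j}$ is the Hessian of $h_j(\cdot;i)$. Setting $\epsilon_{i,j}^{(t)} = \frac{1}{2} g_{i,j}''(\xi)$ yields the decomposition \eqref{eq_002.3}. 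Collecting the components $\epsilon_{i,j}^{(t)}$ into the vector $\epsilon_i^{(t)}$ gives the stated identity.

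For the bound \eqref{eq_epsilon}, I would estimate $|\epsilon_{i,j}^{(t)}| = \frac{1}{2}(\eta^{(t)})^2 |(v^{(t)})^\top M_{i,j}(\cdot)\, v^{(t)}| \leq \frac{1}{2}(\eta^{(t)})^2 \|M_{i,j}(\cdot)\|\, \|v^{(t)}\|^2 \leq \frac{1}{2}(\eta^{(t)})^2 \|v^{(t)}\|^2 G$, using Cauchy--Schwarz (or the operator-norm inequality) and then Assumption~\ref{ass_approx_h}'s uniform bound \eqref{bounded_hessian}, which holds at every $w \in \R^d$ and in particular at the intermediate point. This is routine once the setup is in place.

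I do not expect a serious obstacle here; the only points requiring care are bookkeeping ones: making sure the chain-rule computation of $g_{i,j}'(0)$ aligns with the matrix-vector product convention used in $H_i^{(t)} v^{(t)}$ (i.e., that $H_i^{(t)}$ has rows $\nabla_w h_j(w^{(t)};i)^\top$), and confirming that twice continuous differentiability of $h_j(\cdot;i)$ on all of $\R^d$ legitimately transfers to $g_{i,j}$ on $[0,1]$ so that the Lagrange remainder form of Taylor's theorem applies. Both are immediate. If one prefers to avoid invoking an unspecified mean-value point, the integral remainder $\epsilon_{i,j}^{(t)} = \int_0^1 (1-s)\, g_{i,j}''(s)\, ds$ works identically and the same bound follows by pulling the norm inside the integral.
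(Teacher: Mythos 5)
Your proposal is correct and follows essentially the same route as the paper: the paper likewise applies a second-order Taylor expansion of each scalar component $h_j(\cdot;i)$ around $w^{(t)}$ with Lagrange remainder evaluated at an intermediate point $\tilde{w}^{(t)}$ on the segment between $w^{(t)}$ and $w^{(t+1)}$, then bounds the quadratic form $\frac{1}{2}(\eta^{(t)}v^{(t)})^\top M_{i,j}(\tilde{w}^{(t)})(\eta^{(t)}v^{(t)})$ exactly as you do via the operator-norm estimate and Assumption~\ref{ass_approx_h}. Your reduction through the univariate function $g_{i,j}(s)$ is just the standard derivation of the multivariate Taylor remainder the paper invokes directly, so the two proofs are the same in substance.
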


In order to {approximate \eqref{update_x} combined with \eqref{eq_002.1}, that is, to make sure the right hand sides of \eqref{update_x} and \eqref{eq_002.1} are close to one another}, we consider the optimization problem (\ref{opt_prob_01}): 
\begin{align*}
    v_*^{(t)} &= \arg \min_{v \in \mathbb{R}^d} \left\{ \frac{1}{2} \frac{1}{n} \sum_{i=1}^{n} \| \eta^{(t)} H_i^{(t)} v - \alpha_i^{(t)} \nabla_z \phi_i (h(w^{(t)};i))  \|^2  \right\}. %\tagthis \label{opt_prob_01}
\end{align*}
The optimal value of problem \eqref{opt_prob_01} is equal to 0 if there exists a vector ${v}_*^{(t)}$ satisfying $\eta^{(t)} H_i^{(t)} {v}_*^{(t)} = \alpha_i^{(t)} \nabla_z \phi_i (h(w^{(t)};i)) $ for every $i \in [n]$. 
Since the solution ${v}_*^{(t)} $ is in $ \R^d$ and $\nabla_z \phi_i (h(w^{(t)};i))$ is in $\R^c$, this condition is equivalent to a linear system with $n\cdot c$ constraints and $d$ variables. In the over-parameterized setting where dimension $d$ is sufficiently large  ($d \gg n\cdot c$) and there are no identical data, there exists almost surely a vector ${v}_*^{(t)}$ that interpolates all the training set, see the Appendix for details.
%{A detailed discussion for this argument is included in the Appendix.}

Let us note that an approximation of ${v}_*^{(t)}$ serves as the search direction for Framework \ref{new_alg_framework}. For this reason, the solution ${v}_*^{(t)}$ of problem \eqref{opt_prob_01} plays a similar role as a gradient in the search direction of (stochastic) gradient descent method. 
It is standard to assume a bounded gradient in the machine learning literature \citep{Nemirovski2009, pegasos, svrg_nonconvex-reddi16}.
Motivated by these facts, we assume the following Assumption \ref{ass_solve_v}, which implies the existence of a near-optimal {\em bounded} solution 
of (\ref{opt_prob_01}):
%for this problem.
% \trang{This assumption is sensitive, because it is not standard. I tried to justify the existence of a solution here, but we need to assume boundedness. Hence please take a closer look to make sure everything related here is correct.}

\begin{assumption}\label{ass_solve_v}
 {We consider an over-parameterized setting where dimension $d$ is sufficiently large enough to interpolate all the data and the tolerance $\varepsilon$.} We assume that 
there exists a bound $V>0$ such that for $\varepsilon>0$ %\textcolor{red}{and $\beta>0$}
and $0 \leq t < T$ as in Framework \ref{new_alg_framework},
%total number of iterations $T>0$, there exists a bound $V>0$ such that for all $0\leq t \leq T$ %\textcolor{red}{$0\leq t\leq \beta/\epsilon$}
there exists a vector $\hat{v}_{*\varepsilon}^{(t)}$ with $\| \hat{v}_{*\varepsilon}^{(t)} \|^2 \leq V$ so that
\begin{align*}
    \frac{1}{2} \frac{1}{n} \sum_{i=1}^{n} \| \eta^{(t)} H_i^{(t)}\hat{v}_{*\varepsilon}^{(t)} - \alpha_i^{(t)} \nabla_z \phi_i (h(w^{(t)};i))  \|^2 \leq \varepsilon^2.
\end{align*}
%for some $\varepsilon > 0$ and $\| \hat{v}_{*\varepsilon}^{(t)} \|^2 \leq V$, for some $V > 0$. 
\end{assumption}

Our Assumption \ref{ass_solve_v}  {requires a nice dependency on the tolerance $\varepsilon$ for the gradient matrices $H_i^{(t)}$ and $\nabla_z \phi_i (h(w^{(t)};i))$. We note that at the starting point $t=0$, these matrices may depend on $\varepsilon$ due to the initialization process and the dependence of $d$ on $\varepsilon$. This setting is similar to previous works, e.g.} \citep{pmlr-v97-allen-zhu19a}.
In the Appendix, we show an example of neural network architecture where Assumption \ref{ass_solve_v} is justified at the start of the training process. 

\section{New Algorithms and Convergence Results}\label{sec_new_algs}
\subsection{Approximating the solution using regularizer}
Since problem \eqref{opt_prob_01} is convex and quadratic, we consider the following regularized problem: %by adding $\ell_2$-regularizer: 
\begin{align*}
    \min_{v \in \mathbb{R}^d}  \left\{\Psi^{(t)}(v) = \Phi^{(t)}(v)  + \frac{\varepsilon^2}{2} \| v \|^2\right\}, \tagthis \label{opt_prob_L2}
\end{align*}
where
\begin{align*}
    \Phi^{(t)}(v) = \frac{1}{2} \frac{1}{n} \sum_{i=1}^{n} \| \eta^{(t)} H_i^{(t)} v - \alpha_i^{(t)} \nabla_z \phi_i (h(w^{(t)};i))  \|^2. 
\end{align*}

% \lam{Do we need to keep index $(t)$ or mention that we drop index $(t)$ for the convenience?}

for some small $\varepsilon > 0$ and $t \geq 0$. 
It is widely known that problem~\eqref{opt_prob_L2} is strongly convex, and has a unique minimizer $v_{*\ \text{reg}}^{(t)}$. The global minimizer satisfies $\nabla_v \Psi^{(t)} ( v_{*\ \text{reg}}^{(t)} ) = 0$.
% We write
% \begin{align*}
%     \Psi(v) = \frac{1}{n} \sum_{i=1}^{n} \psi_i (v) + r(v), \tagthis \label{composite_problem}
% \end{align*}
% where $r(v) = \frac{\varepsilon^2}{2} \| v \|_2^2$ and
% \begin{align*}
%     \psi_i (v) = \frac{1}{2} \| H_i^{(t)} \eta^{(t)} v - \alpha_i^{(t)} \nabla_z \phi_i (h(w^{(t)};i))  \|^2, \ i = 1,\dots,n, \tagthis \label{eq_func_v}
% \end{align*}
%\trang{This has $t$ while other things don't?}
We have
% \begin{align*}
%     \nabla_v \Psi ( v ) %&= \frac{1}{n} \sum_{i=1}^{n} \nabla_v \psi_i (v) + \varepsilon^2\cdot v \\
%     &= \frac{1}{n} \sum_{i=1}^{n} [ \eta^{(t)} H_i^{(t)}{^\top}  H_i^{(t)} \eta^{(t)} v - \alpha_i^{(t)} \eta^{(t)} H_i^{(t)}{^\top} \nabla_z \phi_i (h(w^{(t)};i)) ] + \varepsilon^2 \cdot v \\
%     &= \left( \frac{1}{n} \sum_{i=1}^{n}  \eta^{(t)} H_i^{(t)}{^\top}  H_i^{(t)} \eta^{(t)} + \varepsilon^2 I \right) v - \left( \frac{1}{n} \sum_{i=1}^{n} \alpha_i^{(t)} \eta^{(t)} H_i^{(t)}{^\top} \nabla_z \phi_i (h(w^{(t)};i)) \right). 
% \end{align*}
\begin{align*}
    \nabla_v \Psi^{(t)} ( v ) %&= \frac{1}{n} \sum_{i=1}^{n} \nabla_v \psi_i (v) + \varepsilon^2\cdot v \\
    &= \frac{1}{n} \sum_{i=1}^{n} \Big[ \eta^{(t)} H_i^{(t)}{^\top}  H_i^{(t)} \eta^{(t)} v - \alpha_i^{(t)} \eta^{(t)} H_i^{(t)}{^\top} \nabla_z \phi_i (h(w^{(t)};i)) \Big] + \varepsilon^2 \cdot v \\
    &= \left( \frac{1}{n} \sum_{i=1}^{n}  \eta^{(t)} H_i^{(t)}{^\top}  H_i^{(t)} \eta^{(t)} + \varepsilon^2 I \right) v - \left( \frac{1}{n} \sum_{i=1}^{n} \alpha_i^{(t)} \eta^{(t)} H_i^{(t)}{^\top} \nabla_z \phi_i (h(w^{(t)};i)) \right). 
\end{align*}
Therefore,
%The global minimizer satisfies $\nabla_v \Psi ( v ) = 0$ and
% \begin{align*}
%     v_{*\ \text{reg}}^{(t)} = \left( \frac{1}{n} \sum_{i=1}^{n}  \eta^{(t)} H_i^{(t)}{^\top}  H_i^{(t)} \eta^{(t)} + \varepsilon^2 I \right)^{-1} \left( \frac{1}{n} \sum_{i=1}^{n} \alpha_i^{(t)} \eta^{(t)} H_i^{(t)}{^\top} \nabla_z \phi_i (h(w^{(t)};i)) \right). \tagthis \label{closed_form_v_opt}
% \end{align*}
\begin{align*}
    v_{*\ \text{reg}}^{(t)} & = \left( \frac{1}{n} \sum_{i=1}^{n}  \eta^{(t)} H_i^{(t)}{^\top}  H_i^{(t)} \eta^{(t)} + \varepsilon^2 I \right)^{-1} \left( \frac{1}{n} \sum_{i=1}^{n} \alpha_i^{(t)} \eta^{(t)} H_i^{(t)}{^\top} \nabla_z \phi_i (h(w^{(t)};i)) \right). \tagthis \label{closed_form_v_opt}
\end{align*}
% \quoc{Compute this is too expensive??}

% \trang{That's right, we will have to say that Alg 1 is only theoretical. Alg 2 is maybe more practical, but we still need to do research more to find some practical applications of this framework.}

If $\varepsilon^2$ is small enough, then $v_{*\ \text{reg}}^{(t)}$ is a close approximation of the solution $v_{*}^{(t)}$ for problem \eqref{opt_prob_01}. 
Our first algorithm updates Framework \ref{new_alg_framework} based on this approximation.

\renewcommand{\thealgorithm}{1}
\begin{algorithm}[hpt!]
   \caption{Solve for the exact solution of the regularized problem}
   \label{alg_closed_form}
\begin{algorithmic}
   \STATE {\bfseries Initialization:} Choose an initial point $w^{(0)}\in\R^d$, tolerance $\varepsilon > 0$;
   \FOR{$t=0,1,\cdots,T-1 $}
   \STATE Update the search direction $v^{(t)}$ as the solution $v_{*\ \text{reg}}^{(t)}$ of problem in \eqref{opt_prob_L2}:
   \begin{align*}
       v^{(t)} = v_{*\ \text{reg}}^{(t)} &= \left( \frac{1}{n} \sum_{i=1}^{n}  \eta^{(t)} H_i^{(t)}{^\top}  H_i^{(t)} \eta^{(t)} + \varepsilon^2 I \right)^{-1}  \left( \frac{1}{n} \sum_{i=1}^{n} \alpha_i^{(t)} \eta^{(t)} H_i^{(t)}{^\top} \nabla_z \phi_i (h(w^{(t)};i)) \right)
   \end{align*}
%   $$v^{(t)} = v_{*\ \text{reg}}^{(t)} = \left( \frac{1}{n} \sum_{i=1}^{n}  \eta^{(t)} H_i^{(t)}{^\top}  H_i^{(t)} \eta^{(t)} + \varepsilon^2 I \right)^{-1} \left( \frac{1}{n} \sum_{i=1}^{n} \alpha_i^{(t)} \eta^{(t)} H_i^{(t)}{^\top} \nabla_z \phi_i (h(w^{(t)};i)) \right)$$
   \STATE Update $w^{(t+1)} = w^{(t)} - \eta^{(t)} v^{(t)}$
   \ENDFOR
%   \STATE\textbf{Output:} 
% Choose $\hat{w}_T \in \{ w^{(0)},\cdots,w^{(T-1)}\}$ uniformly at random.
\end{algorithmic}
\end{algorithm}

The following Lemma shows the relation between the regularized solution $v_{*\ \text{reg}}^{(t)}$ and the optimal solution of the original convex problem $\hat{v}_{*\varepsilon}^{(t)}$. 

\begin{lemma}\label{lem_stronglyconvex_prob}
For given $\varepsilon>0$,
suppose that Assumption~\ref{ass_solve_v} holds for bound $V>0$. Then, for iteration $0 \leq t < T$, the optimal solution $v_{*\ \text{reg}}^{(t)}$ of  problem  \eqref{opt_prob_L2} satisfies $\| v_{*\ \text{reg}}^{(t)} \|^2 \leq 2 + V$%, for some $V > 0$ 
and
\begin{align*}
    & \frac{1}{2} \frac{1}{n} \sum_{i=1}^{n} \| \eta^{(t)} H_i^{(t)} v_{*\ \text{reg}}^{(t)} - \alpha_i^{(t)} \nabla_z \phi_i (h(w^{(t)};i))  \|^2  \leq ( 1 + \frac{V}{2} ) \varepsilon^2. \tagthis \label{eq_lem_stronglyconvex_prob}
\end{align*}
\end{lemma}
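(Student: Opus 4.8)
The plan is to exploit the defining optimality of $v_{*\ \text{reg}}^{(t)}$ as the unique minimizer of the strongly convex function $\Psi^{(t)}$, and to compare its value against the near-optimal bounded vector $\hat{v}_{*\varepsilon}^{(t)}$ supplied by Assumption~\ref{ass_solve_v}. First I would record two elementary facts: $\Phi^{(t)}(v)\ge 0$ for every $v\in\R^d$, since $\Phi^{(t)}$ is an average of squared norms; and, as already noted in the text, $\Psi^{(t)}$ is strongly convex, so $v_{*\ \text{reg}}^{(t)}$ is well defined and satisfies $\Psi^{(t)}(v_{*\ \text{reg}}^{(t)}) \le \Psi^{(t)}(v)$ for all $v\in\R^d$.

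Next I would instantiate this optimality inequality at $v = \hat{v}_{*\varepsilon}^{(t)}$. Expanding $\Psi^{(t)} = \Phi^{(t)} + \frac{\varepsilon^2}{2}\|\cdot\|^2$ and inserting the two bounds from Assumption~\ref{ass_solve_v}, namely $\Phi^{(t)}(\hat{v}_{*\varepsilon}^{(t)}) \le \varepsilon^2$ and $\|\hat{v}_{*\varepsilon}^{(t)}\|^2 \le V$, gives
\begin{align*}
\Phi^{(t)}(v_{*\ \text{reg}}^{(t)}) + \frac{\varepsilon^2}{2}\|v_{*\ \text{reg}}^{(t)}\|^2
\;\le\; \Phi^{(t)}(\hat{v}_{*\varepsilon}^{(t)}) + \frac{\varepsilon^2}{2}\|\hat{v}_{*\varepsilon}^{(t)}\|^2
\;\le\; \Big(1 + \frac{V}{2}\Big)\varepsilon^2 .
\end{align*}

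Finally, the two claimed bounds follow by discarding, one at a time, a nonnegative term on the left-hand side: dropping $\Phi^{(t)}(v_{*\ \text{reg}}^{(t)}) \ge 0$ yields $\frac{\varepsilon^2}{2}\|v_{*\ \text{reg}}^{(t)}\|^2 \le (1 + \frac{V}{2})\varepsilon^2$, hence $\|v_{*\ \text{reg}}^{(t)}\|^2 \le 2 + V$; dropping $\frac{\varepsilon^2}{2}\|v_{*\ \text{reg}}^{(t)}\|^2 \ge 0$ yields exactly \eqref{eq_lem_stronglyconvex_prob}. I do not expect a genuine obstacle in this argument; the only point requiring care is that the vector $\hat{v}_{*\varepsilon}^{(t)}$ from Assumption~\ref{ass_solve_v} is attached to the same $\varepsilon$, iteration $t$, step size $\eta^{(t)}$, and scalars $\alpha_i^{(t)}$ that appear in $\Psi^{(t)}$, so that substituting it into the optimality inequality is legitimate. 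Note in particular that the explicit formula \eqref{closed_form_v_opt} for $v_{*\ \text{reg}}^{(t)}$ is not needed for this proof.
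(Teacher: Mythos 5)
Your proof is correct and matches the paper's argument exactly: both bound $\Psi^{(t)}(v_{*\ \text{reg}}^{(t)})$ by $\Psi^{(t)}(\hat{v}_{*\varepsilon}^{(t)}) \le (1+\frac{V}{2})\varepsilon^2$ via optimality and Assumption~\ref{ass_solve_v}, then drop nonnegative terms to obtain the two claims. The only cosmetic difference is that you phrase the computation through the shorthand $\Phi^{(t)}$ and $\Psi^{(t)}$, whereas the paper writes out the sums explicitly.
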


% Using the following ``over-paramerized'' assumption
% \begin{ass}\label{ass_diff_optimal_eps}
% For some $\delta > 0$, we have
% \begin{align*}
%     \Big | F_* - \frac{1}{n} \sum_{i=1}^{n} \phi_i (h_i^* )  \Big | \leq \delta. \tagthis \label{eq_ass_diff_optimal_eps}
% \end{align*}
% \end{ass}

Based on Lemma \ref{lem_stronglyconvex_prob}, we  guarantee the convergence to a global solution of Algorithm \ref{alg_closed_form} and prove our first theorem. 
Since it is currently expensive to solve for the exact solution of problem \eqref{opt_prob_L2}, our algorithm serves as a theoretical method to obtain the convergence to a global solution for the finite-sum minimization.

\begin{theorem}\label{thm_main_result_closed_form}
%\textcolor{blue}{\bf [PLACE HOLDER]}
 %Suppose that Assumptions \ref{ass_approx_h} and \ref{ass_solve_v} hold, and that
%For $\varepsilon$ in Algorithm \ref{alg_closed_form}, suppose that Assumption \ref{ass_solve_v} holds for  $V$; we assume Assumption \ref{ass_approx_h} holds for $G$; we assume $\phi_i$ is convex and $L_{\phi}$-smooth, $i \in [n]$. %and  Assumption~\ref{ass_diff_optimal_eps} hold. 
Let $w^{(t)}$ be generated by Algorithm \ref{alg_closed_form} where we use the closed form solution for the search direction.
We execute Algorithm \ref{alg_closed_form} for $T = \frac{\beta}{\varepsilon}$ outer loops for some constant $\beta>0$.
We assume Assumption \ref{ass_phi} holds.
%For $\varepsilon$ and $T$ in Algorithm \ref{alg_closed_form}, 
Suppose that Assumption \ref{ass_approx_h} holds for $G > 0$ and Assumption \ref{ass_solve_v} holds for $V > 0$.
We set the step size equal to $\eta^{(t)} = D\sqrt{\varepsilon}$ for some $D > 0$ and choose a learning rate 
$ \alpha_i^{(t)} = ( 1 + \varepsilon ) \alpha_i^{(t-1)} = ( 1 + \varepsilon )^t \alpha_i^{(0)}$.  Based on $\beta$, we define 
%, \ t = 1,\dots,T  %\tagthis \label{learning_rate_power}
%$
%for $\varepsilon > 0$ with $T = \frac{\beta}{\varepsilon}$ and 
$\alpha_i^{(0)} = \frac{\alpha}{e^{\beta} L_{\phi}}$
%for some constant $\beta > 0$,
with $\alpha \in (0, \frac{1}{3})$. 
Let $F_*$ be the global minimizer of $F$, and $h^*_i = \arg \min_{z \in \mathbb{R}^c} \phi_i(z), i \in [n]$. Then
\allowdisplaybreaks
\begin{align*}
    \frac{1}{T} \sum_{t=0}^{T-1} [ F (w^{(t)}) - F_* ] & \leq   \frac{e^{\beta} L_{\phi}( 1 + \varepsilon )}{2 (1-3\alpha) \alpha \beta} \cdot  \frac{1}{n} \sum_{i=1}^n  \| h(w^{(0)};i) - h_i^* \|^2 \cdot \varepsilon \\% + P \varepsilon \\
    & \quad + \frac{e^{\beta} L_{\phi}(3\varepsilon+2)}{8\alpha(1-3\alpha)}  \left[ c (4+(V+2) G D^2)^2  + 8+ 4V\right] \cdot \varepsilon. \tagthis \label{eq_thm_main_result_closed_form_new}
\end{align*}
\end{theorem}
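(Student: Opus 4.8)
The plan is to track the potential $P^{(t)}:=\frac{1}{n}\sum_{i=1}^{n}\|h(w^{(t)};i)-h_i^*\|^2$ and to show that, along the iterates of Algorithm~\ref{alg_closed_form}, it obeys an approximate one-step contraction that releases a positive multiple of $F(w^{(t)})-F_*$ at every iteration; then I telescope and divide by $T$. Write $z_i^{(t)}=h(w^{(t)};i)$, and note that the prescribed learning rates satisfy $\alpha_i^{(t)}=(1+\varepsilon)^t\alpha_i^{(0)}$ with $\alpha_i^{(0)}=\frac{\alpha}{e^\beta L_\phi}$ the same for every $i$; denote this common value by $\alpha^{(t)}$. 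Since $T=\beta/\varepsilon$, we have $(1+\varepsilon)^t\le e^{\varepsilon t}\le e^{\varepsilon T}=e^\beta$ for $0\le t<T$, hence $\alpha^{(t)}L_\phi\le\alpha<\tfrac13$ for all such $t$; this uniform bound on the step size is used repeatedly. \textbf{Step 1 (reduce the $w$-update to an inexact gradient step on each $\phi_i$ in output space).} Because Algorithm~\ref{alg_closed_form} takes $v^{(t)}=v_{*\ \text{reg}}^{(t)}$, Lemma~\ref{lem_stronglyconvex_prob} (which rests on Assumption~\ref{ass_solve_v}) gives both $\|v^{(t)}\|^2\le V+2$ and $\frac{1}{2n}\sum_i\|\eta^{(t)}H_i^{(t)}v^{(t)}-\alpha^{(t)}\nabla_z\phi_i(z_i^{(t)})\|^2\le(1+\tfrac V2)\varepsilon^2$. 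Substituting $v^{(t)}$ into Lemma~\ref{lem_approx_h} and adding and subtracting $\alpha^{(t)}\nabla_z\phi_i(z_i^{(t)})$ yields, for all $i$,
\[
z_i^{(t+1)}=z_i^{(t)}-\alpha^{(t)}\nabla_z\phi_i(z_i^{(t)})+\delta_i^{(t)},\qquad \delta_i^{(t)}=-\big(\eta^{(t)}H_i^{(t)}v^{(t)}-\alpha^{(t)}\nabla_z\phi_i(z_i^{(t)})\big)+\epsilon_i^{(t)},
\]
and, using $|\epsilon_{i,j}^{(t)}|\le\tfrac12(\eta^{(t)})^2\|v^{(t)}\|^2G$ from Lemma~\ref{lem_approx_h}, $\eta^{(t)}=D\sqrt\varepsilon$ and $\|v^{(t)}\|^2\le V+2$, the averaged error obeys $\frac1n\sum_i\|\delta_i^{(t)}\|^2\le C_0\varepsilon^2$ for an explicit $C_0=C_0(c,G,D,V)$ independent of $t$; the $\epsilon_i^{(t)}$-part is controlled pointwise in $i$, whereas the residual part is controlled only in averaged squared norm and so must remain under the average over $i$.

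\textbf{Step 2 (one-step inequality from convexity and smoothness of $\phi_i$).} Expand $\|z_i^{(t+1)}-h_i^*\|^2=\|z_i^{(t)}-h_i^*-\alpha^{(t)}\nabla_z\phi_i(z_i^{(t)})+\delta_i^{(t)}\|^2$. Bound $-2\alpha^{(t)}\langle z_i^{(t)}-h_i^*,\nabla_z\phi_i(z_i^{(t)})\rangle$ below by convexity of $\phi_i$, i.e.\ \eqref{eq_phi_convex}; bound $(\alpha^{(t)})^2\|\nabla_z\phi_i(z_i^{(t)})\|^2\le2(\alpha^{(t)})^2L_\phi[\phi_i(z_i^{(t)})-\phi_i(h_i^*)]$ by Lemma~\ref{lem_smooth_convex}, equivalently \eqref{eq_phi_convex_smooth}; and bound the cross terms involving $\delta_i^{(t)}$ with Young's inequality using parameter $\varepsilon$ (this step is what produces a $(1+\varepsilon)$ multiplier on $\|z_i^{(t)}-h_i^*\|^2$). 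Using $\alpha^{(t)}L_\phi\le\alpha<\tfrac13$ to sign the aggregate coefficient of $[\phi_i(z_i^{(t)})-\phi_i(h_i^*)]$, averaging over $i$, and absorbing all $\delta$-contributions into $C_0$ gives
\[
P^{(t+1)}\le(1+\varepsilon)P^{(t)}-2(1-3\alpha)\,\alpha^{(t)}\cdot\frac1n\sum_{i=1}^n\big[\phi_i(z_i^{(t)})-\phi_i(h_i^*)\big]+C_1\varepsilon^2,
\]
with $C_1$ an explicit constant independent of $t$.

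\textbf{Step 3 (telescoping against the geometric factor, then passing to $F$).} Divide the last inequality by $(1+\varepsilon)^{t+1}$; since $\alpha^{(t)}=(1+\varepsilon)^t\alpha^{(0)}$, the released coefficient becomes the $t$-independent quantity $\frac{2(1-3\alpha)\alpha^{(0)}}{1+\varepsilon}$. Summing over $t=0,\dots,T-1$ and discarding the nonnegative term $(1+\varepsilon)^{-T}P^{(T)}$,
\[
\frac{2(1-3\alpha)\alpha^{(0)}}{1+\varepsilon}\sum_{t=0}^{T-1}\frac1n\sum_{i=1}^n\big[\phi_i(z_i^{(t)})-\phi_i(h_i^*)\big]\le P^{(0)}+C_1\varepsilon^2\sum_{t=0}^{T-1}\frac{1}{(1+\varepsilon)^{t+1}}.
\]
Bound $\sum_{t=0}^{T-1}(1+\varepsilon)^{-(t+1)}\le\frac{T}{1+\varepsilon}$, divide by $T$, and substitute $\alpha^{(0)}=\frac{\alpha}{e^\beta L_\phi}$, $T=\frac{\beta}{\varepsilon}$; this reproduces both terms of \eqref{eq_thm_main_result_closed_form_new}. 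Finally, since $h_i^*=\arg\min_z\phi_i(z)$ we have $F(w)\ge\frac1n\sum_i\phi_i(h_i^*)$ for every $w$, hence $F_*\ge\frac1n\sum_i\phi_i(h_i^*)$ and $F(w^{(t)})-F_*\le\frac1n\sum_i[\phi_i(z_i^{(t)})-\phi_i(h_i^*)]$; averaging over $t$ and chaining with the displayed bound finishes the proof.

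\textbf{Main obstacle.} The delicate point is the cross term $2\langle z_i^{(t)}-h_i^*,\delta_i^{(t)}\rangle$ in Step 2: $\|z_i^{(t)}-h_i^*\|$ is not bounded a priori, so the only available estimate is a Young split, which necessarily inflates the potential by a factor $(1+\varepsilon)$ per step. Everything non-standard about the algorithm — geometrically increasing learning rates $\alpha^{(t)}=(1+\varepsilon)^t\alpha^{(0)}$, the $e^\beta$ normalization of $\alpha^{(0)}$, and the horizon $T=\beta/\varepsilon$ so that $(1+\varepsilon)^T\le e^\beta$ — is exactly what is needed to absorb this inflation in the telescoping (making the released descent coefficient $t$-independent) while keeping the step sizes, and hence $C_0$, $C_1$, and $2(1-3\alpha)\alpha^{(t)}$, uniformly bounded. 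The remaining work is careful bookkeeping of the two error sources (the Taylor remainder $\epsilon_i^{(t)}$, controlled by Assumption~\ref{ass_approx_h} together with $\|v^{(t)}\|^2\le V+2$; and the subproblem residual, controlled only in mean-square by Lemma~\ref{lem_stronglyconvex_prob}) to pin down the explicit constants appearing in \eqref{eq_thm_main_result_closed_form_new}.
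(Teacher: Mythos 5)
Your proposal follows essentially the same route as the paper's proof (Lemmas~\ref{lem_bound_epsilon_closed_form}, \ref{lem_main_result_closed_form}, \ref{lem_main_result_closed_form_02} and then passing to $F$ via $F_*\ge\frac1n\sum_i\phi_i(h_i^*)$): decompose the update in output space as a noisy gradient step on $\phi_i$, expand the square, invoke \eqref{eq_phi_convex} and \eqref{eq_phi_convex_smooth}, apply Young with parameter of order $\varepsilon$ to produce the $(1+\varepsilon)$ inflation, and telescope against the geometrically increasing $\alpha_i^{(t)}$. Your step of dividing the one-step inequality by $(1+\varepsilon)^{t+1}$ and using $\alpha^{(t)}/(1+\varepsilon)^{t+1}=\alpha^{(0)}/(1+\varepsilon)$ is algebraically equivalent to the paper's rewriting $\frac{1}{\alpha_i^{(t)}}=\frac{1+\varepsilon}{\alpha_i^{(t+1)}}$, so the two telescoping devices are the same thing.

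There is, however, one concrete arithmetic slip in Step 2 that makes your displayed chain fail to reproduce the theorem as written. You correctly note $\frac1n\sum_i\|\delta_i^{(t)}\|^2\le C_0\varepsilon^2$, but after the Young split $2\langle z_i^{(t)}-h_i^*,\delta_i^{(t)}\rangle\le\varepsilon\|z_i^{(t)}-h_i^*\|^2+\frac1\varepsilon\|\delta_i^{(t)}\|^2$ (the split that buys the $(1+\varepsilon)$ multiplier), the $\delta$-contribution to the one-step inequality is $\frac1\varepsilon\cdot C_0\varepsilon^2=C_0\varepsilon$, not $C_0\varepsilon^2$. So the drift term should be $C_1\varepsilon$, not $C_1\varepsilon^2$, with $C_1$ independent of $t$ (and bounded in $\varepsilon$ through a factor like $3\varepsilon+2$). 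If you carry $C_1\varepsilon^2$ through Step 3 you obtain a second term of order $\varepsilon^2$, which is stronger than \eqref{eq_thm_main_result_closed_form_new} and is not what the argument actually yields; with $C_1\varepsilon$ the final step does reproduce both terms of the theorem, matching the paper's $\frac{3\varepsilon+2}{\varepsilon}\|\epsilon_i^{(t)}\|^2$ and $\frac{3\varepsilon+2}{\varepsilon}\|\eta^{(t)}H_i^{(t)}v_{*\ \text{reg}}^{(t)}-\alpha_i^{(t)}\nabla_z\phi_i\|^2$ contributions (each of order $\varepsilon$ after averaging) in Lemma~\ref{lem_main_result_closed_form}. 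With this one exponent fixed, the proposal is sound.
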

 {We note that $\beta$ is a constant for the purpose of choosing the number of iterations $T$.  The analysis can be simplified by choosing $\beta = 1$ with $T = \frac{1}{\varepsilon}$. Notice that the common convergence criteria for finding a stationary point for non-convex problems is $ \frac{1}{T} \sum_{t=1}^{T} || \nabla F ( w_t ) ||^2 \leq O(\varepsilon)$. This criteria has been widely used in the existing literature for non-convex optimization problems. 
Our convergence criteria $\frac{1}{T} \sum_{t=1}^{T} [ F (w_t) - F_* ] \leq O(\varepsilon)$ is slightly different, in order to find a global solution for non-convex problems.}

Our proof for Theorem \ref{thm_main_result_closed_form} is novel and insightful. It is originally motivated by the Gradient Descent update \eqref{update_x} and the convexity of the loss functions $\phi_i$. For this reason it may not be a surprise that 
% We can observe that 
Algorithm \ref{alg_closed_form} can find an $\varepsilon$-global solution after $ \Ocal\left(\frac{1}{\varepsilon}\right)$ iterations. However, computing the exact solution in every iteration might be extremely challenging, {especially when the number of samples $n$ is large}. Therefore, we present a different approach to this problem in the following section.

\subsection{Approximation using Gradient Descent}

In this section, we use Gradient Descent (GD) algorithm to solve the strongly convex problem \eqref{opt_prob_L2}.
It is well-known that if $\psi(x) - \frac{\mu}{2} \| x \|^2$ is convex for $\forall x \in \mathbb{R}^c$, then $\psi(x)$ is $\mu$-strongly convex (see e.g. \citep{nesterov2004}). Hence $\Psi(\cdot)$ is $\varepsilon^2$-strongly convex.
For each iteration $t$, we use GD to find a search direction $v^{(t)}$ which is sufficiently close to the optimal solution $v_{*\ \text{reg}}^{(t)}$ in that
% \begin{align*}
%      \Psi(v^{(t)}) - \Psi(v_{*\ \text{reg}}^{(t)}) \leq \varepsilon^2, \tagthis \label{criteria_eps_sol.2}
% \end{align*}
% \begin{align*}
%      \| v^{(t)} - v_{*\ \text{reg}}^{(t)} \|^2 \leq \delta, \tagthis \label{criteria_eps_sol.2}
% \end{align*}
\begin{align*}
     \| v^{(t)} - v_{*\ \text{reg}}^{(t)} \| \leq \varepsilon. \tagthis \label{criteria_eps_sol.3}
\end{align*}
%for some $\varepsilon > 0$.
% Since function $\Psi$ is $\varepsilon^2$-strongly convex, we have, for $\forall v \in \mathbb{R}^d$, 
% \begin{align*}
%     \Psi(v) - \Psi(v_*) \geq \frac{\varepsilon^2}{2} \| v - v_{*\ \text{reg}}^{(t)} \|^2, \tagthis \label{eq_strongly_convex_01.2}
% \end{align*}
% where $v_{*\ \text{reg}}^{(t)}$ is the optimal solution of $\Psi$. 
% Hence, by using $\frac{1}{2}\| a \|^2 - \| b \|^2 \leq \| a - b \|^2$, we have
% \begin{align*}
%     \| v \|^2 \leq \frac{4}{\varepsilon^2} [\Psi(v) - \Psi(v_*)] + 2 \| v_* \|^2. \tagthis \label{eq_strongly_convex_02.2}
% \end{align*}
Our Algorithm \ref{alg_GD_solved.2} is described as follows.

\renewcommand{\thealgorithm}{2}
\begin{algorithm}[hpt!]
   \caption{Solve the regularized problem using Gradient Descent}
   \label{alg_GD_solved.2}
\begin{algorithmic}
   \STATE {\bfseries Initialization:} Choose an initial point $w^{(0)}\in\R^d$, tolerance $\varepsilon > 0$;
   \FOR{$t=0,1,\cdots,T-1 $}
   \STATE Use Gradient Descent algorithm to solve Problem \eqref{opt_prob_L2} and find a solution $v^{(t)}$ that satisfies 
   $$\| v^{(t)} - v_{*\ \text{reg}}^{(t)} \| \leq \varepsilon.$$
   \STATE Update $w^{(t+1)} = w^{(t)} - \eta^{(t)} v^{(t)}$
   \ENDFOR
%   \STATE\textbf{Output:} 
% Choose $\hat{w}_T \in \{ w^{(0)},\cdots,w^{(T-1)}\}$ uniformly at random.
\end{algorithmic}
\end{algorithm}

Since Algorithm \ref{alg_GD_solved.2} can only approximate a solution within some $\varepsilon$-preciseness, we need a supplemental assumption for the analysis of our next Theorem \ref{thm_main_result_GD_solved.3}: 
%====================================================================================
\begin{assumption}\label{ass_bounded_hessian.2}
Let $H_i^{(t)}$ be the Jacobian matrix defined in Lemma~\ref{lem_approx_h}.
 We assume that there exists some constant $H > 0$ such that, for $i\in [n]$, $\varepsilon > 0$, and $0 \leq t < T$ as in Algorithm \ref{alg_GD_solved.2},
\begin{align*}
    \| H_i^{(t)} \| \leq \frac{H}{\sqrt{\varepsilon}}. \tagthis \label{eq_ass_bounded_hessian.2}
\end{align*}
\end{assumption}
Assumption \ref{ass_bounded_hessian.2}  {requires a mild condition on the bounded Jacobian of $h(w;i)$, and the upper bound may depend on $\varepsilon$. This flexibility allows us to accommodate a good dependence of $\varepsilon$ for the theoretical analysis.}
% {Revise this paragraph} Assumption \ref{ass_bounded_hessian.2} bounds the Jacobian matrix $H^{(t)}_i$, hence, if all the iterations $w^{(t)}$ stay in a bounded region, then this assumption is satisfied. This bounded region assumption is widely used in machine learning literature \citep{2011duchi11a,2018Levy,reddi2019,2019gurbuzbalaban}.
We are now ready to present our convergence theorem for Algorithm \ref{alg_GD_solved.2}.
\begin{theorem}\label{thm_main_result_GD_solved.3}
% Suppose that Assumptions ... hold and Assumption~\ref{ass_diff_optimal_eps} holds with $\delta = P \varepsilon$ for some $P > 0$ and $\varepsilon > 0$. 
Let $w^{(t)}$ be generated by Algorithm \ref{alg_GD_solved.2}
where $v^{(t)}$ satisfies \eqref{criteria_eps_sol.3}.
We execute Algorithm \ref{alg_GD_solved.2} for $T = \frac{\beta}{\varepsilon}$ outer loops for some constant $\beta>0$.
 %Suppose that Assumptions \ref{ass_approx_h} and \ref{ass_solve_v} hold, and that
We assume Assumption \ref{ass_phi} holds.
%For $\varepsilon$ and $T$ in Algorithm \ref{alg_GD_solved.2}, 
Suppose that Assumption \ref{ass_approx_h} holds for $G > 0$, Assumption \ref{ass_solve_v} holds for $V > 0$
and Assumption \ref{ass_bounded_hessian.2} holds for $H > 0$. %; we assume $\phi_i$ is convex and $L_{\phi}$-smooth, $i \in [n]$. %and  Assumption~\ref{ass_diff_optimal_eps} hold. 
We set the step size equal to $\eta^{(t)} = D\sqrt{\varepsilon}$ for some $D > 0$ and choose a learning rate 
$ \alpha_i^{(t)} = ( 1 + \varepsilon ) \alpha_i^{(t-1)} = ( 1 + \varepsilon )^t \alpha_i^{(0)}$. Based on $\beta$, we define 
%, \ t = 1,\dots,T  %\tagthis \label{learning_rate_power}
%$
%for $\varepsilon > 0$ with $T = \frac{\beta}{\varepsilon}$ and 
$\alpha_i^{(0)} = \frac{\alpha}{e^{\beta} L_{\phi}}$
%for some constant $\beta > 0$,
with $\alpha \in (0, \frac{1}{4})$. 
Let $F_*$ be the global minimizer of $F$, and $h^*_i = \arg \min_{z \in \mathbb{R}^c} \phi_i(z), i \in [n]$.
Then
%%%%%%%%%%%%%%%%%%%%%
%Let $w^{(t)}$ be generated by Algorithm \ref{alg_GD_solved.2} where $v^{(t)}$ satisfies \eqref{criteria_eps_sol.3}. Suppose that Assumptions \ref{ass_approx_h}, \ref{ass_solve_v} %, \ref{ass_diff_optimal_eps} 
%and \ref{ass_bounded_hessian.2} hold, and that $\phi_i$ is convex and $L_{\phi}$-smooth, $i = 1,\dots, n$. Consider $\eta^{(t)} = D\sqrt{\varepsilon}$ for some $D > 0$ and $\varepsilon > 0$ and the learning rate satisfies:
%\begin{align*}
%    \alpha_i^{(t)} = ( 1 + \varepsilon ) \alpha_i^{(t-1)} = ( 1 + \varepsilon )^t \alpha_i^{(0)}, \ t = 1,\dots,T  %\tagthis \label{learning_rate_power}
%\end{align*}
%for $\varepsilon > 0$ with $T = \frac{\beta}{\varepsilon}$ and $\alpha_i^{(0)} = \frac{\alpha}{e^{\beta} L_{\phi}}$ for some constant $\beta > 0$, $\alpha \in (0, \frac{1}{4})$.
%%%%%%%%%%%%%%%%%%
%We have
\allowdisplaybreaks
% \begin{align*}
%     & \frac{1}{T} \sum_{t=0}^{T-1} [ F (w^{(t)}) - F_* ] \\ & \leq  \frac{e^{\beta} L_{\phi}( 1 + \varepsilon )}{2 ( 1 - 4 \alpha) \alpha \beta} \cdot  \frac{1}{n} \sum_{i=1}^n  \| h(w^{(0)};i) - h_i^* \|^2 \cdot \varepsilon \\% + P \varepsilon \\
%     & \quad + \frac{e^{\beta} L_{\phi}(4\varepsilon+3)}{2\alpha( 1 - 4 \alpha)}  \Big[D^2H^2 + c (2+(V+\varepsilon^2 +2) G D^2)^2 + 2+ V\Big] \cdot \varepsilon,  %\tagthis \label{eq_thm_main_result_GD_solved_new.2}
% \end{align*}
\begin{align*}
    \frac{1}{T} \sum_{t=0}^{T-1} [ F (w^{(t)}) - F_* ] & \leq  \frac{e^{\beta} L_{\phi}( 1 + \varepsilon )}{2 ( 1 - 4 \alpha) \alpha \beta} \cdot  \frac{1}{n} \sum_{i=1}^n  \| h(w^{(0)};i) - h_i^* \|^2 \cdot \varepsilon 
     \\ & \quad + \frac{e^{\beta} L_{\phi}(4\varepsilon+3)}{2\alpha( 1 - 4 \alpha)}  \left[ D^2H^2 + c (2+(V+\varepsilon^2 +2) G D^2)^2 + 2+ V  \right] \cdot \varepsilon.  %\tagthis \label{eq_thm_main_result_GD_solved_new.2}
\end{align*}
%where $F_*$ is the global minimizer of $F$, and $h^*_i = \arg \min_{z \in \mathbb{R}^c} \phi_i(z), i \in [n]$.
% where 
% \lam{Let us choose $\varepsilon \leq 1$ so that we can convert to $\frac{1}{T} \sum_{t=0}^{T-1} [ F (w^{(t)}) - F_* ] \leq N \varepsilon = \hat{\varepsilon}$ and choose new $\hat{\varepsilon}$ to bring $N$ to $\Ocal$. }
\end{theorem}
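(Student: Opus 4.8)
The plan is to follow each data point's output $z_i^{(t)} := h(w^{(t)};i)$ and to argue that $\norm{z_i^{(t)} - h^*_i}^2$ evolves like the iterate of a perturbed, geometrically re-weighted gradient descent on the convex function $\phi_i$. Since $\phi_i(h^*_i) = \min_z \phi_i(z)$ for every $i$, we have $\frac1n\sum_{i=1}^n \phi_i(h^*_i) \le F_*$, hence $F(w^{(t)}) - F_* \le \frac1n\sum_{i=1}^n \Delta_i^{(t)}$ with $\Delta_i^{(t)} := \phi_i(z_i^{(t)}) - \phi_i(h^*_i) \ge 0$. It therefore suffices to bound $\frac1T\sum_{t=0}^{T-1}\frac1n\sum_i \Delta_i^{(t)}$.

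First I would use Lemma~\ref{lem_approx_h} to write $z_i^{(t+1)} = z_i^{(t)} - \eta^{(t)}H_i^{(t)}v^{(t)} + \epsilon_i^{(t)}$ with $\norm{\epsilon_i^{(t)}}^2 \le \tfrac{c}{4}(\eta^{(t)})^4\norm{v^{(t)}}^4 G^2$, set $\delta_i^{(t)} := \eta^{(t)}H_i^{(t)}v^{(t)} - \alpha_i^{(t)}\nabla_z\phi_i(z_i^{(t)})$ (the failure of the subproblem to match the ideal step \eqref{update_x} exactly), and expand $\norm{z_i^{(t+1)} - h^*_i}^2$. The two structural facts about $\phi_i$ do the work: convexity (Definition~\ref{defn_convex}) gives $\iprod{z_i^{(t)} - h^*_i,\ \nabla_z\phi_i(z_i^{(t)})} \ge \Delta_i^{(t)}$, which extracts the progress term $-2\alpha_i^{(t)}\Delta_i^{(t)}$; and \eqref{eq_phi_convex_smooth} gives $\norm{\nabla_z\phi_i(z_i^{(t)})}^2 \le 2L_\phi\Delta_i^{(t)}$, which lets me bound $\norm{\eta^{(t)}H_i^{(t)}v^{(t)}}^2 \le 4L_\phi(\alpha_i^{(t)})^2\Delta_i^{(t)} + 2\norm{\delta_i^{(t)}}^2$ — i.e. the squared step is itself proportional to the progress term, up to noise. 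The cross terms $\iprod{z_i^{(t)} - h^*_i,\ \delta_i^{(t)}}$ and $\iprod{z_i^{(t)} - h^*_i,\ \epsilon_i^{(t)}}$ are peeled off with Young's inequality using parameter $\varepsilon$, which turns $\norm{z_i^{(t)}-h^*_i}^2$ into $(1+\varepsilon)\norm{z_i^{(t)}-h^*_i}^2$ and contributes noise of order $\tfrac1\varepsilon(\norm{\delta_i^{(t)}}^2 + \norm{\epsilon_i^{(t)}}^2)$. The net one-step inequality has the form $\norm{z_i^{(t+1)} - h^*_i}^2 \le (1+\varepsilon)\norm{z_i^{(t)} - h^*_i}^2 - 2\alpha_i^{(t)}\big(1 - 4L_\phi\alpha_i^{(t)}\big)\Delta_i^{(t)} + E_i^{(t)}$, where $E_i^{(t)}$ is an explicit combination of $\norm{\delta_i^{(t)}}^2$ and $\norm{\epsilon_i^{(t)}}^2$.

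Next I would cash in the schedule. Because $\alpha_i^{(t)} = (1+\varepsilon)^t\alpha_i^{(0)}$ and $(1+\varepsilon)^{t}\le(1+\varepsilon)^{T}\le e^{\beta}$ for $t<T=\beta/\varepsilon$, the choice $\alpha_i^{(0)} = \alpha/(e^{\beta}L_\phi)$ forces $\alpha_i^{(t)} \le \alpha/L_\phi$, so $1 - 4L_\phi\alpha_i^{(t)} \ge 1 - 4\alpha > 0$; this is exactly where $\alpha\in(0,\tfrac14)$ enters. Dividing the one-step inequality by $(1+\varepsilon)^{t+1}$ makes it telescope, since $\alpha_i^{(t)}/(1+\varepsilon)^{t+1} = \alpha_i^{(0)}/(1+\varepsilon)$ is independent of $t$; summing over $t = 0,\dots,T-1$, discarding the nonnegative endpoint $\norm{z_i^{(T)}-h^*_i}^2/(1+\varepsilon)^T$, and using $(1+\varepsilon)^{-(t+1)}\le 1$ on the noise, I obtain $\tfrac{2\alpha_i^{(0)}(1-4\alpha)}{1+\varepsilon}\sum_{t}\Delta_i^{(t)} \le \norm{h(w^{(0)};i) - h^*_i}^2 + \sum_t E_i^{(t)}$. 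Averaging over $i$, dividing by $T$, and substituting $T = \beta/\varepsilon$ and $\alpha_i^{(0)}$ turns the initial-distance term into $\tfrac{e^{\beta}L_\phi(1+\varepsilon)}{2(1-4\alpha)\alpha\beta}\cdot\tfrac1n\sum_i\norm{h(w^{(0)};i)-h^*_i}^2\cdot\varepsilon$, the first term of the claim.

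Finally I would estimate the noise $\tfrac1T\sum_t\tfrac1n\sum_i E_i^{(t)}$. Splitting $\delta_i^{(t)} = \eta^{(t)}H_i^{(t)}(v^{(t)} - v_{*\ \text{reg}}^{(t)}) + (\eta^{(t)}H_i^{(t)}v_{*\ \text{reg}}^{(t)} - \alpha_i^{(t)}\nabla_z\phi_i(z_i^{(t)}))$: the first piece is at most $DH\varepsilon$ per $i$ by \eqref{criteria_eps_sol.3}, Assumption~\ref{ass_bounded_hessian.2} ($\norm{H_i^{(t)}}\le H/\sqrt\varepsilon$) and $\eta^{(t)}=D\sqrt\varepsilon$, while the second piece satisfies $\tfrac1n\sum_i\norm{\cdot}^2 \le (2+V)\varepsilon^2$ by Lemma~\ref{lem_stronglyconvex_prob}; also $\norm{v^{(t)}}^2$ is bounded by an explicit constant in $V$ and $\varepsilon$ (from $\norm{v_{*\ \text{reg}}^{(t)}}^2\le 2+V$ and \eqref{criteria_eps_sol.3}), which makes $\norm{\epsilon_i^{(t)}}^2 = O(\varepsilon^2)$. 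Hence $\tfrac1n\sum_i\norm{\delta_i^{(t)}}^2$ and $\tfrac1n\sum_i\norm{\epsilon_i^{(t)}}^2$ are $O(\varepsilon^2)$, so each piece of $E_i^{(t)}$ — including the ones divided by $\varepsilon$ from the Young step — is $O(\varepsilon)$; carrying the explicit constants through gives the second term of the bound. The main obstacle is precisely these cross terms: because $\phi_i$ is only convex and not strongly convex, $\norm{z_i^{(t)}-h^*_i}$ is not controlled by $\Delta_i^{(t)}$, so the inner products with $\delta_i^{(t)}$ and $\epsilon_i^{(t)}$ cannot be absorbed without paying a $(1+\varepsilon)$ amplification and then cancelling it against the geometric learning-rate schedule; getting the bookkeeping of that cancellation, together with the $(1-4\alpha)$ absorption of the squared step, exactly right is the delicate part.
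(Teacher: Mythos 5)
Your argument is structurally the paper's own proof: bound $F(w^{(t)}) - F_*$ by the per-sample gaps $\Delta_i^{(t)}$ via $\frac1n\sum_i\phi_i(h_i^*)\le F_*$, expand $\|h(w^{(t+1)};i)-h_i^*\|^2$ using Lemma~\ref{lem_approx_h}, extract the progress term via convexity \eqref{eq_phi_convex}, absorb the squared step via \eqref{eq_phi_convex_smooth} and $\alpha_i^{(t)}\le\alpha/L_\phi$, pay a $(1+\varepsilon)$ amplification via Young, telescope through the geometric schedule, and finish the noise with Lemmas~\ref{lem_stronglyconvex_prob}, \ref{lem_bound_v.3}, \ref{lem_bound_epsilon.3} and Assumption~\ref{ass_bounded_hessian.2}. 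The schedule cancellation $\alpha_i^{(t)}/(1+\varepsilon)^{t+1}=\alpha_i^{(0)}/(1+\varepsilon)$ and the normalization by $1/\alpha_i^{(t)}\le e^\beta L_\phi/\alpha$ in the paper's Lemma~\ref{lem_main_result_02_new.3} are algebraically the same telescope, and the source of the $(1-4\alpha)$ factor and the $\varepsilon$-Young amplification is the same.

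The one substantive divergence is organizational but not innocuous for the exact constants. The paper's Lemma~\ref{lem_main_result_01_new.3} decomposes the update into \emph{five} pieces, keeping $\eta^{(t)}H_i^{(t)}(v^{(t)}-v_{*\,\text{reg}}^{(t)})$ and $\eta^{(t)}H_i^{(t)}v_{*\,\text{reg}}^{(t)}-\alpha_i^{(t)}\nabla_z\phi_i$ separate throughout the square expansion, which is exactly what produces the three Young terms with $\varepsilon/3$ and the coefficients $4+\tfrac3\varepsilon$ and $1-4\alpha$ that appear in the statement. You bundle these two into a single $\delta_i^{(t)}$ first and only split it afterward when estimating $\sum_t E_i^{(t)}$; splitting $\|\delta_i^{(t)}\|^2\le 2\|\eta^{(t)}H_i^{(t)}(v^{(t)}-v_{*\,\text{reg}}^{(t)})\|^2+2\|\eta^{(t)}H_i^{(t)}v_{*\,\text{reg}}^{(t)}-\alpha_i^{(t)}\nabla_z\phi_i\|^2$ at that later stage, and doing two Young steps at $\varepsilon/2$ instead of three at $\varepsilon/3$, changes the numerical prefactors: you would naturally land on coefficients resembling $(3\varepsilon+2)$ and extra factors of $2$ on the $D^2H^2$ and $2+V$ terms, rather than the $(4\varepsilon+3)$ of the theorem. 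So the approach is correct and proves a bound of the same form and order, but to recover the \emph{exact} constants stated you should keep the two subproblem-error pieces separate in the quadratic expansion, as the paper does, rather than recombining them into $\delta_i^{(t)}$. You also leave the cross term between $\eta^{(t)}H_i^{(t)}v^{(t)}$ and $\epsilon_i^{(t)}$ implicit; it must be handled (it contributes to both the $4\alpha$ factor and the noise), and the paper's piece-by-piece expansion makes that bookkeeping explicit.
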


Theorem \ref{thm_main_result_GD_solved.3} implies Corollary \ref{cor_thm_main_result_GD_solved_new.3} which provides the computational complexity for Algorithm \ref{alg_GD_solved.2}. {Note that for (Stochastic) Gradient Descent, we derive the complexity in terms of component gradient calculations %$\nabla f(\cdot;i)$
for the finite-sum problem \eqref{new_loss_01}. As an alternative, for Algorithm \ref{alg_GD_solved.2} we compare the number of component gradients in problem \eqref{opt_prob_L2} where $\Phi^{(t)}(v) = \frac{1}{n}\sum_{i=1}^{n} \psi_i^{(t)} ( v )$. Such individual gradient has the following form: 
\begin{align*}
    \nabla_v \psi_i^{(t)} ( v ) 
    &=  \eta^{(t)} H_i^{(t)}{^\top}  H_i^{(t)} \eta^{(t)} v  - \alpha_i^{(t)} \eta^{(t)} H_i^{(t)}{^\top} \nabla_z \phi_i (h(w^{(t)};i)).
\end{align*}

% \lam{Do we need to keep index $(t)$ or mention that we drop index $(t)$ for the convenience?}

In machine learning applications, the gradient of $f(\cdot;i)$ is calculated using automatic differentiation (i.e.  backpropagation). Since $f(\cdot;i)$ is the composition of the network structure $h(\cdot; i)$ and loss function $\phi_i(\cdot)$, this process also computes the Jacobian matrix $H_i^{(t)}$ and the gradient $\nabla_z \phi_i (h(w^{(t)};i))$ at a specific weight $w^{(t)}$. Since matrix-vector multiplication computation is not expensive, the cost for computing the component gradient of problem \eqref{opt_prob_L2} is similar to problem \eqref{new_loss_01}.
}

\begin{corollary}\label{cor_thm_main_result_GD_solved_new.3}
Suppose that the conditions in Theorem~\ref{thm_main_result_GD_solved.3} hold with $\eta^{(t)} = \frac{D \sqrt{\hat{\varepsilon}}}{\sqrt{N}}$ for some $D > 0$ and $0 < \hat{\varepsilon} \leq N$ (that is, we set $\varepsilon=\hat{\varepsilon}/N$), where 
\begin{align*}
    N & = \frac{e^{\beta} L_{\phi}\sum_{i=1}^n  \| h(w^{(0)};i) - h_i^* \|^2}{n( 1 - 4 \alpha) \alpha \beta}   + \frac{7 e^{\beta} L_{\phi}\left[D^2H^2 + c (2+(V+3) G D^2)^2 + 2+ V\right]}{2\alpha( 1 - 4 \alpha)} . 
\end{align*} 
Then, the total complexity to guarantee
\begin{align*}
    \min_{0 \leq t \leq T-1} [ F (w^{(t)}) - F_* ] \leq \frac{1}{T} \sum_{t=0}^{T-1} [ F (w^{(t)}) - F_* ] \leq \hat{\varepsilon}
\end{align*}
is  {$\mathcal{O}\left(n \frac{N^3 \beta}{\hat{\varepsilon}^3} (D^2H^2 + (\hat{\varepsilon}^2/N))\log(\frac{N}{\hat{\varepsilon}}) \right)$}. 
\end{corollary}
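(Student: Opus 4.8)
The plan is to feed the per-iteration accuracy guarantee that Theorem~\ref{thm_main_result_GD_solved.3} already provides into a textbook linear-rate analysis of Gradient Descent on the strongly convex subproblem~\eqref{opt_prob_L2}, and then multiply out the two nested loop counts. So the proof has three ingredients: (i) choosing $\varepsilon$ small enough that the outer bound is below $\hat\varepsilon$, which fixes the number $T$ of outer iterations; (ii) bounding the number of GD steps needed inside each outer iteration to reach the accuracy~\eqref{criteria_eps_sol.3}; and (iii) assembling the total count of component-gradient evaluations.

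First I would set $\varepsilon = \hat\varepsilon/N$ and check that the right-hand side of the bound in Theorem~\ref{thm_main_result_GD_solved.3} is at most $\hat\varepsilon$. Since $0<\hat\varepsilon\le N$ forces $\varepsilon\le 1$, I can crudely replace $\tfrac12(1+\varepsilon)$ by $1$, $4\varepsilon+3$ by $7$, and $V+\varepsilon^2+2$ by $V+3$ in that bound; the coefficient of $\varepsilon$ that remains is then exactly the constant $N$ in the statement, so $\tfrac1T\sum_{t=0}^{T-1}[F(w^{(t)})-F_*]\le N\varepsilon=\hat\varepsilon$, and therefore $\min_{0\le t\le T-1}[F(w^{(t)})-F_*]\le\hat\varepsilon$ as well. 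This pins the number of outer loops at $T=\beta/\varepsilon=\beta N/\hat\varepsilon$.

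Next I would bound the work of a single outer iteration. The objective $\Psi^{(t)}$ in~\eqref{opt_prob_L2} is $\varepsilon^2$-strongly convex (already observed in the text) and $L^{(t)}$-smooth, where $L^{(t)}$ is the top eigenvalue of $\frac1n\sum_i(\eta^{(t)}H_i^{(t)})^\top(\eta^{(t)}H_i^{(t)})+\varepsilon^2 I$; Assumption~\ref{ass_bounded_hessian.2} with $\eta^{(t)}=D\sqrt\varepsilon$ gives $\|\eta^{(t)}H_i^{(t)}\|\le DH$, hence $L^{(t)}\le D^2H^2+\varepsilon^2$ and condition number $\kappa^{(t)}\le (D^2H^2+\varepsilon^2)/\varepsilon^2$. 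Starting GD from $v=0$ and using Lemma~\ref{lem_stronglyconvex_prob} (which yields $\|v_{*\ \text{reg}}^{(t)}\|^2\le 2+V$) to bound the initial distance, the linear rate $\|v_k-v_{*\ \text{reg}}^{(t)}\|^2\le(1-1/\kappa^{(t)})^k\|v_0-v_{*\ \text{reg}}^{(t)}\|^2$ shows that $\mathcal{O}\big(\kappa^{(t)}\log(\sqrt{2+V}/\varepsilon)\big)=\mathcal{O}\big(\tfrac{D^2H^2+\varepsilon^2}{\varepsilon^2}\log(1/\varepsilon)\big)$ GD steps suffice to meet~\eqref{criteria_eps_sol.3}; each step is one full gradient $\nabla_v\Phi^{(t)}=\frac1n\sum_i\nabla_v\psi_i^{(t)}$, i.e. $n$ component gradients of the form displayed just before the corollary.

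Finally I would multiply: total cost $=T\times(\text{GD steps per outer loop})\times n=\tfrac{\beta N}{\hat\varepsilon}\cdot\mathcal{O}\big(\tfrac{D^2H^2+\varepsilon^2}{\varepsilon^2}\log\tfrac1\varepsilon\big)\cdot n$, and substituting $\varepsilon=\hat\varepsilon/N$ (so $1/\varepsilon^2=N^2/\hat\varepsilon^2$ and $\varepsilon^2=\hat\varepsilon^2/N^2\le\hat\varepsilon^2/N$) collapses this to $\mathcal{O}\big(n\,\tfrac{N^3\beta}{\hat\varepsilon^3}(D^2H^2+\hat\varepsilon^2/N)\log(N/\hat\varepsilon)\big)$, which is the claimed bound. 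I expect the main obstacle to be the inner-loop step: cleanly pinning down the smoothness constant $L^{(t)}$ of the subproblem through Assumption~\ref{ass_bounded_hessian.2}, controlling the logarithmic factor via a clean initial-iterate estimate from Lemma~\ref{lem_stronglyconvex_prob}, and making sure the GD stopping rule is exactly the $\varepsilon$-accuracy~\eqref{criteria_eps_sol.3} that Theorem~\ref{thm_main_result_GD_solved.3} assumes; the rest is bookkeeping on the two loop counts.
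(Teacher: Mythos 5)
Your proposal is correct and follows essentially the same three-part decomposition as the paper's own proof: (i) set $\varepsilon=\hat\varepsilon/N$ so that, after bounding the $\varepsilon$-dependent constants in the Theorem~\ref{thm_main_result_GD_solved.3} bound by their values at $\varepsilon\le 1$, the averaged suboptimality is at most $N\varepsilon=\hat\varepsilon$ over $T=\beta/\varepsilon=\beta N/\hat\varepsilon$ outer loops; (ii) show via Assumption~\ref{ass_bounded_hessian.2} that the subproblem $\Psi^{(t)}$ has smoothness constant $L\le D^2H^2+\varepsilon^2$ and strong convexity $\mu=\varepsilon^2$, so linear-rate GD needs $\mathcal{O}\bigl(\tfrac{L}{\mu}\log\tfrac1\varepsilon\bigr)$ steps, each costing $n$ component gradients; (iii) multiply the two loop counts and substitute back $\varepsilon=\hat\varepsilon/N$. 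Your version is actually a touch more careful than the paper's on one point: the paper invokes the $\mathcal{O}\bigl(\tfrac{L}{\mu}\log\tfrac{1}{\varepsilon^2}\bigr)$ GD count without justifying why the initial distance contributes only a constant to the logarithm, whereas you explicitly take $v_0=0$ and use Lemma~\ref{lem_stronglyconvex_prob} (so $\|v_0-v^{(t)}_{*\ \text{reg}}\|^2\le 2+V$) to pin down that factor. The rest is the same bookkeeping.
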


%{Rewrite these paragraphs}

\begin{remark}
Corollary \ref{cor_thm_main_result_GD_solved_new.3} shows
%the complexity of 
that $\Ocal\left(1/\hat{\varepsilon} \right)$ outer loop iterations are needed in order to reach an $\hat{\varepsilon}$-global solution, and it proves that each iteration needs the equivalent of  { $\mathcal{O}\left(\frac{n}{\hat{\varepsilon}^2} \log(\frac{1}{\hat{\varepsilon}}) \right)$}  gradient computations for computing an approximate solution. In total,  Algorithm \ref{alg_GD_solved.2} has total complexity  { $\mathcal{O}\left(\frac{n}{\hat{\varepsilon}^3} \log(\frac{1}{\hat{\varepsilon}}) \right)$}  for finding an $\hat{\varepsilon}$-global solution. 

%Solving the generally non-convex problem (\ref{new_loss_01}) directly with Stochastic Gradient Descent (SGD), that is, 

%This result is comparable to using Stochastic Gradient Descent for the generally non-convex problem (\ref{new_loss_01}) in order to find a stationary point satisfying $\| \nabla F ( w ) \|^2 \leq \varepsilon$ by using a total of $\Ocal(\frac{1}{\varepsilon^2})$ gradient computations \citep{ghadimi2013stochastic}.

For a comparison, Stochastic Gradient Descent uses a total of $\Ocal(\frac{1}{\varepsilon^2})$ gradient computations to find a stationary point satisfying $\mathbb{E}[\| \nabla F ( \hat{w} ) \|^2] \leq \varepsilon$ for non-convex problems \citep{ghadimi2013stochastic}.
%with total complexity $\Ocal(\frac{1}{\varepsilon^2})$  to find a stationary point satisfying $\| \nabla F ( w ) \|^2 \leq \varepsilon$ \citep{ghadimi2013stochastic}. %, which does not prove convergence to a global minimum as in this paper. 
Gradient Descent has a better complexity in terms of $\varepsilon$, i.e. $\Ocal(\frac{n}{\varepsilon})$ such that $\| \nabla F ( \hat{w} ) \|^2 \leq \varepsilon$ \citep{nesterov2004}. However, both methods may not be able to reach a global solution of (\ref{new_loss_01}). In order to guarantee global convergence for nonconvex settings, one may resort to use Polyak-Lojasiewicz (PL) inequality \citep{polyak_condition,pmlr-v130-gower21a}. This assumption is widely known to be strong, which implies that every stationary point is also a global minimizer.

%, which is widely known to be a strong assumption since every stationary point is also a global minimizer. 

% \lam{TODO: add references regarding the SGD convergence, GD convergence, and GD convergence based on PL assumption.}

% \lam{Do we need to unify $\varepsilon$ and $\hat{\varepsilon}$ notation?}

% \lam{We should also add the discussion on the complexity. We need $\mathcal{O}\left(\frac{n}{\hat{\varepsilon}^2} \log(\frac{1}{\hat{\varepsilon}}) \right)$ of computing \eqref{eq_grad_v.2} while GD and SGD need $\Ocal(\frac{n}{\varepsilon})$ and $\Ocal(\frac{1}{\varepsilon^2})$ of $\nabla f ( . ; i)$ evaluations, respectively.}

\end{remark}

%\lam{Convergence of Gradient Descent to a stationary point satisfying $\| \nabla F ( w ) \|^2 \leq \varepsilon$ is $\Ocal(\frac{n}{\varepsilon})$; SGD is $\Ocal(\frac{1}{\varepsilon^2})$. Under PL condition, GD achieve linear convergence to a global solution satisfying $F(w) - F_* \leq \varepsilon$. Do not remember exactly SGD under PL, maybe $\Ocal(\frac{1}{\varepsilon})$. Strong Growth Condition may be relevant too.}

\section{Further Discussion and Conclusions}\label{sec_conclusion}

%\textcolor{red}{Still need to shorten and make the conclusion more to the point and convincing.}

This paper presents an alternative {\it composite formulation} for solving the finite-sum optimization problem. Our formulation %provide 
%new insights 
allows a new way of exploiting the structure of machine learning problems and the convexity of squared loss and softmax cross entropy loss, and leads to
%This observation inspires us to present 
a novel algorithmic framework that guarantees convergence to a global solution (when the outer loss functions are convex and Lipschitz-smooth). 

Our analysis is general and can be applied to various different learning architectures, in particular, our analysis and assumptions match practical neural networks; in recent years, there has been a great interest in the structure of deep learning architectures for over-parameterized settings \citep{pmlr-v80-arora18a, pmlr-v97-allen-zhu19a,nguyen2020global}. Algorithm \ref{alg_GD_solved.2} demonstrates a gradient method
 to solve the regularized problem, however, other methods can be applied to our framework (e.g. conjugate gradient descent). 

Our theoretical foundation motivates  further study, implementation, and optimization of the new  algorithmic framework and further investigation of its non-standard bounded style assumptions. 
%This new direction broadens our understanding of why and under what circumstances  training of a DNN converges to a global minimum.
%We believe that our new framework and theoretical algorithms will shed light on future improvement in machine learning. 
Possible research directions include more practical algorithm designs based on our Framework \ref{new_alg_framework}, and different related methods to solve the regularized problem and approximate the solution such as Stochastic Gradient Descent and its stochastic first-order variants (e.g. \citep{2011duchi11a,KingmaB14,bottou_survey,Nguyen2018_sgdhogwild,Nguyen2019_sgd_new_aspects,nguyen2020unified}). This potentially leads to a new class of efficient algorithms for machine learning problems. This paper presents a new perspective to the research community.

%In the other hand, our analysis and assumptions devoted to support further development for practical neural networks. 
%In recent years, there have been a great interest in the structure of deep learning architecture for over-parameterized setting \citep{pmlr-v80-arora18a, pmlr-v97-allen-zhu19a,nguyen2020global}. 
%We hope our perspective and contributions in this paper will present new interesting open problems for the research community.  

% \subsection{Limitations} \label{subsec_limitations}
% \begin{itemize}
%     \item The work is theoretical, and the algorithms is not computational effective. 
%     \item The assumptions are not standard.
% \end{itemize}
% \subsection{Future Work}\label{subsec_future_work}

% \clearpage
% \normal

% \clearpage

% \onecolumn 
% \normal
\appendix

\section*{\Large Appendix}\label{sec_appendix}

\section{Useful Results}\label{sec_existing}

The following lemmas provide key tools for our results. 
\begin{lemma}[Squared loss]\label{lem_squared_convex_smooth}
Let $b \in \mathbb{R}^c$ and define $\phi(z) = \frac{1}{2}\| z - b \|^2$ for %$\forall 
$z \in \mathbb{R}^c$. Then $\phi$ is convex and $L_{\phi}$-smooth  with $L_{\phi} = 1$. 
%\begin{enumerate}
%    \item $\phi$ is convex with respect to $x$, and
%    \item $\phi$ is $L_{\phi}$-smooth with respect to $x$ with $L_{\phi} = 1$. 
%\end{enumerate}
\end{lemma}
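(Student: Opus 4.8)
The plan is to verify the two claims — convexity and $1$-smoothness with constant $L_\phi = 1$ — directly from the definitions given earlier (Definition~\ref{defn_convex} and Definition~\ref{defn_smooth}), using the explicit form $\phi(z) = \frac{1}{2}\|z - b\|^2$. The key observation is that $\phi$ is a quadratic with a trivial Hessian, so both properties reduce to an identity rather than an inequality.

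First I would compute the gradient: for $\phi(z) = \frac{1}{2}\|z-b\|^2 = \frac{1}{2}\sum_{a=1}^c (z_a - b_a)^2$, differentiation gives $\nabla \phi(z) = z - b$. This matches equation~\eqref{squared_gradient_phi} in the background section. Then for the smoothness claim, for any $z_1, z_2 \in \mathbb{R}^c$,
\begin{align*}
\|\nabla\phi(z_1) - \nabla\phi(z_2)\| = \|(z_1 - b) - (z_2 - b)\| = \|z_1 - z_2\|,
\end{align*}
so the inequality \eqref{eq:Lsmooth_basic} holds with equality and $L_\phi = 1$. For convexity, I would plug into \eqref{eq:convex_02}: using $\nabla\phi(z_2) = z_2 - b$,
\begin{align*}
\phi(z_1) - \phi(z_2) - \langle \nabla\phi(z_2), z_1 - z_2\rangle
&= \tfrac{1}{2}\|z_1 - b\|^2 - \tfrac{1}{2}\|z_2 - b\|^2 - \langle z_2 - b, z_1 - z_2\rangle \\
&= \tfrac{1}{2}\|z_1 - z_2\|^2 \geq 0,
\end{align*}
where the last identity is the standard expansion $\tfrac12\|a\|^2 - \tfrac12\|c\|^2 - \langle c, a-c\rangle = \tfrac12\|a-c\|^2$ applied with $a = z_1 - b$, $c = z_2 - b$. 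This establishes \eqref{eq:convex_02}.

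There is no real obstacle here; the statement is essentially a warm-up computation. The only thing to be careful about is bookkeeping of the $\frac{1}{2}$ factor so that the smoothness constant comes out exactly $1$ (without the $\frac12$, the Hessian would be $2I$ and one would get $L_\phi = 2$). I would present the gradient computation first, then derive $1$-smoothness in one line, then convexity via the quadratic identity, and conclude. This lemma then feeds directly into Corollary~\ref{cor_slce_phi_convex} for the squared-loss case with $b = y^{(i)}$.
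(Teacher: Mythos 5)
Your proof is correct, and the smoothness half coincides with the paper's argument (both compute $\nabla\phi(z) = z - b$ and observe $\|\nabla\phi(z_1)-\nabla\phi(z_2)\| = \|z_1-z_2\|$). For convexity, however, you take a genuinely different route. The paper verifies the Jensen-type midpoint inequality $\phi(\alpha z_1 + (1-\alpha)z_2) \leq \alpha\phi(z_1) + (1-\alpha)\phi(z_2)$: it expands $\alpha\|z_1-b\|^2 + (1-\alpha)\|z_2-b\|^2 - \|\alpha(z_1-b)+(1-\alpha)(z_2-b)\|^2$ and lower-bounds it by $\alpha(1-\alpha)(\|z_1-b\|-\|z_2-b\|)^2 \geq 0$ via Cauchy--Schwarz, then appeals (citing Nesterov) to the equivalence of this characterization with Definition~\ref{defn_convex}. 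You instead verify Definition~\ref{defn_convex} directly in the form \eqref{eq:convex_02}, reducing $\phi(z_1)-\phi(z_2)-\langle\nabla\phi(z_2),z_1-z_2\rangle$ to the exact quadratic identity $\tfrac12\|z_1-z_2\|^2 \geq 0$. Your route is arguably tighter: it needs no inequality at all (just an algebraic identity plus nonnegativity of a square), and it checks the convexity definition exactly as the paper stated it, avoiding the implicit detour through an equivalent characterization. The paper's route buys nothing extra here beyond being the form the authors apparently preferred to reuse in the cross-entropy lemma, where the midpoint form pairs naturally with Hölder. Either way, the lemma is established.
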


\begin{lemma}[Softmax cross-entropy loss]\label{lem_softmax_cross_entropy_convex_smooth}
Let index $a\in [c]$ and define
\begin{align*}
    \phi(z) = \log \left[\sum_{k=1}^{c} \exp ( z_k - z_a )  \right] = \log \left[ \sum_{k=1}^{c} \exp ( w_k^\top z )  \right],
\end{align*}
for %$\forall 
$z = (z_1,\dots,z_c)^\top \in \mathbb{R}^c$, where 
$w_k = e_k - e_a$ with $e_i$ representing the $i$-th unit vector (containing $1$ at the $i$-th position and $0$ elsewhere).
%$w_k = e_k - e_a \in \mathbb{R}^c$, $k \in \{1,\dots,c\}$, where $e_i$ contains $1$ at the $i$-th element and $0$ otherwise, and $a \in \{1,\dots,c\}$. 
Then $\phi$ is convex and $L_{\phi}$-smooth  with $L_{\phi} = 1$. 
%\begin{enumerate}
%    \item $\phi$ is convex with respect to $x$, and
%    \item $\phi$ is $L_{\phi}$-smooth with respect to $x$ with $L_{\phi} = 1$.
%\end{enumerate}
\end{lemma}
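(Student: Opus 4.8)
The plan is to recognize $\phi$ as a log-sum-exp function composed with a fixed linear map, and to exploit the well-known fact that log-sum-exp is convex and $1$-smooth with respect to the gradients of the linear pieces. Concretely, write $\phi(z) = g(Wz)$ where $W \in \mathbb{R}^{c \times c}$ is the matrix with rows $w_k^\top = (e_k - e_a)^\top$ and $g(u) = \log\left(\sum_{k=1}^c \exp(u_k)\right)$ is the standard log-sum-exp function on $\mathbb{R}^c$. Convexity of $\phi$ then follows immediately: $g$ is convex (a classical fact, provable from H\"older's inequality or from the PSD Hessian computation below), and precomposition with the linear map $z \mapsto Wz$ preserves convexity.

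The substance is the $L_\phi = 1$ smoothness bound. First I would compute the gradient and Hessian directly. Let $p(z) \in \mathbb{R}^c$ be the softmax-type vector with components $p_k(z) = \exp(w_k^\top z)/\sum_{j=1}^c \exp(w_j^\top z)$; note $p_k \geq 0$ and $\sum_k p_k = 1$. Then $\nabla \phi(z) = \sum_{k=1}^c p_k(z)\, w_k = W^\top p(z)$, and a short calculation gives the Hessian $\nabla^2 \phi(z) = W^\top \left(\mathrm{diag}(p(z)) - p(z)p(z)^\top\right) W$. The key step is to bound the operator norm of this Hessian by $1$. I would argue that the middle factor $A(p) = \mathrm{diag}(p) - pp^\top$, the covariance matrix of a categorical distribution with weights $p$, satisfies $0 \preceq A(p) \preceq \mathrm{diag}(p)$ (since $pp^\top \succeq 0$); hence for any vector $v$, $v^\top \nabla^2\phi(z) v = (Wv)^\top A(p)(Wv) \le \sum_k p_k (w_k^\top v)^2$. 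Since $w_k = e_k - e_a$, we have $(w_k^\top v)^2 = (v_k - v_a)^2 \le 2(v_k^2 + v_a^2)$, but a sharper route is to observe $\sum_k p_k (w_k^\top v)^2 = \mathrm{Var}_{k\sim p}(v_k - v_a) = \mathrm{Var}_{k\sim p}(v_k) \le \mathbb{E}_{k\sim p}[v_k^2] \le \|v\|^2$ only after centering — so more carefully one uses $\mathrm{Var}_{k \sim p}(v_k) \le \mathbb{E}_{k\sim p}[(v_k - \bar v)^2]$ for any constant $\bar v$, in particular $\le \mathbb{E}_{k\sim p}[v_k^2] \le \|v\|_\infty^2 \le \|v\|^2$. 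This yields $\|\nabla^2\phi(z)\| \le 1$ for all $z$, which by the mean value theorem applied to $\nabla\phi$ gives \eqref{eq:Lsmooth_basic} with $L_\phi = 1$.

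The main obstacle — really the only delicate point — is getting the constant exactly equal to $1$ rather than a loose $2$ or $4$; a crude bound of the form $\|W\|^2 \cdot \|\mathrm{diag}(p)\|$ would overshoot because $\|W\|$ itself can be as large as $2$. The resolution is to never split the Hessian multiplicatively but to keep the quadratic form $v^\top W^\top A(p) W v$ intact and interpret it probabilistically as the variance of $v_k - v_a = v_k$ shifted by a constant under $p$, then use that variance is dominated by the (uncentered, then recentered at zero) second moment, which is at most $\|v\|^2$. I would also double-check the edge behavior when $p$ concentrates on a single index (the Hessian degenerates to $0$) and when $c$ is small, to confirm the bound is never violated. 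Convexity can alternatively be stated as a one-line corollary of $\nabla^2\phi \succeq 0$, which drops out of the same computation since $A(p) \succeq 0$, so I would present the Hessian calculation once and read off both claims.
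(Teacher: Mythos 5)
Your proposal is correct, but it takes a genuinely different route from the paper's proof on both claims. For convexity, the paper proves the Jensen-type inequality $\phi(\alpha z_1 + (1-\alpha)z_2) \le \alpha\phi(z_1) + (1-\alpha)\phi(z_2)$ by a direct application of H\"older's inequality; you instead factor $\phi = g \circ W$ with $g$ the log-sum-exp function and invoke preservation of convexity under precomposition with a linear map, or equivalently read convexity off the PSD Hessian you already computed. For the smoothness constant, the paper works entrywise with the Hessian of $\phi$ and bounds the operator norm by the Frobenius norm, $\|\nabla^2\phi\|^2 \le \|\nabla^2\phi\|_F^2 \le 1$, via an algebraic manipulation of sums of $y_k = \exp(z_k - z_a)$ that again uses H\"older. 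You instead bound the operator norm directly by evaluating the quadratic form: writing $\nabla^2\phi(z) = W^\top\!\left(\mathrm{diag}(p) - pp^\top\right)W$ and noting $v^\top\nabla^2\phi(z)\,v = \mathrm{Var}_{k\sim p}\!\left((Wv)_k\right) = \mathrm{Var}_{k\sim p}(v_k - v_a) = \mathrm{Var}_{k\sim p}(v_k) \le \mathbb{E}_{k\sim p}[v_k^2] \le \|v\|^2$, since $v_a$ is a constant shift. Both arrive at the sharp $L_\phi = 1$. Your approach is more structural and makes it transparent why the constant is $1$ (the middle factor is a covariance of a categorical distribution, which is dominated by a probability-weighted second moment), and you correctly diagnose the pitfall of a multiplicative split $\|W\|^2\|\mathrm{diag}(p) - pp^\top\|$, which overshoots because $\|W\|$ can be as large as $2$. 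The paper's route is more elementary and self-contained, needing only H\"older and the Frobenius bound. One minor exposition note: your intermediate line asserting $\sum_k p_k (w_k^\top v)^2 = \mathrm{Var}_{k\sim p}(v_k - v_a)$ momentarily conflates the second moment with the variance, but you catch the slip with the ``only after centering'' remark; the clean statement is the chain of equalities on the quadratic form itself that I recorded above, and with that correction the argument is airtight.
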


\section{Additional Discussion}\label{sec_discuss}

\subsection{About Assumption \ref{ass_approx_h}}\label{subsec_discuss_ass1}

We make a formal assumption for the case $h(\cdot;i)$ is closely approximated by $k(\cdot;i)$. 
\begin{assumption}\label{ass_approx_h.2}

We assume that for all $i \in [n]$ there exists some approximations $k(w;i): \R^d \to \R^c$ such that
\begin{align*}
    |k_{j}(w;i) - h_j (w;i) | \leq \varepsilon, \ \forall w \in \R^d,\ i \in [n] \text{ and } j \in [c], \tagthis \label{eq_approx_eps}
\end{align*}

where $k(\cdot;i)$ are twice continuously differentiable (i.e. the second-order partial derivatives of all scalars $k_j(\cdot;i)$ are continuous for all $i \in [n]$), and that their Hessian matrices are bounded: 
\begin{align*}
    \|M_{i, j} (w) \|  = \left \|\textbf{J}_w \left(\nabla_w k_j(w;i) \right)\right\|\leq G, \ \forall w \in \R^d,\ i \in [n] \text{ and } j \in [c].\tagthis \label{bounded_hessian.3}
\end{align*}

\end{assumption}

Assumption \ref{ass_approx_h.2} allows us to prove the following Lemma that bound the error in equation \eqref{eq_002.1}:

\begin{lemma}\label{lem_approx_h.2}
Suppose that Assumption \ref{ass_approx_h.2} holds for the classifier $h$. Then for all $i \in [n]$ and $0 \leq t < T$, we have:
\begin{align*}
    h(w^{(t+1)};i) = h(w^{(t)} - \eta^{(t)} v^{(t)} ;i) = h(w^{(t)};i) - \eta^{(t)} H_i^{(t)} v^{(t)} + \epsilon_i^{(t)}, \tagthis \label{eq_002.4}
\end{align*}
where $H_i^{(t)}$ is defined to be the Jacobian matrix of %$h(w; i)$  at $w^{(t)}$: $ \textbf{J}_{w} h ( w ; i )|_{w = w^{(t)}}  \in \mathbb{R}^{c \times d} $.
%(In the non-smooth case, it is the Jacobian of 
the approximation $k(w; i)$  at $w^{(t)}$: 
\begin{align}
    H_i^{(t)} := \textbf{J}_{w} k ( w ; i )|_{w = w^{(t)}} = \begin{bmatrix}
    \frac{\partial k_1(w ; i)}{\partial w_1} & \dots & \frac{\partial k_1(w ; i)}{\partial w_d} \\ 
    \dots & \dots & \dots \\ 
    \frac{\partial k_c(w ; i)}{\partial w_1} & \dots & \frac{\partial k_c(w ; i)}{\partial w_d}
    \end{bmatrix} \Bigg|_{w = w^{(t)}}
\in \mathbb{R}^{c \times d}. \label{eq_define_Hit.2}
\end{align}
%$\textbf{J}_{w} k ( w ; i )|_{w = w^{(t)}}  \in \mathbb{R}^{c \times d}$%, respectively)
Additionally we have,
\begin{align*}
    |\epsilon_{i,j}^{(t)}| \leq\frac{1}{2}(\eta^{(t)})^2 \| v^{(t)} \|^2 G + 2\varepsilon, \ j \in [c]. \tagthis \label{eq_epsilon.2}
\end{align*}
% where $M_{i,j}(\tilde{w}^{(t)})$ is the Hessian matrices of %$h_j(\cdot; i)$ (or
% $k_j(\cdot; i)$ at $\tilde{w}^{(t)}$ and $\tilde{w}^{(t)} = \alpha w^{(t)} + (1-\alpha)w^{(t+1)} $ for some $\alpha \in [0,1]$. 
\end{lemma}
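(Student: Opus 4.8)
The plan is to reduce the statement to a perturbed version of Lemma \ref{lem_approx_h}, which already handles the twice-continuously-differentiable case. First I would apply Lemma \ref{lem_approx_h} verbatim to the approximating classifier $k(\cdot;i)$: since Assumption \ref{ass_approx_h.2} guarantees that each $k(\cdot;i)$ is twice continuously differentiable with Hessian bound $G$, we obtain, for every $i\in[n]$ and $0\le t<T$,
\begin{align*}
    k(w^{(t+1)};i) = k(w^{(t)} - \eta^{(t)} v^{(t)};i) = k(w^{(t)};i) - \eta^{(t)} H_i^{(t)} v^{(t)} + \tilde\epsilon_i^{(t)},
\end{align*}
where $H_i^{(t)} = \textbf{J}_w k(w;i)|_{w=w^{(t)}}$ is exactly the Jacobian in \eqref{eq_define_Hit.2}, and the entries of the remainder satisfy $|\tilde\epsilon_{i,j}^{(t)}| \le \frac{1}{2}(\eta^{(t)})^2\|v^{(t)}\|^2 G$ for $j\in[c]$.

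Next I would pass from $k$ back to $h$ by writing $h(w;i) = k(w;i) + \big(h(w;i)-k(w;i)\big)$ and applying this at both $w^{(t+1)}$ and $w^{(t)}$. Substituting into the displayed identity for $k$ and rearranging gives
\begin{align*}
    h(w^{(t+1)};i) = h(w^{(t)};i) - \eta^{(t)} H_i^{(t)} v^{(t)} + \epsilon_i^{(t)},
\end{align*}
where the new error vector is
\begin{align*}
    \epsilon_i^{(t)} = \tilde\epsilon_i^{(t)} + \big(h(w^{(t+1)};i)-k(w^{(t+1)};i)\big) - \big(h(w^{(t)};i)-k(w^{(t)};i)\big).
\end{align*}
Then I would bound each entry by the triangle inequality: $|\epsilon_{i,j}^{(t)}| \le |\tilde\epsilon_{i,j}^{(t)}| + |h_j(w^{(t+1)};i)-k_j(w^{(t+1)};i)| + |h_j(w^{(t)};i)-k_j(w^{(t)};i)|$. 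The first term is at most $\frac{1}{2}(\eta^{(t)})^2\|v^{(t)}\|^2 G$ by the previous step, and each of the last two terms is at most $\varepsilon$ by the uniform approximation bound \eqref{eq_approx_eps} in Assumption \ref{ass_approx_h.2}. Adding these yields $|\epsilon_{i,j}^{(t)}| \le \frac{1}{2}(\eta^{(t)})^2\|v^{(t)}\|^2 G + 2\varepsilon$, which is exactly \eqref{eq_epsilon.2}.

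There is no serious obstacle here; the only thing to be careful about is that the Jacobian $H_i^{(t)}$ appearing in the conclusion is the Jacobian of the approximation $k$, not of $h$ — so the decomposition must keep the linear term $\eta^{(t)} H_i^{(t)} v^{(t)}$ tied to $k$ and absorb the entire discrepancy between $h$ and $k$ (at the two iterates) into $\epsilon_i^{(t)}$. Once that bookkeeping is set up correctly, the result follows immediately from Lemma \ref{lem_approx_h} applied to $k$ together with two applications of \eqref{eq_approx_eps}.
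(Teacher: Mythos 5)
Your proposal is correct and takes essentially the same approach as the paper: the paper's proof is precisely a Taylor expansion of each component $k_j(\cdot;i)$ at $w^{(t)}$ (i.e.\ the proof of Lemma~\ref{lem_approx_h} applied to $k$ rather than $h$), followed by the same add-and-subtract rewriting $h_j = k_j + (h_j - k_j)$ at both $w^{(t)}$ and $w^{(t+1)}$, then the triangle inequality together with \eqref{eq_approx_eps} and \eqref{bounded_hessian.3}. The only cosmetic difference is that you invoke Lemma~\ref{lem_approx_h} for $k$ as a black box while the paper inlines that Taylor computation.
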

Note that these result recover the case when $h(\cdot;i)$ is itself smooth. Hence we analyze our algorithms using the result of Lemma \ref{lem_approx_h.2}, which generalizes the result from Lemma \ref{lem_approx_h}.

\subsection{About Assumption \ref{ass_solve_v}}\label{subsec_discuss_ass2}
In this section, we justify the existence of the search direction in Assumption \ref{ass_solve_v} (almost surely). We argue that there exists a vector $\hat{v}_{*\varepsilon}^{(t)}$ satisfying
\begin{align*}
    \frac{1}{2} \frac{1}{n} \sum_{i=1}^{n} \| \eta^{(t)} H_i^{(t)}\hat{v}_{*\varepsilon}^{(t)} - \alpha_i^{(t)} \nabla_z \phi_i (h(w^{(t)};i))  \|^2 \leq \varepsilon^2.
\end{align*}
It is sufficient to find a vector $v$ satisfying that 
\begin{align*}
    \eta^{(t)} H_i^{(t)} v  = \alpha_i^{(t)} \nabla_z \phi_i (h(w^{(t)};i))   \text{ for every } i \in [n].
\end{align*}
%The optimal value of problem \eqref{opt_prob_01} is equal to 0 if there exists a vector ${v}_*^{(t)}$ satisfying $\eta^{(t)} H_i^{(t)} {v}_*^{(t)} = \alpha_i^{(t)} \nabla_z \phi_i (h(w^{(t)};i)) $ for every $i \in [n]$. 
Since the solution $v$ is in $ \R^d$ and $\nabla_z \phi_i (h(w^{(t)};i))$ is in $\R^c$, this condition is equivalent to a linear system with $n\cdot c$ constraints and $d$ variables.  
Let $A$ and $b$ be the following stacked matrix and vector: 
\begin{align*}
    A = \begin{bmatrix}
    H_1^{(t)} \eta^{(t)} \\ 
    \dots  \\ 
    H_n^{(t)} \eta^{(t)}
    \end{bmatrix} 
\in \mathbb{R}^{n \cdot c \times d}, \text{ and } b = \begin{bmatrix}
    \alpha_1^{(t)} \nabla_z \phi_1 (h(w^{(t)};i))   \\ 
    \dots  \\ 
    \alpha_n^{(t)} \nabla_z \phi_n (h(w^{(t)};i)) 
    \end{bmatrix} 
\in \mathbb{R}^{n \cdot c},
\end{align*}
then the problem reduce to finding the solution of the equation $Av = b$.
In the over-parameterized setting where dimension $d$ is sufficiently large  ($d \gg n\cdot c$), then rank $A = n \cdot c$ almost surely and there exists almost surely a vector $v$ that interpolates all the training set.

To demonstrate this fact easier, we consider a simple neural network where the classifier $h(w ; i)$ is formulated as
\begin{align*}
    h(w ; i) =  W^{(2)\top} \sigma ( W^{(1)\top} x^{(i)}),  %\tagthis \label{classifier}
\end{align*}
where $c=1$, $W^{(1)} \in \R^{m \times l}$ and  $W^{(2)} \in \R^{l \times 1}$, $w = \textbf{\text{vec}}(\{ W^{(1)},W^{(2)} \}) \in \mathbb{R}^d$ is the vectorized weight where $d= l(m+1)$ and $\sigma $ is sigmoid activation function.

% We consider the squared loss with
% \begin{align*}
%     f(w ; i) &= \frac{1}{2} \| h (w ; i) - y^{(i)} \|^2, \\
%     %\tagthis \label{squared_loss_f}
%     \phi_i(z) &= \frac{1}{2} \| z - y^{(i)} \|^2, \text{ and }\\
%     \nabla_z \phi_i(z)|_{z= h (w ; i)} &= h (w ; i) - y^{(i)}
% \end{align*}

 $H_i^{(t)}$ is defined to be the Jacobian matrix of  $h(w; i)$  at $w^{(t)}$: 
\begin{align*}
    H_i^{(t)} := \textbf{J}_{w} h ( w ; i )|_{w = w^{(t)}} = \begin{bmatrix}
    \frac{\partial h(w ; i)}{\partial w_1} & \dots & \frac{\partial h(w ; i)}{\partial w_d} 
    \end{bmatrix} \Bigg|_{w = w^{(t)}}
\in \mathbb{R}^{1 \times d},%\label{eq_define_Hit}
\end{align*}
then 
\begin{align*}
    A =\eta^{(t)} \begin{bmatrix}
    H_1^{(t)}  \\ 
    \dots  \\ 
    H_n^{(t)} 
    \end{bmatrix} 
    =\eta^{(t)} \begin{bmatrix}
   \frac{\partial h(w ; 1)}{\partial w_1} & \dots & \frac{\partial h(w ; 1)}{\partial w_d} \\
     \dots & \dots & \dots \\ 
    \frac{\partial h(w ; n)}{\partial w_1} & \dots & \frac{\partial h(w ; n)}{\partial w_d}
    \end{bmatrix} 
\in \mathbb{R}^{n \times d}.
\end{align*}

We want to show that $A$ has full rank, almost surely. 
We consider the over-parameterized setting where the last layer has at least $n$ neuron (i.e. $l=n$ and the simple version when $c = 1$. 
We argue that rank of matrix $A$ is greater than or equal to rank of the submatrix $B$ created by the weights of the last layer $W^{(2)} \in \R^{n}$:
\begin{align*}
    B =\begin{bmatrix}
   \frac{\partial h(w ; 1)}{\partial W^{(2)}_1} & \dots & \frac{\partial h(w ; 1)}{\partial W^{(2)}_{n}} \\ 
    \dots & \dots & \dots \\ 
    \frac{\partial h(w ; n)}{\partial W^{(2)}_1} & \dots & \frac{\partial h_1(w ; n)}{\partial W^{(2)}_{n}}
    \end{bmatrix} 
\in \mathbb{R}^{n  \times n }. 
\end{align*}

Note that $h(\cdot, i)$ is a linear function of the last weight layers (in this simple case $W^{(2)} \in \R^{n}$ and $\sigma ( W^{(1)\top} x^{(i)}) \in \R^n$), we can compute the partial derivatives as follows: 
\begin{align*}
    \frac{\partial h(w ; i)}{\partial W^{(2)}} = \sigma ( W^{(1)\top} x^{(i)}) ; \ i \in [n].
\end{align*}
Hence 
\begin{align*}
    B = \begin{bmatrix}
   \sigma ( W^{(1)\top} x^{(1)}) \\ 
    \dots  \\ 
    \sigma ( W^{(1)\top} x^{(n)})
    \end{bmatrix} \in \R^{n \times n}. 
\end{align*}
Assuming that there are no identical data, and $\sigma$ is the sigmoid activation, the set of weights $W^{(1)}$ that make matrix $B$ degenerate has measure zero. 
Hence $B$ has full rank almost surely, and we have the same conclusion for $A$. Therefore we are able to prove the almost surely existence of a solution $v$ of the linear equation $Av= b$ for simple two layers network. 
Using the same argument, this result can be generalized for larger neural networks where the dimension $d$ is sufficiently large ($d \gg nc$).

% ==========================================================
% Example 
% ==========================================================
\subsection{Initialization example}
Our Assumption \ref{ass_solve_v} requires a nice dependency on the tolerance $\varepsilon$ for the gradient matrices $H_i^{(0)}$ and $\nabla_z \phi_i (h(w^{(0)};i))$. We note that at the starting point $t=0$, these matrices may depend on $\varepsilon$ due to the initialization process and the dependence of $d$ on $\varepsilon$. In order to accommodate the choice of learning rate $\eta^{(0)} = D\sqrt{\varepsilon}$ in our 
theorems, in this section we describe a network initialization that satisfies $\|H_i^{(0)}\| = \Theta\left(\frac{1}{\sqrt{\varepsilon}}\right)$ where the gradient norm $\| \nabla_z \phi_i (h(w^{(0)};i))\|$ is at most constant order with respect to $\varepsilon$. To simplify the problem, we only consider small-dimension data and networks without activation.

\textbf{About the target vector:} We choose $\phi_i$ to be the softmax cross-entropy loss. By Lemma \ref{lem_bounded_gradient_entropy} (see below), we have that the gradient norm is upper bounded by a constant $c$, where $c$ is the output dimension of the problem and is not dependent on $\varepsilon$. Note that when we stack all gradients for $n$ data points, then the size of new vector is still not dependent on $\varepsilon$.

\textbf{About the network architecture:}
For simplicity, we consider the following classification problem where

\begin{itemize}
    \item The input data is in $\R^2$. There are only two data points $\{ x^{(1)}, x^{(2)}\}$. Input data is bounded and non-degenerate (we will clarify this property later).
    \item The output data is (categorical) in $\mathbb{R}^2$: $\{y^{(1)} = (1,0), y^{(2)} = (0,1)\}$.
\end{itemize}
We want to have an over-parameterized setting where the dimension of weight vector is at least $nc = 4$. We consider a simple network with two layers, no biases and no activation functions. Let the number of neurons in the hidden layer be $m$. The flow of this network is $ \text{(in) } \R^2 \to \R^m \to \R^2 \text{ (out)}$. First, we consider the case where $m=1$.
\begin{itemize}
    \item The first layer has 2 parameters $(w_1, w_2)$ and only 1 neuron that outputs  $z^{(i)} = w_1 x_1^{(i)} + w_2 x_2^{(i)}$ (the subscript is for the coordinate of input data $x^{(i)}$). 
    \item The second layer has 2 parameters $(w_3, w_4)$. The final output is 
$$h(w, i) = [w_3 (w_1 x_1^{(i)} + w_2 x_2^{(i)}), w_4 (w_1 x_1^{(i)} + w_2 x_2^{(i)})]^\top \in \mathbb{R}^2,
$$
\end{itemize}

with $w = [w_1, w_2, w_3, w_4]^\top \in \mathbb{R}^4$. This network satisfies that the Hessian matrices of $h(w;i)$ are bounded. 
Let $Q$ and $b$ be the following stacked matrix and vector: 
\begin{align*}
    Q = \begin{bmatrix}
    H_1^{(0)} \\
    H_2^{(0)} 
    \end{bmatrix} 
\in \mathbb{R}^{4 \times 4}, \text{ and } b = \begin{bmatrix}
    \nabla_z \phi_1 (h(w^{(0)};1))   \\ 
    \nabla_z \phi_2 (h(w^{(0)};2)) 
    \end{bmatrix} 
\in \mathbb{R}^{4},
\end{align*}

Then we have the following: 
\begin{align*}
    Q = Q(w) &= \begin{bmatrix}
    H_1^{(0)} \\
    H_2^{(0)} 
    \end{bmatrix} = \begin{bmatrix}
    \nabla_w [w_3 (w_1 x_1^{(1)} + w_2 x_2^{(1)})]\\
    \nabla_w [w_4 (w_1 x_1^{(1)} + w_2 x_2^{(1)})]\\
    \nabla_w [w_3 (w_1 x_1^{(2)} + w_2 x_2^{(2)})]\\
    \nabla_w [w_4 (w_1 x_1^{(2)} + w_2 x_2^{(2)})]\\
    \end{bmatrix}\\
    &= \begin{bmatrix}
    w_3 x_1^{(1)} &w_3 x_2^{(1)} &w_1 x_1^{(1)} + w_2 x_2^{(1)} &0 \\
    w_4 x_1^{(1)} &w_4 x_2^{(1)} &0 &w_1 x_1^{(1)} + w_2 x_2^{(1)} \\
    w_3 x_1^{(2)} &w_3 x_2^{(2)} &w_1 x_1^{(2)} + w_2 x_2^{(2)} &0 \\
    w_4 x_1^{(2)} &w_4 x_2^{(2)} &0 &w_1 x_1^{(2)} + w_2 x_2^{(2)} \\
    \end{bmatrix}.
\end{align*}

The determinant of this matrix is a polynomial of the weight $w$ and the input data. 
Under some mild non-degenerate condition of the input data, we can choose some base point $w'$ that made this matrix invertible (note that if this condition is not satisfied, we can rescale/add a very small noise to the data - which is the common procedure in machine learning). 

Hence the system $Qu = b$ always has a solution. Now we consider the following two initializations:

1. We choose to initialize the starting point at $w^{(0)} = \frac{1}{\sqrt{\varepsilon}} w'$ and note that $Q(w)$ is a linear function of $w$ and $Q(w')$ is independent of $\varepsilon$. Then the norm of matrix $Q(w^{(0)})$ has the same scale with $\frac{1}{\sqrt{\varepsilon}}$.

2. %%We keep the initial network, but then we add more neurons to the first layer, similarly to the process described, to make a larger network with size $d = \frac{1}{\varepsilon}$. Now we choose a new weight to complement the old base weight $w'$ where every element is non-zero constant and independent of epsilon. Since the matrix $Q$ now has size $4 \times d$ where $d = \frac{1}{\varepsilon}$, then its norm has the order of
%$ \sqrt{d} = \frac{1}{\sqrt{\varepsilon}}$. 
Instead of choosing $m = 1$, we consider an over-parameterized network where $m = \frac{1}{\varepsilon}$ (recall that $m$ is the number of neurons in the hidden layer). %We know that the existence of a solution follows for the large network because it already holds for a small network. 
%\textbf{NEW}. Let the number of neurons in the hidden layer is $m = \frac{1}{\varepsilon}$.
The hidden layer in this case is: 
\begin{align*}
    z = \begin{cases}
    z_{1}^{(i)} &= w_{1,1}^{(1)} x_{1}^{(i)} + w_{2,1}^{(1)} x_{2}^{(i)} \\
    & \dots \\
    z_{m}^{(i)} &= w_{1,m}^{(1)} x_{1}^{(i)} + w_{2,m}^{(1)} x_{2}^{(i)}
    \end{cases}. 
\end{align*}
The output layer is:
\begin{align*}
    \begin{cases}
        y_{1}^{(i)} = z_{1}^{(i)} w_{1,1}^{(2)} + \dots + z_{m}^{(i)} w_{m,1}^{(2)} = ( w_{1,1}^{(1)} x_{1}^{(i)} + w_{2,1}^{(1)} x_{2}^{(i)} ) w_{1,1}^{(2)} + \dots + (w_{1,m}^{(1)} x_{1}^{(i)} + w_{2,m}^{(1)} x_{2}^{(i)}) w_{m,1}^{(2)}   \\
        y_{2}^{(i)} = z_{1}^{(i)} w_{1,2}^{(2)} + \dots + z_{m}^{(i)} w_{m,2}^{(2)} = ( w_{1,1}^{(1)} x_{1}^{(i)} + w_{2,1}^{(1)} x_{2}^{(i)} ) w_{1,2}^{(2)} + \dots + (w_{1,m}^{(1)} x_{1}^{(i)} + w_{2,m}^{(1)} x_{2}^{(i)}) w_{m,2}^{(2)}
    \end{cases}
\end{align*}

with $w = [w_{1,1}^{(1)},\dots,w_{1,m}^{(1)},w_{2,1}^{(1)},\dots,w_{2,m}^{(1)}, w_{1,1}^{(2)}, w_{1,2}^{(2)}, \dots, w_{m,1}^{(2)}, w_{m,2}^{(2)}]^\top \in \mathbb{R}^{4m}$.

Hence, 
% \begin{align*}
%     H_i &= \begin{bmatrix} w_{1,1}^{(2)}  x_{1}^{(i)} \dots w_{m,1}^{(2)}  x_{1}^{(i)} & w_{1,1}^{(2)}  x_{2}^{(i)} \dots w_{m,1}^{(2)}  x_{2}^{(i)} & ( w_{1,1}^{(1)} x_{1}^{(i)} + w_{2,1}^{(1)} x_{2}^{(i)} ) & 0 & \dots & (w_{1,m}^{(1)} x_{1}^{(i)} + w_{2,m}^{(1)} x_{2}^{(i)}) & 0 \\
%     w_{1,2}^{(2)}  x_{1}^{(i)} \dots w_{m,2}^{(2)}  x_{1}^{(i)} & w_{1,2}^{(2)}  x_{2}^{(i)} \dots w_{m,2}^{(2)}  x_{2}^{(i)} & 0 & ( w_{1,1}^{(1)} x_{1}^{(i)} + w_{2,1}^{(1)} x_{2}^{(i)} ) & \dots & 0 & (w_{1,m}^{(1)} x_{1}^{(i)} + w_{2,m}^{(1)} x_{2}^{(i)})
%     \end{bmatrix}
% \end{align*}
\begin{align*}
    Q(w) %&= \begin{bmatrix}
    % H_1^{(0)} \\
    % H_2^{(0)}
    % \end{bmatrix}
    = \begin{bmatrix} 
    w_{1,1}^{(2)}  x_{1}^{(1)} &\dots \ w_{m,1}^{(2)}  x_{1}^{(1)} & w_{1,1}^{(2)}  x_{2}^{(1)} &\dots& w_{m,1}^{(2)}  x_{2}^{(1)} & z_{1}^{(1)} & 0 & \dots \ z_{m}^{(1)} & 0 \\
    w_{1,2}^{(2)}  x_{1}^{(1)} &\dots \ w_{m,2}^{(2)}  x_{1}^{(1)} & w_{1,2}^{(2)}  x_{2}^{(1)} &\dots& w_{m,2}^{(2)}  x_{2}^{(1)} & 0 & z_{1}^{(1)} & \dots \ 0 & z_{m}^{(1)} \\
    w_{1,1}^{(2)}  x_{1}^{(2)} &\dots \ w_{m,1}^{(2)}  x_{1}^{(2)} & w_{1,1}^{(2)}  x_{2}^{(2)} &\dots& w_{m,1}^{(2)}  x_{2}^{(2)} & z_{1}^{(2)} & 0 & \dots \ z_{m}^{(2)} & 0 \\
    w_{1,2}^{(2)}  x_{1}^{(2)} &\dots \ w_{m,2}^{(2)}  x_{1}^{(2)} & w_{1,2}^{(2)}  x_{2}^{(2)} &\dots& w_{m,2}^{(2)}  x_{2}^{(2)} & 0 & z_{1}^{(2)} & \dots \ 0 & z_{m}^{(2)}
    \end{bmatrix}. 
\end{align*}

Hence, the number of (possibly) non-zero elements in each row is $3m = \frac{3}{\varepsilon}$. 

For matrix $A$ of rank $r$, we have $\| A \|_2 \leq \| A \|_F \leq \sqrt{r} \| A \|_2$. Since the rank of $Q(w)$ is at most 4 ($nc = 4$, independent of $\varepsilon$), we only need to find the Frobenius norm of $Q(w)$. We have
\begin{align*}
    \|Q(w)\|_F = \sqrt{\sum_{i=1}^{4} \sum_{j=1}^{4m} | q_{ij} |^2 }. 
\end{align*}

Let $q_{min}$ and $q_{max}$ be the element with smallest/largest magnitude of $Q(w)$. Suppose that $x^{(i)} \neq (0,0)$  and choose $w \neq 0$ such that $z \neq 0$, $q_{min} > 0$ and independent of $\varepsilon$. Hence, $\frac{\sqrt{8}}{\sqrt{\varepsilon}} | q_{min} | \leq \| Q(w) \|_F \leq  \frac{\sqrt{12}}{\sqrt{\varepsilon}} | q_{max} |$. 

Hence, $\| Q( w ) \| = \Theta \left( \frac{1}{\sqrt{\varepsilon}} \right)$. Therefore this simple network initialization supports the dependence on $\varepsilon$ for our Assumption \ref{ass_solve_v}. We note that a similar setting is found in \citep{pmlr-v97-allen-zhu19a}, where the authors
initialize the weights using a random Gaussian distribution with a variance depending on the dimension of the problem. In non-convex setting, they prove the convergence of SGD using the assumption that the number of neurons $m$ depends inversely on the tolerance $\varepsilon$.

%Hence, 
% \begin{align*}
%     \frac{1}{\sqrt{\varepsilon}} | q_{min} | \leq \frac{1}{\rank{Q(w)}} \frac{4}{\sqrt{\varepsilon}} | q_{min} | \leq \frac{1}{\rank{Q(w)}} \| Q(w) \|_F \leq \| Q(w) \|_2 \leq \| Q(w) \|_F \leq  \frac{\sqrt{12}}{\sqrt{\varepsilon}} | q_{max} |
% \end{align*}
% since $\rank{Q(w)} \leq 4$. 

% ==========================================================
% Proof of bounded gradient (softmax cross entropy)
% ==========================================================

\begin{lemma}\label{lem_bounded_gradient_entropy}
For softmax cross-entropy loss, and $x = h(w;i) \in \mathbb{R}^c$, for $\forall w \in \mathbb{R}^d$ and $i \in [n]$, we have
\begin{align*}
    \left\| \nabla_z \phi_{i} (x) \Big |_{x = h(w ; i)}  \right \|^2 \leq c. \tagthis \label{bounded_gradient_entropy}
\end{align*}
\end{lemma}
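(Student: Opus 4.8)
The plan is to work directly from the closed-form expression for the gradient of the softmax cross-entropy loss recorded in \eqref{entropy_gradient_phi}, rather than going through the smoothness bound of Lemma~\ref{lem_smooth_convex} (which only yields $\|\nabla_z\phi_i\|^2 \le 2(\phi_i(z)-\phi_i(h^*_i))$, a quantity that is not bounded by a constant). Writing $a = I(y^{(i)})$ and introducing the softmax probability vector $p = p(z) \in \R^c$ with coordinates
\[
p_j = \frac{\exp([h(w;i)]_j - [h(w;i)]_a)}{\sum_{k=1}^c \exp([h(w;i)]_k - [h(w;i)]_a)}, \qquad j \in [c],
\]
the first step is to observe that \eqref{entropy_gradient_phi} is exactly the statement that $\nabla_z \phi_i(z)\big|_{z = h(w;i)} = p - e_a$, where $e_a$ denotes the $a$-th unit vector. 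Indeed, for $j \neq a$ the $j$-th entry of \eqref{entropy_gradient_phi} is $p_j$, while for $j = a$ the entry is $-\sum_{k \neq a} p_k = -(1 - p_a) = p_a - 1$, which is precisely the $a$-th coordinate of $p - e_a$.

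The second step is to record the two elementary properties $p_j \in [0,1]$ for all $j$ and $\sum_{j=1}^c p_j = 1$, both immediate from the definition of $p$. These give a coordinatewise bound on the gradient: for $j \neq a$ we have $0 \le (p-e_a)_j = p_j \le 1$, and for $j = a$ we have $-1 \le (p-e_a)_a = p_a - 1 \le 0$; hence every coordinate of $\nabla_z\phi_i$ lies in $[-1,1]$.

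The final step is to sum the $c$ squared coordinates:
\[
\left\| \nabla_z \phi_{i} (x) \big|_{x = h(w ; i)}  \right \|^2 = \sum_{j=1}^c (p - e_a)_j^2 \le \sum_{j=1}^c 1 = c,
\]
which is the claimed inequality. Since the argument uses nothing about $w$, $i$, or the architecture $h$ beyond the fact that $h(w;i)$ is the input to a softmax, the bound holds for all $w \in \R^d$ and $i \in [n]$ simultaneously. There is no genuine obstacle here; the only point needing a moment's care is reconciling the sign in the $j=a$ branch of \eqref{entropy_gradient_phi} with the $a$-th coordinate of $p - e_a$, after which the estimate is a one-line coordinatewise bound. (If a sharper constant were ever needed one could note $\sum_{j\neq a}p_j^2 \le \sum_{j\neq a}p_j = 1-p_a$ and add $(1-p_a)^2 \le 1$ to get the bound $2$, but the stated bound $c$ is all that is used in the initialization example.)
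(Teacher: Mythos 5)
Your proof is correct and follows essentially the same route as the paper: both start from the closed-form gradient \eqref{entropy_gradient_phi}, show each coordinate has magnitude at most $1$, and sum the $c$ squared coordinates. The only cosmetic difference is that you package the entries into a softmax probability vector $p$ and bound $p_j \in [0,1]$ conceptually, whereas the paper verifies the same coordinatewise bound by direct manipulation of the exponential ratios (observing the denominator contains the term $1$); your closing remark that a tighter constant-$2$ bound is available is a nice extra observation but not used anywhere.
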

\begin{proof}
By \eqref{entropy_gradient_phi}, we have for $i = 1,\dots, n$, 
\begin{itemize}
    \item For $j \neq I(y^{(i)})$: 
    \begin{align*}
        \left( \frac{\partial \phi_i ( x )}{\partial x_j} \Big |_{x = h ( w ; i )}  \right)^2 & = \left( \frac{\exp \left( [ h(w ; i) ]_j - [ h(w; i) ]_{I(y^{(i)})}   \right)}{\sum_{k=1}^c \exp \left( [ h(w ; i) ]_k - [ h(w; i) ]_{I(y^{(i)})}   \right)} \right)^2 \\
        & = \left( \frac{\exp \left( [ h(w ; i) ]_j - [ h(w; i) ]_{I(y^{(i)})}   \right)}{1 + \sum_{k \neq I(y^{(i)})} \exp \left( [ h(w ; i) ]_k - [ h(w; i) ]_{I(y^{(i)})}   \right)} \right)^2 \leq 1. 
    \end{align*}
    \item For $j = I(y^{(i)})$:
    \begin{align*}
        \left( \frac{\partial \phi_i ( x )}{\partial x_j} \Big |_{x = h ( w ; i )}  \right)^2 & = \left(  \frac{\sum_{k \neq I(y^{(i)})} \exp \left( [ h(w ; i) ]_k - [ h(w; i) ]_{I(y^{(i)})}   \right)}{\sum_{k=1}^c \exp \left( [ h(w ; i) ]_k - [ h(w; i) ]_{I(y^{(i)})}   \right)}  \right)^2 \\
        & = \left(  \frac{\sum_{k \neq I(y^{(i)})} \exp \left( [ h(w ; i) ]_k - [ h(w; i) ]_{I(y^{(i)})}   \right)}{1 + \sum_{k \neq I(y^{(i)})} \exp \left( [ h(w ; i) ]_k - [ h(w; i) ]_{I(y^{(i)})}   \right)}  \right)^2 \leq 1
    \end{align*}
\end{itemize}
Hence, for $i = 1,\dots, n$, 
\begin{align*}
    \left\| \nabla_z \phi_{i} (x) \Big |_{x = h(w ; i)}  \right \|^2 = \sum_{j=1}^c \left( \frac{\partial \phi_i ( x )}{\partial x_j} \Big |_{x = h ( w ; i )}  \right)^2 \leq c. 
\end{align*}
This completes the proof. 
\end{proof}

% ==========================================================
% Proof for Lemma in the main text
% ==========================================================
\section{Proofs of Lemmas and Corollary~\ref{cor_slce_phi_convex}}

%\section{Some properties of the convex loss function }\label{sec_phi}

% \begin{lem}\label{lem_bounded_gradient_entropy}
% For softmax cross-entropy loss, and $x = h(w;i) \in \mathbb{R}^c$, for $\forall w \in \mathbb{R}^d$ and $i \in [n]$, we have
% \begin{align*}
%     \left\| \nabla_z \phi_{i} (x) \Big |_{x = h(w ; i)}  \right \|^2 \leq c. \tagthis \label{bounded_gradient_entropy}
% \end{align*}
% \end{lem}

% ==========================================================
% Squared (Convex + L-smooth)
% ==========================================================

\subsection*{Proof of Lemma~\ref{lem_approx_h}}

\begin{proof}

Since $h(\cdot;i)$ are twice continuously differentiable for all $i \in [n]$, we have the following Taylor approximation for each component outputs $h_j(\cdot;i)$ where $ j\in [c]$ and $ i \in [n]$: 
\begin{align*}
    h_j(w^{(t+1)};i) &= h_j(w^{(t)} - \eta^{(t)} v^{(t)} ;i) \\
    &= h_j(w^{(t)};i) - \textbf{J}_{w} h_j ( w ; i )|_{w = w^{(t)}}  \eta^{(t)} v^{(t)} + \frac{1}{2}  (\eta^{(t)} v^{(t)})^\top M_{i,j}(\tilde{w}^{(t)}) (\eta^{(t)}  v^{(t)}), \tagthis \label{eq_approx_h}
\end{align*}
where $M_{i,j}(\tilde{w}^{(t)})$ is the Hessian matrices of %$h_j(\cdot; i)$ (or
$h_j(\cdot; i)$%, respectively) 
at $\tilde{w}^{(t)}$ and $\tilde{w}^{(t)} = \alpha w^{(t)} + (1-\alpha)w^{(t+1)} $ for some $\alpha \in [0,1]$. 
This leads to our desired statement: 
\begin{align*}
    h(w^{(t+1)};i) = h(w^{(t)} - \eta^{(t)} v^{(t)} ;i) = h(w^{(t)};i) - \eta^{(t)} H_i^{(t)} v^{(t)} + \epsilon_i^{(t)}, 
\end{align*}
where 
\begin{align*}
    \epsilon_{i,j}^{(t)} &= \frac{1}{2} (\eta^{(t)} v^{(t)})^\top M_{i,j}(\tilde{w}^{(t)}) (\eta^{(t)} v^{(t)}), \ j \in [c], 
\end{align*}
Hence we get the final bound:
\begin{align*}
    |\epsilon_{i,j}^{(t)}| &\leq\frac{1}{2} \left| (\eta^{(t)} v^{(t)})^\top M_{i,j}(\tilde{w}^{(t)}) (\eta^{(t)} v^{(t)}) \right|\\
    &\leq\frac{1}{2} (\eta^{(t)})^2 \| v^{(t)} \|^2 \cdot \| M_{i,j}(\tilde{w}^{(t)}) \|  \\
    & \overset{\eqref{bounded_hessian}}{\leq} \frac{1}{2}(\eta^{(t)})^2 \| v^{(t)} \|^2 G ,
    \ j \in [c].
\end{align*}
\end{proof}

% \textcolor{blue}{\textbf{[PROOF HIDDEN]}}

\subsection*{Proof of Lemma \ref{lem_stronglyconvex_prob}}
\begin{proof}
From Assumption~\ref{ass_solve_v}, we know that there exists $\hat{v}_{*\varepsilon}^{(t)}$ so that
\begin{align*}
    \frac{1}{2} \frac{1}{n} \sum_{i=1}^{n} \| \eta^{(t)} H_i^{(t)} \hat{v}_{*\varepsilon}^{(t)} - \alpha_i^{(t)} \nabla_z \phi_i (h(w^{(t)};i))  \|^2 \leq \varepsilon^2, 
\end{align*}
and $\| \hat{v}_{*\varepsilon}^{(t)} \|^2 \leq V$, for some $V > 0$. Hence, 
\begin{align*}
\frac{1}{2} \frac{1}{n} \sum_{i=1}^{n} \| \eta^{(t)} H_i^{(t)} \hat{v}_{*\varepsilon}^{(t)} - \alpha_i^{(t)} \nabla_z \phi_i (h(w^{(t)};i))  \|^2 + \frac{\varepsilon^2}{2} \| \hat{v}_{*\varepsilon}^{(t)} \|^2 \leq \varepsilon^2 + \frac{\varepsilon^2}{2} V = ( 1 + \frac{V}{2} ) \varepsilon^2. 
\end{align*}

Since $v_{*\ \text{reg}}^{(t)}$ is the optimal solution of the problem in \eqref{opt_prob_L2} for $0 \leq t < T$, we have
\begin{align*}
    \frac{1}{2} \frac{1}{n} \sum_{i=1}^{n} \| \eta^{(t)} H_i^{(t)} v_{*\ \text{reg}}^{(t)} - \alpha_i^{(t)} \nabla_z \phi_i (h(w^{(t)};i))  \|^2 + \frac{\varepsilon^2}{2} \| v_{*\ \text{reg}}^{(t)} \|^2 \leq ( 1 + \frac{V}{2} ) \varepsilon^2.  
\end{align*}
Therefore, we have \eqref{eq_lem_stronglyconvex_prob} and $\| v_{*\ \text{reg}}^{(t)} \|^2 \leq 2 + V$ for $0 \leq t < T$. 
\end{proof}

% \textcolor{blue}{\textbf{[PROOF HIDDEN]}}

\subsection*{Proof of Lemma~\ref{lem_squared_convex_smooth}}

% \textbf{Lemma \ref{lem_squared_convex_smooth}} (Squared loss). 
% \textit{Let $\phi(z) = \frac{1}{2}\| z - b \|^2$ for $\forall z \in \mathbb{R}^c$ and $b \in \mathbb{R}^c$, then
% \begin{enumerate}
%     \item $\phi$ is convex with respect to $z$, and
%     \item $\phi$ is $L_{\phi}$-smooth with respect to $z$ with $L_{\phi}=1$. 
% \end{enumerate}}

\begin{proof}
1. We want to show that for any $\alpha \in [0, 1]$ 
\begin{align*}
    \phi(\alpha z_1 + (1 - \alpha) z_2) \leq \alpha \phi(z_1) + (1 - \alpha) \phi(z_2), \ \forall z_1, z_2 \in \mathbb{R}^c, \tagthis \label{eq_convex_01}
\end{align*}
in order to have the convexity of $\phi$ with respect to $z$ (see \citep{nesterov2004}). 

For any $\alpha \in [0, 1]$, we have for $\forall z_1, z_2 \in \mathbb{R}^c$, 
\begin{align*}
    & \alpha \| z_1 - b \|^2 + (1 - \alpha) \| z_2 - b \|^2 - \| \alpha (z_1 - b) + (1 - \alpha) (z_2 - b) \|^2 \\
    & = \alpha \| z_1 - b \|^2 + (1 - \alpha) \| z_2 - b \|^2 - \alpha^2 \| z_1 - b \|^2 - (1 - \alpha)^2 \| z_2 - b \|^2 \\ & \qquad - 2 \alpha (1 - \alpha) \langle z_1 - b , z_2 - b \rangle \\
    & \geq \alpha (1 - \alpha) \| z_1 - b \|^2 + (1 - \alpha) \alpha \|z_2 - b \|^2 - 2 \alpha ( 1 - \alpha) \| z_1 - b \| \cdot \| z_2 - b \| \\
    & = \alpha ( 1 - \alpha) \left(  \|z_1 - b\| - \| z_2 - b \|  \right)^2 \geq 0, 
\end{align*}
where the first inequality follows according to Cauchy-Schwarz inequality $\langle a , b \rangle \leq \| a \| \cdot \| b \|$. Hence, 
\begin{align*}
    \frac{1}{2} \| \alpha z_1 + (1 - \alpha) z_2 - b \|^2 \leq \frac{\alpha}{2} \| z_1 - b \|^2 + \frac{(1 - \alpha)}{2} \| z_2 - b \|^2. 
\end{align*}

Therefore, \eqref{eq_convex_01} implies the convexity of $\phi$ with respect to $z$.

2. We want to show that $\exists L_{\phi} > 0$ such that
\begin{align*}
    \| \nabla \phi(z_1) - \nabla \phi(z_2) \| \leq L_{\phi} \| z_1 - z_2 \|, \ \forall z_1, z_2 \in \mathbb{R}^c. \tagthis \label{eq_smooth_01}
\end{align*}

Notice that $\nabla \phi(z) = z - b$, then clearly $\forall z_1, z_2 \in \mathbb{R}^c$,
\begin{align*}
    \| \nabla \phi(z_1) - \nabla \phi(z_2) \| = \| z_1 - z_2 \|. 
\end{align*}

Therefore, \eqref{eq_smooth_01} implies the $L_{\phi}$-smoothness of $\phi$ with respect to $z$ with $L_{\phi} = 1$. 
\end{proof}

\subsection*{Proof of Lemma~\ref{lem_softmax_cross_entropy_convex_smooth}}

% \textbf{Lemma \ref{lem_softmax_cross_entropy_convex_smooth}} (Softmax cross-entropy loss). 
% \textit{Let 
% \begin{align*}
%     \phi(z) = \log \left[\sum_{k=1}^{c} \exp ( z_k - z_a )  \right] = \log \left[ \sum_{k=1}^{c} \exp ( w_k^\top z )  \right],
% \end{align*}
% for $\forall z = (z_1,\dots,z_c)^\top \in \mathbb{R}^c$, $w_k = e_k - e_a \in \mathbb{R}^c$, $k \in [c]$, where $e_i$ contains $1$ at the $i$-th element and $0$ otherwise, and $a \in [c]$. Then 
% \begin{enumerate}
%     \item $\phi$ is convex with respect to $z$, and
%     \item $\phi$ is $L_{\phi}$-smooth with respect to $z$ with $L_{\phi} = 1$.
% \end{enumerate}}

\begin{proof}
1. For $\forall z_1, z_2 \in \mathbb{R}^c$ and $1 \leq k \leq c$, denote $u_{k,1} = \exp ( w_k^\top z_1)$ and $\ u_{k,2} = \exp ( w_k^\top z_2)$ and using Holder inequality
\begin{align*}
    \sum_{k=1}^{c} a_k \cdot b_k \leq \left(  \sum_{k=1}^{c} | a_k |^p  \right)^{\frac{1}{p}} \left(  \sum_{k=1}^{c} | b_k |^q  \right)^{\frac{1}{q}}, \ \text{where} \ \frac{1}{p} + \frac{1}{q} = 1, \tagthis \label{holder_inequality}
\end{align*} 
we have
\begin{align*}
    \phi(\alpha z_1 + (1 - \alpha) z_2) &= \log \left[ \sum_{k=1}^{c} \exp ( w_k^\top ( \alpha z_1 + (1 - \alpha) z_2 ) )  \right] = \log \left[ \sum_{k=1}^{c} u_{k,1}^{\alpha} \cdot u_{k,2}^{(1-\alpha)}  \right] \\
    & \overset{\eqref{holder_inequality}}{\leq} \log \left[ \left( \sum_{k=1}^{c} u_{k,1}^{\alpha \cdot \frac{1}{\alpha}} \right)^{\alpha} \left( \sum_{k=1}^{c} u_{k,2}^{(1-\alpha) \cdot \frac{1}{(1-\alpha)}} \right)^{1-\alpha}  \right] \\
    &= \alpha \log \left[ \sum_{k=1}^{c} \exp ( w_k^\top z_1 )  \right] + (1 - \alpha) \log \left[ \sum_{k=1}^{c} \exp ( w_k^\top z_2 )  \right] \\
    & = \alpha \phi(z_1) + (1 - \alpha) \phi(z_2), 
\end{align*}
where the first inequality since  $log(x)$ is an increasing function for $\forall x > 0$ and $\exp(v) > 0$ for $\forall v \in \mathbb{R}$. Therefore, \eqref{eq_convex_01} implies the convexity of $\phi$ with respect to $z$. 

2. Note that $\| \nabla^2 \phi(z) \| \leq L_{\phi}$ if and only if $\phi(z)$ is $L_{\phi}$-smooth (see \citep{nesterov2004}). First, we compute gradient of $\phi(z)$: 
\begin{itemize}
    \item For $i \neq a$:
\begin{align*}
    \frac {\partial \phi(z)} {\partial z_i} 
    = \frac{\exp(z_i - z_a)}{\sum_{k=1}^{c} \exp ( z_k - z_a )}. 
\end{align*}
\item For $i = a$:
\begin{align*}
    \frac {\partial \phi(z)} {\partial z_i} 
    & = \frac{ - \sum_{k \neq a} \exp(z_k - z_a)}{\sum_{k=1}^{c} \exp ( z_k - z_a )}
    = \frac{ - \sum_{k = 1}^c \exp(z_k - z_a) + 1}{\sum_{k=1}^{c} \exp ( z_k - z_a )} \\
    &= -1 + \frac{1}{\sum_{k=1}^{c} \exp ( z_k - z_a )} = -1 + \frac{\exp(z_i - z_a)}{\sum_{k=1}^{c} \exp ( z_k - z_a )}. 
\end{align*}
\end{itemize}

We then calculate $\frac{\partial^2 \phi(z)}{\partial z_j \partial z_i} = \frac{\partial}{\partial z_j} \left( \frac{\partial \phi(z)}{\partial z_i} \right)$
\begin{itemize}
\item For $i = j$:
\begin{align*}
    \frac{\partial^2 \phi(z)}{\partial z_j \partial z_i}
    &= \frac{\exp(z_i - z_a) [\sum_{k=1}^{c} \exp ( z_k - z_a )] - \exp(z_i - z_a) \exp(z_i - z_a)}
    {[\sum_{k=1}^{c} \exp ( z_k - z_a )]^2}\\
    &= \frac{\exp(z_i - z_a) [\sum_{k=1}^{c} \exp ( z_k - z_a ) - \exp(z_i - z_a)]}
    {[\sum_{k=1}^{c} \exp ( z_k - z_a )]^2}. 
\end{align*}
\item For $i \neq j$:
\begin{align*}
    \frac{\partial^2 \phi(z)}{\partial z_j \partial z_i}
    &= \frac{ - \exp(z_j - z_a) \exp(z_i - z_a)}
    {[\sum_{k=1}^{c} \exp ( z_k - z_a )]^2}. 
\end{align*}
\end{itemize}

Denote that $y_i = \exp(z_i - z_a) \geq 0$, $i \in [c]$, we have:
\begin{itemize}
\item For $i = j$:
\begin{align*}
    \left| \frac{\partial^2 \phi(z)}{\partial z_j \partial z_i} \right| 
    &= \left|\frac{y_i (\sum_{k=1}^{c} y_k - y_i)}
    {(\sum_{k=1}^{c} y_k)^2}  \right|. 
\end{align*}
\item For $i \neq j$:
\begin{align*}
    \left|\frac{\partial^2 \phi(z)}{\partial z_j \partial z_i}\right|
    &= \frac{|y_i y_j|} {(\sum_{k=1}^{c} y_k)^2}. 
\end{align*}
\end{itemize}

Recall that for matrix $A = (a_{ij}) \in \R ^{c \times c}$: $\|A\|^2 \leq \|A\|_F^2 = \sum_{i=1}^c \sum_{j=1}^c |a_{ij}|^2$. We have: 
\begin{align*}
    \sum_{j=1}^c \left|\frac{\partial^2 \phi(z)}{\partial z_j \partial z_i}\right|^2 
    &\leq \frac{1} {(\sum_{k=1}^{c} y_k)^4}\left[ y_i^2 (\sum_{k=1}^{c} y_k - y_i)^2 + \sum_{j\neq i} (y_i y_j)^2 \right] \\
    &= \frac{1} {(\sum_{k=1}^{c} y_k)^4}\left[ y_i^2 (\sum_{k=1}^{c} y_k)^2 - 2 y_i^2 \sum_{k=1}^{c} y_k.y_i + y_i^4 + \sum_{j\neq i} (y_i y_j)^2 \right] \\
    &= \frac{1} {(\sum_{k=1}^{c} y_k)^4}\left[ y_i^2 (\sum_{k=1}^{c} y_k)^2 -2 y_i^3 \sum_{k=1}^{c} y_k + y_i^2\sum_{k=1}^c y_k^2 \right]
\end{align*}
Therefore, 
\begin{align*}
    \| \nabla^2 \phi(z)\|^2 & \leq \sum_{i=1}^c \sum_{j=1}^c \left|\frac{\partial^2 \phi(z)}{\partial z_j \partial z_i}\right|^2 \\
    & \leq \frac{1} {(\sum_{k=1}^{c} y_k)^4}\left[ (\sum_{i=1}^c y_i^2) (\sum_{k=1}^{c} y_k)^2 -2 (\sum_{i=1}^c y_i^3) (\sum_{k=1}^{c} y_k) + (\sum_{i=1}^c y_i^2)(\sum_{k=1}^c y_k^2) \right] \\
    & \leq \frac{(\sum_{i=1}^c y_i^2) (\sum_{k=1}^{c} y_k)^2} {(\sum_{k=1}^{c} y_k)^4} \leq \frac{(\sum_{k=1}^c y_k)^4}{(\sum_{k=1}^{c} y_k)^4} = 1, 
\end{align*}
where the last inequality holds since
\begin{align*}
    (\sum_{i=1}^c y_i^2)(\sum_{k=1}^c y_k^2) \leq (\sum_{i=1}^c y_i^3) (\sum_{k=1}^{c} y_k) \Leftrightarrow (\sum_{k=1}^c y_k^2) \leq \sqrt{(\sum_{i=1}^c y_i^3) (\sum_{k=1}^{c} y_k)}, 
\end{align*}
which follows by the application of Holder inequality \eqref{holder_inequality} with $p = 2$, $q = 2$, $a_k = y_k^{3/2}$, and $b_k = y_k^{1/2}$ (Note that $y_k \geq 0$, $k \in [c]$). Hence, $\| \nabla^2 \phi(z) \| \leq L_{\phi}$ with $L_{\phi} = 1$ which is equivalent to $L_{\phi}$-smoothness of $\phi$.
\end{proof}

\subsection*{Proof of Lemma~\ref{lem_approx_h.2}}

\begin{proof}

Since $k(\cdot;i)$ are twice continuously differentiable for all $i \in [n]$, we have the following Taylor approximation for each component outputs $k_j(\cdot;i)$ where $ j\in [c]$ and $ i \in [n]$: 
\begin{align*}
    k_j(w^{(t+1)};i) &= k_j(w^{(t)} - \eta^{(t)} v^{(t)} ;i) \\
    &= k_j(w^{(t)};i) - \textbf{J}_{w} k_j ( w ; i )|_{w = w^{(t)}}  \eta^{(t)} v^{(t)} + \frac{1}{2}  (\eta^{(t)} v^{(t)})^\top M_{i,j}(\tilde{w}^{(t)}) (\eta^{(t)}  v^{(t)}), \tagthis \label{eq_approx_k}
\end{align*}
where $M_{i,j}(\tilde{w}^{(t)})$ is the Hessian matrices of %$h_j(\cdot; i)$ (or
$k_j(\cdot; i)$%, respectively) 
at $\tilde{w}^{(t)}$ and $\tilde{w}^{(t)} = \alpha w^{(t)} + (1-\alpha)w^{(t+1)} $ for some $\alpha \in [0,1]$. 

Shifting this back to the original function $h_j(\cdot;i)$ we have: 
\begin{align*}
    h_j(w^{(t+1)};i)
    &= k_j(w^{(t+1)};i) + (h_j(w^{(t+1)};i) - k_j(w^{(t+1)};i))  \\
    &\overset{\eqref{eq_approx_k}}{=} k_j(w^{(t)};i) - \textbf{J}_{w} k_j ( w ; i )|_{w = w^{(t)}}  \eta^{(t)} v^{(t)} + \frac{1}{2}  (\eta^{(t)} v^{(t)})^\top M_{i,j}(\tilde{w}^{(t)}) (\eta^{(t)} v^{(t)}) \\
    &\quad + (h_j(w^{(t+1)};i) - k_j(w^{(t+1)};i)) ,\\
    &= h_j(w^{(t)};i) - \textbf{J}_{w} k_j ( w ; i )|_{w = w^{(t)}}  \eta^{(t)} v^{(t)} + \frac{1}{2}  (\eta^{(t)} v^{(t)})^\top M_{i,j}(\tilde{w}^{(t)}) (\eta^{(t)} v^{(t)}) \\
    &\quad + (h_j(w^{(t+1)};i) - k_j(w^{(t+1)};i)) + (k_j(w^{(t)};i) - h_j(w^{(t)};i)) ,
\end{align*}
which leads to our desired statement: 
\begin{align*}
    h(w^{(t+1)};i) = h(w^{(t)} - \eta^{(t)} v^{(t)} ;i) = h(w^{(t)};i) - \eta^{(t)} H_i^{(t)} v^{(t)} + \epsilon_i^{(t)}, 
\end{align*}
where 
\begin{align*}
    \epsilon_{i,j}^{(t)} &= \frac{1}{2} (\eta^{(t)} v^{(t)})^\top M_{i,j}(\tilde{w}^{(t)}) (\eta^{(t)} v^{(t)}) \\
    & \quad + (h_j(w^{(t+1)};i) - k_j(w^{(t+1)};i)) + (k_j(w^{(t)};i) - h_j(w^{(t)};i)), \ j \in [c], 
\end{align*}
Hence we get the final bound:
\begin{align*}
    |\epsilon_{i,j}^{(t)}| &\leq\frac{1}{2} \left| (\eta^{(t)} v^{(t)})^\top M_{i,j}(\tilde{w}^{(t)}) (\eta^{(t)} v^{(t)}) \right|\\
    &\quad +|h_j(w^{(t+1)};i) - k_j(w^{(t+1)};i)| + |k_j(w^{(t)};i) - h_j(w^{(t)};i)|\\
    &\overset{\eqref{eq_approx_eps}}{\leq}\frac{1}{2} \left| (\eta^{(t)} v^{(t)})^\top M_{i,j}(\tilde{w}^{(t)}) (\eta^{(t)} v^{(t)}) \right| + 2 \varepsilon, \\
    &\leq\frac{1}{2} (\eta^{(t)})^2 \| v^{(t)} \|^2 \cdot \| M_{i,j}(\tilde{w}^{(t)}) \| + 2\varepsilon \\
    & \overset{\eqref{bounded_hessian}}{\leq} \frac{1}{2}(\eta^{(t)})^2 \| v^{(t)} \|^2 G + 2\varepsilon,
    \ j \in [c].
\end{align*}
\end{proof}

\subsection*{Proof of Corollary~\ref{cor_slce_phi_convex}}

\begin{proof}
The proof of this corollary follows directly by the applications of Lemmas~\ref{lem_squared_convex_smooth} and \ref{lem_softmax_cross_entropy_convex_smooth}. 
\end{proof}

% ==========================================================
% Proof for Theorem closed-form
% ==========================================================

\section{Technical Proofs for Theorem \ref{thm_main_result_closed_form}}\label{sec_proofs_closed_form}
% \subsection*{Proof of Lemmas}

\begin{lemma}\label{lem_bound_epsilon_closed_form}
Suppose that Assumption \ref{ass_approx_h} holds for $G > 0$ and Assumption \ref{ass_solve_v} holds for $V > 0$, and $v^{(t)} = v_{*\ \text{reg}}^{(t)}$. Consider $\eta^{(t)} = D\sqrt{\varepsilon}$ for some $D > 0$ and $\varepsilon > 0$. For $i \in [n]$ and $0 \leq t < T$, we have
% \begin{align*}
%     \| \epsilon_i^{(t)} \|^2 \leq c (V+2)^2 G^2 D^4 \varepsilon^2. \tagthis \label{eq_lem_bound_epsilon_closed_form}
% \end{align*}
\begin{align*}
    \| \epsilon_i^{(t)} \|^2 \leq \frac{1}{4} c (4+(V+2) G D^2)^2 \varepsilon^2. \tagthis \label{eq_lem_bound_epsilon_closed_form}
\end{align*}
\end{lemma}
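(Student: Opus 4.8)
The plan is to combine the coordinatewise remainder bound of Lemma~\ref{lem_approx_h} with the norm bound on the search direction from Lemma~\ref{lem_stronglyconvex_prob}, and then substitute the prescribed step size $\eta^{(t)} = D\sqrt{\varepsilon}$. No genuinely new idea is needed; the estimate is a chaining of already-established facts.

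First I would observe that, since here $v^{(t)} = v_{*\ \text{reg}}^{(t)}$ is the exact minimizer of the regularized problem~\eqref{opt_prob_L2} and Assumption~\ref{ass_solve_v} holds with bound $V$, Lemma~\ref{lem_stronglyconvex_prob} gives $\|v^{(t)}\|^2 = \|v_{*\ \text{reg}}^{(t)}\|^2 \leq 2 + V$. Next, Lemma~\ref{lem_approx_h} (applicable because Assumption~\ref{ass_approx_h} holds with constant $G$) bounds each coordinate of the Taylor remainder by $|\epsilon_{i,j}^{(t)}| \leq \tfrac{1}{2}(\eta^{(t)})^2 \|v^{(t)}\|^2 G$ for $j \in [c]$. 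Substituting $(\eta^{(t)})^2 = D^2\varepsilon$ together with $\|v^{(t)}\|^2 \leq V + 2$ yields
\[
    |\epsilon_{i,j}^{(t)}| \leq \tfrac{1}{2}(V+2)G D^2 \varepsilon, \qquad j \in [c].
\]

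Squaring and summing over the $c$ coordinates then gives $\|\epsilon_i^{(t)}\|^2 = \sum_{j=1}^c (\epsilon_{i,j}^{(t)})^2 \leq \tfrac{1}{4}c\,((V+2)GD^2)^2\varepsilon^2$, and since $0 < (V+2)GD^2 \leq 4 + (V+2)GD^2$ we conclude $\|\epsilon_i^{(t)}\|^2 \leq \tfrac{1}{4}c\,(4 + (V+2)GD^2)^2\varepsilon^2$, which is exactly~\eqref{eq_lem_bound_epsilon_closed_form}. I do not anticipate a real obstacle here — the main work is routine algebra — but two points merit care: (i) Lemma~\ref{lem_stronglyconvex_prob} must be invoked on $\|v_{*\ \text{reg}}^{(t)}\|^2$, which coincides with $\|v^{(t)}\|^2$ only because this lemma is stated under the hypothesis $v^{(t)} = v_{*\ \text{reg}}^{(t)}$; and (ii) the deliberately loose additive $4$ inside the square is retained so that this bound has the same shape as the analogous estimate needed later for Algorithm~\ref{alg_GD_solved.2}, where the approximation error $\|v^{(t)} - v_{*\ \text{reg}}^{(t)}\| \leq \varepsilon$ and Assumption~\ref{ass_bounded_hessian.2} contribute extra terms.
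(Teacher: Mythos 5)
Your proof is correct and follows essentially the same chain as the paper: the coordinatewise Taylor-remainder bound, the norm bound $\|v_{*\ \text{reg}}^{(t)}\|^2 \le 2+V$ from Lemma~\ref{lem_stronglyconvex_prob}, and the substitution $(\eta^{(t)})^2 = D^2\varepsilon$. There is one small but instructive difference: you invoke Lemma~\ref{lem_approx_h}, which gives $|\epsilon_{i,j}^{(t)}| \le \tfrac12(\eta^{(t)})^2\|v^{(t)}\|^2 G$, obtain the tighter bound $\tfrac14 c\,((V+2)GD^2)^2\varepsilon^2$, and then insert the additive $4$ as pure slack. The paper instead invokes Lemma~\ref{lem_approx_h.2}, whose coordinatewise bound carries the extra term $+2\varepsilon$ coming from Assumption~\ref{ass_approx_h.2} (the case where $h(\cdot;i)$ is only approximated to within $\varepsilon$ by a twice continuously differentiable $k(\cdot;i)$); that $2\varepsilon$ is what produces the $4$ inside the square after dividing by $\tfrac12\varepsilon$. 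So your motive for the $4$ in point~(ii) — carrying the same shape as the Algorithm~\ref{alg_GD_solved.2} bound — is not the paper's actual reason; the $4$ is there to let the same constant cover the approximate-smoothness setting of Lemma~\ref{lem_approx_h.2}, which the appendix declares to be the version actually used throughout the analysis. Under the hypotheses literally written in the lemma (Assumption~\ref{ass_approx_h} only), your derivation is formally the cleaner one, and the final loosening step makes the constants match.
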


\begin{proof}
From \eqref{eq_epsilon}, for $i \in [n]$, $j \in [c]$, and for $0 \leq t < T$, by Lemma~\ref{lem_approx_h} and Lemma \ref{lem_approx_h.2} we have
% \begin{align*}
%     | \epsilon_{i,j}^{(t)} | & = | (\eta^{(t)} v^{(t)})^\top M_{i,j}(\tilde{w}^{(t)}) (\eta^{(t)} v^{(t)}) |  \leq (\eta^{(t)})^2 \| v^{(t)} \|^2 \cdot \| M_{i,j}(\tilde{w}^{(t)}) \| \\
%     & \overset{\eqref{bounded_hessian}}{\leq} (\eta^{(t)})^2 \| v^{(t)} \|^2 G \leq (V+2) G D^2 \varepsilon,  
% \end{align*}
\begin{align*}
    | \epsilon_{i,j}^{(t)} | & \leq  \frac{1}{2}(\eta^{(t)})^2 \| v^{(t)} \|^2 G + 2\varepsilon\leq \frac{1}{2} (V+2) G D^2 \varepsilon + 2\varepsilon = \frac{1}{2}\varepsilon (4+(V+2) G D^2),  
\end{align*}
where the last inequality follows by the fact $ \| v^{(t)} \|^2 = \| v_{*\ \text{reg}}^{(t)} \|^2 \leq 2 + V$ of Lemma~\ref{lem_stronglyconvex_prob} and $\eta^{(t)} = D\sqrt{\varepsilon}$. Hence, 
% \begin{align*}
%     \| \epsilon_i^{(t)} \|^2 = \sum_{j=1}^c | \epsilon_{i,j}^{(t)} |^2 \leq c (V+2)^2 G^2 D^4 \varepsilon^2. 
% \end{align*}
\begin{align*}
    \| \epsilon_i^{(t)} \|^2 = \sum_{j=1}^c | \epsilon_{i,j}^{(t)} |^2 \leq \frac{1}{4} c (4+(V+2) G D^2)^2 \varepsilon^2. 
\end{align*}
\end{proof}

\begin{lemma}\label{lem_main_result_closed_form}

Let $w^{(t)}$ be generated by Algorithm \ref{alg_closed_form} where we use the closed form solution for the search direction.
We execute Algorithm \ref{alg_closed_form} for $T = \frac{\beta}{\varepsilon}$ outer loops for some constant $\beta>0$.
We assume Assumption \ref{ass_phi} holds.
%For $\varepsilon$ and $T$ in Algorithm \ref{alg_closed_form}, 
Suppose that Assumption \ref{ass_approx_h} holds for $G > 0$ and Assumption \ref{ass_solve_v} holds for $V > 0$.
We set the step size equal to $\eta^{(t)} = D\sqrt{\varepsilon}$ for some $D > 0$ and choose a learning rate  
% Let $w^{(t)}$ be generated by Algorithm \ref{alg_closed_form} where we use the closed form solution $v_{*\ \text{reg}}^{(t)}$ for the search direction $v^{(t)}$. Suppose that Assumptions \ref{ass_approx_h} and \ref{ass_solve_v} hold. Consider $\eta^{(t)} = D\sqrt{\varepsilon}$ for some $D > 0$ and $\varepsilon > 0$; 
$\alpha_i^{(t)} \leq \frac{\alpha}{L_{\phi}}$, for some $\alpha \in (0, \frac{1}{3})$. For $i \in [n]$ and $0 \leq t < T$, we have
\begin{align*}
    \| h(w^{(t+1)};i) - h_i^* \|^2 
   & \leq (1 + \varepsilon) \| h(w^{(t)};i) - h_i^* \|^2 - 2 ( 1 - 3 \alpha) \alpha_i^{(t)} [ \phi_i (h(w^{(t)};i)  ) - \phi_i (h_i^* ) ]\\
    & \quad
    + \frac{(3\varepsilon+2)}{4} c (4+(V+2) G D^2)^2 \cdot \varepsilon\\
    & \quad + \frac{3\varepsilon+2}{\varepsilon} \| \eta^{(t)} H_i^{(t)} v_{*\ \text{reg}}^{(t)} - \alpha_i^{(t)} \nabla_z \phi_i (h(w^{(t)};i)) \|^2  \tagthis \label{eq_lem_main_result_closed_form}
\end{align*}

\end{lemma}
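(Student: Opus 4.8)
The plan is to track how the per-sample quantity $\|h(w^{(t+1)};i) - h_i^*\|^2$ evolves under one step of Algorithm~\ref{alg_closed_form}. First I would apply Lemma~\ref{lem_approx_h} (equivalently Lemma~\ref{lem_approx_h.2}) to write $h(w^{(t+1)};i) = h(w^{(t)};i) - \eta^{(t)} H_i^{(t)} v^{(t)} + \epsilon_i^{(t)}$, and since the algorithm sets $v^{(t)} = v_{*\ \text{reg}}^{(t)}$, I would introduce the residual $r_i^{(t)} := \eta^{(t)} H_i^{(t)} v_{*\ \text{reg}}^{(t)} - \alpha_i^{(t)} \nabla_z \phi_i(h(w^{(t)};i))$, so that the update reads $h(w^{(t+1)};i) - h_i^* = \big(z_i^{(t)} - h_i^* - \alpha_i^{(t)} \nabla_z \phi_i(z_i^{(t)})\big) - r_i^{(t)} + \epsilon_i^{(t)}$ with $z_i^{(t)} = h(w^{(t)};i)$. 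The term in parentheses is exactly one gradient-descent step on the convex function $\phi_i$ at $z_i^{(t)}$, which is the whole point of the framework; the other two terms are ``noise''.

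Next I would expand the square $\|h(w^{(t+1)};i) - h_i^*\|^2$. For the gradient-step part $\|z_i^{(t)} - h_i^* - \alpha_i^{(t)}\nabla_z\phi_i(z_i^{(t)})\|^2$ I would use convexity of $\phi_i$, inequality \eqref{eq_phi_convex}, to convert the inner product into $-2\alpha_i^{(t)}[\phi_i(z_i^{(t)}) - \phi_i(h_i^*)]$, leaving a $(\alpha_i^{(t)})^2\|\nabla_z\phi_i(z_i^{(t)})\|^2$ term. The cross terms pairing the \emph{unbounded} quantity $z_i^{(t)} - h_i^*$ against $-r_i^{(t)}+\epsilon_i^{(t)}$ must be split by Young's inequality with an $\varepsilon$-scaled parameter, so that each contributes $\tfrac{\varepsilon}{2}\|z_i^{(t)} - h_i^*\|^2$ plus $\tfrac{2}{\varepsilon}$ times a squared-error term; this is what forces the leading recursion coefficient to be $1+\varepsilon$. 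The cross terms pairing $\alpha_i^{(t)}\nabla_z\phi_i(z_i^{(t)})$ against $-r_i^{(t)}+\epsilon_i^{(t)}$ I would bound by a weighted AM--GM with weight $\alpha/L_\phi$, which produces an additional $\|\nabla_z\phi_i(z_i^{(t)})\|^2$ contribution together with bounded multiples of $\|r_i^{(t)}\|^2$ and $\|\epsilon_i^{(t)}\|^2$; finally $\|{-r_i^{(t)}+\epsilon_i^{(t)}}\|^2 \le 2\|r_i^{(t)}\|^2 + 2\|\epsilon_i^{(t)}\|^2$.

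Then I would collect every $\|\nabla_z\phi_i(z_i^{(t)})\|^2$ term — those of order $(\alpha_i^{(t)})^2$ coming from the square and those of order $\alpha_i^{(t)}\cdot(\alpha/L_\phi)$ coming from the AM--GM — and bound them using Lemma~\ref{lem_smooth_convex} / inequality \eqref{eq_phi_convex_smooth} by $2L_\phi[\phi_i(z_i^{(t)}) - \phi_i(h_i^*)]$. The learning-rate condition $\alpha_i^{(t)} \le \alpha/L_\phi$ then lets these be absorbed into the descent term, turning $-2\alpha_i^{(t)}[\cdots]$ into $-2(1-3\alpha)\alpha_i^{(t)}[\cdots]$; this is exactly where $\alpha \in (0,\tfrac13)$ is needed for the term to stay a genuine decrease. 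Tracking the error coefficients, $\|r_i^{(t)}\|^2$ and $\|\epsilon_i^{(t)}\|^2$ each end up with factor $2 + \tfrac{2}{\varepsilon} + 1 = \tfrac{3\varepsilon+2}{\varepsilon}$; substituting the bound $\|\epsilon_i^{(t)}\|^2 \le \tfrac14 c(4+(V+2)GD^2)^2\varepsilon^2$ from Lemma~\ref{lem_bound_epsilon_closed_form} (which is where $\eta^{(t)}=D\sqrt\varepsilon$ and $\|v_{*\ \text{reg}}^{(t)}\|^2\le 2+V$ from Lemma~\ref{lem_stronglyconvex_prob} enter) while keeping $\|r_i^{(t)}\|^2$ symbolic yields the claimed inequality. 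The symbolic residual will later be averaged over $i$ and controlled via $\tfrac1n\sum_i\|r_i^{(t)}\|^2 \le (2+V)\varepsilon^2$, again Lemma~\ref{lem_stronglyconvex_prob}, to telescope into Theorem~\ref{thm_main_result_closed_form}.

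The main obstacle is the simultaneous bookkeeping of the Young and AM--GM weights: one must choose them so that (i) the coefficient of the unbounded term $\|h(w^{(t)};i) - h_i^*\|^2$ is exactly $1+\varepsilon$, (ii) the descent coefficient remains at least $1-3\alpha$ after the $\|\nabla_z\phi_i\|^2$ terms are reabsorbed, and (iii) the residual coefficient does not exceed $\tfrac{3\varepsilon+2}{\varepsilon}$. The rest is a routine, if lengthy, expansion.
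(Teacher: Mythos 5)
Your proposal is correct and follows essentially the same route as the paper's proof: rewrite the update as a gradient step on $\phi_i$ plus an approximation error $\epsilon_i^{(t)}$ and a regularization residual $r_i^{(t)}$, expand the square, split the cross terms against $h(w^{(t)};i)-h_i^*$ by Young's inequality with $\varepsilon$-dependent weights to get the $1+\varepsilon$ factor, absorb all $\|\nabla_z\phi_i\|^2$ contributions into the descent term via \eqref{eq_phi_convex_smooth} and $\alpha_i^{(t)}\le\alpha/L_\phi$, and plug in the $\epsilon$-bound from Lemma~\ref{lem_bound_epsilon_closed_form}. The only cosmetic difference is that you group $(z-h^*-\alpha\nabla\phi)$ and $(-r+\epsilon)$ into two blocks before expanding, whereas the paper expands all four terms flat and applies unit-weight Young's to the $\nabla\phi$ cross terms, but the resulting coefficients $1+\varepsilon$, $-2(1-3\alpha)\alpha_i^{(t)}$ and $\tfrac{3\varepsilon+2}{\varepsilon}$ come out identically.
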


%\textcolor{red}{[NEW PROOF]}
\begin{proof}
Note that we have the optimal solution $v_{*\ \text{reg}}^{(t)}$ for the optimization problem \eqref{opt_prob_L2} for $0 \leq t < T$. From \eqref{eq_002.3}, we have, for $i \in [n]$,
\begin{align*}
    h(w^{(t+1)};i) & =   h(w^{(t)} - \eta^{(t)} v_{*\ \text{reg}}^{(t)} ;i) \\
    & = h(w^{(t)};i) - \eta^{(t)} H_i^{(t)} v_{*\ \text{reg}}^{(t)} + \epsilon_i^{(t)} \\
    &= h(w^{(t)};i) - \alpha_i^{(t)} \nabla_z \phi_i (h(w^{(t)};i)) + \epsilon_i^{(t)} - [ \eta^{(t)} H_i^{(t)} v_{*\ \text{reg}}^{(t)} - \alpha_i^{(t)} \nabla_z \phi_i (h(w^{(t)};i)) ]. 
\end{align*}

Hence, we have
\allowdisplaybreaks
\begin{align*}
    &\qquad \| h(w^{(t+1)};i) - h_i^* \|^2\\
    &= \| h(w^{(t)};i) - h_i^*  - \alpha_i^{(t)} \nabla_z \phi_i (h(w^{(t)};i))  + \epsilon_i^{(t)}  - [ \eta^{(t)} H_i^{(t)} v_{*\ \text{reg}}^{(t)} - \alpha_i^{(t)} \nabla_z \phi_i (h(w^{(t)};i)) ] \|^2 \\
    &= \| h(w^{(t)};i) - h_i^* \|^2   + ( \alpha_i^{(t)} )^2 \| \nabla_z \phi_i (h(w^{(t)};i)) \|^2  \\ & \quad + \| \epsilon_i^{(t)} \|^2+ \| \eta^{(t)} H_i^{(t)} v_{*\ \text{reg}}^{(t)} - \alpha_i^{(t)} \nabla_z \phi_i (h(w^{(t)};i)) \|^2 \\
    & \quad- 2 \cdot \langle  h(w^{(t)};i) - h_i^* ,\alpha_i^{(t)}  \nabla_z \phi_i (h(w^{(t)};i)) \rangle \\
    & \quad+ 2 \cdot \langle h(w^{(t)};i) - h_i^* , \epsilon_i^{(t)} \rangle  \\
    & \quad- 2 \cdot\langle h(w^{(t)};i) - h_i^* , \eta^{(t)} H_i^{(t)} v_{*\ \text{reg}}^{(t)} - \alpha_i^{(t)} \nabla_z \phi_i (h(w^{(t)};i))  \rangle \\
    & \quad- 2 \cdot \langle \alpha_i^{(t)} \nabla_z \phi_i (h(w^{(t)};i)) , \epsilon_i^{(t)} \rangle \\
    & \quad + 2 \cdot \langle \alpha_i^{(t)} \nabla_z \phi_i (h(w^{(t)};i)), \eta^{(t)} H_i^{(t)} v_{*\ \text{reg}}^{(t)} - \alpha_i^{(t)} \nabla_z \phi_i (h(w^{(t)};i))  \rangle \\
    & \quad - 2 \cdot\langle \epsilon_i^{(t)}, \eta^{(t)} H_i^{(t)} v_{*\ \text{reg}}^{(t)} - \alpha_i^{(t)} \nabla_z \phi_i (h(w^{(t)};i))  \rangle,
\end{align*}
where we expand the square term.
Now applying Young's inequalities: $2| \langle u , v \rangle |  \leq \frac{\| u \|^2}{\varepsilon/2} + (\varepsilon/2) \| v \|^2$ for $\varepsilon > 0$ and $2| \langle u , v \rangle | \leq \| u \|^2 + \| v \|^2$ we have:
\begin{align*}
    &\qquad \| h(w^{(t+1)};i) - h_i^* \|^2\\
    &= \| h(w^{(t)};i) - h_i^* \|^2 %+  \|\eta^{(t)} H_i^{(t)} (v^{(t)} - v_{*\ \text{reg}}^{(t)})  \|^2
    + ( \alpha_i^{(t)} )^2 \| \nabla_z \phi_i (h(w^{(t)};i)) \|^2  \\ & \quad + \| \epsilon_i^{(t)} \|^2+ \| \eta^{(t)} H_i^{(t)} v_{*\ \text{reg}}^{(t)} - \alpha_i^{(t)} \nabla_z \phi_i (h(w^{(t)};i)) \|^2 \\
    %
   %& \quad + \frac{\varepsilon}{3} \| h(w^{(t)};i) - h_i^* \|^2 + \frac{3}{\varepsilon} \| \eta^{(t)} H_i^{(t)} (v^{(t)} - v_{*\ \text{reg}}^{(t)}) \|^2 \\ 
    & \quad - 2 \alpha_i^{(t)} \langle  h(w^{(t)};i) - h_i^* , \nabla_z \phi_i (h(w^{(t)};i)) \rangle\\ 
    & \quad
    + \frac{\varepsilon}{2} \| h(w^{(t)};i) - h_i^* \|^2 +  \frac{2}{\varepsilon} \| \epsilon_i^{(t)} \|^2 \\
    & \quad + \frac{\varepsilon}{2} \| h(w^{(t)};i) - h_i^* \|^2 + \frac{2}{\varepsilon} \| \eta^{(t)} H_i^{(t)} v_{*\ \text{reg}}^{(t)} - \alpha_i^{(t)} \nabla_z \phi_i (h(w^{(t)};i))  \|^2\\
    & \quad %+ 3 ( \eta^{(t)} )^2 \| H_i^{(t)} (v^{(t)} - v_{*\ \text{reg}}^{(t)})  \|^2 
    + 2 ( \alpha_i^{(t)} )^2 \| \nabla_z \phi_i (h(w^{(t)};i)) \|^2 + 2 \| \epsilon_i^{(t)} \|^2 \\
    & \quad + 2 \| \eta^{(t)} H_i^{(t)} v_{*\ \text{reg}}^{(t)} - \alpha_i^{(t)} \nabla_z \phi_i (h(w^{(t)};i)) \|^2 \\
    & \overset{\eqref{eq_phi_convex}}{\leq} (1 + \varepsilon) \| h(w^{(t)};i) - h_i^* \|^2  + 3 ( \alpha_i^{(t)} )^2 \| \nabla_z \phi_i (h(w^{(t)};i)) \|^2\\
    & \quad %+\left( 4+ \frac{3}{\varepsilon} \right)\| \eta^{(t)} H_i^{(t)} (v^{(t)} - v_{*\ \text{reg}}^{(t)}) \|^2
    + \left( 3+ \frac{2}{\varepsilon} \right) \| \epsilon_i^{(t)} \|^2   + \left( 3+ \frac{2}{\varepsilon} \right) \| \eta^{(t)} H_i^{(t)} v_{*\ \text{reg}}^{(t)} - \alpha_i^{(t)} \nabla_z \phi_i (h(w^{(t)};i)) \|^2 \\ 
    & \quad - 2 \alpha_i^{(t)} [ \phi_i (h(w^{(t)};i) ) - \phi_i (h_i^* ) ]. 
\end{align*}

Note that from \eqref{eq_phi_convex_smooth} we get that $\| \nabla_z \phi_i (h(w^{(t)};i)) \|^2 \leq 2 L_{\phi} [ \phi_i (h(w^{(t)};i)) - \phi_i (h_i^* ) ]$. Applying this and using the fact that $\alpha_i^{(t)} \leq \frac{\alpha}{L_{\phi}}$, for some $\alpha \in (0, \frac{1}{3})$, we are able to derive:
\begin{align*}
    &\qquad \| h(w^{(t+1)};i) - h_i^* \|^2 \\
    & \leq (1 + \varepsilon) \| h(w^{(t)};i) - h_i^* \|^2 - 2 ( 1 - 3 \alpha) \alpha_i^{(t)} [ \phi_i (h(w^{(t)};i)  ) - \phi_i (h_i^* ) ] \\
    & \quad %+\frac{4\varepsilon+3}{\varepsilon}\| \eta^{(t)} H_i^{(t)} (v^{(t)} - v_{*\ \text{reg}}^{(t)}) \|^2 
    + \frac{3\varepsilon+2}{\varepsilon}\| \epsilon_i^{(t)} \|^2  + \frac{3\varepsilon+2}{\varepsilon} \| \eta^{(t)} H_i^{(t)} v_{*\ \text{reg}}^{(t)} - \alpha_i^{(t)} \nabla_z \phi_i (h(w^{(t)};i)) \|^2 \\ 
    %
    % & \overset{(a)}{\leq} (1 + \varepsilon) \| h(w^{(t)};i) - h_i^* \|^2 - 2 ( 1 - 4 \alpha) \alpha_i^{(t)} [ \phi_i (h(w^{(t)};i)  ) - \phi_i (h_i^* ) ]\\
    % & \quad +\frac{4\varepsilon+3}{\varepsilon} D^2 H^2 \| v^{(t)} - v_{*\ \text{reg}}^{(t)}  \|^2 + \frac{4\varepsilon+3}{\varepsilon}\| \epsilon_i^{(t)} \|^2  \\
    % & \quad + \frac{4\varepsilon+3}{\varepsilon} \| \eta^{(t)} H_i^{(t)} v_{*\ \text{reg}}^{(t)} - \alpha_i^{(t)} \nabla_z \phi_i (h(w^{(t)};i)) \|^2 \\
    %
    & \leq (1 + \varepsilon) \| h(w^{(t)};i) - h_i^* \|^2 - 2 ( 1 - 3 \alpha) \alpha_i^{(t)} [ \phi_i (h(w^{(t)};i)  ) - \phi_i (h_i^* ) ]\\
    & \quad% +\frac{4\varepsilon+3}{\varepsilon} D^2 H^2 \cdot 2\varepsilon^2 
    + \frac{3\varepsilon+2}{\varepsilon} \frac{1}{4} c (4+(V+2) G D^2)^2 \varepsilon^2  + \frac{3\varepsilon+2}{\varepsilon} \| \eta^{(t)} H_i^{(t)} v_{*\ \text{reg}}^{(t)} - \alpha_i^{(t)} \nabla_z \phi_i (h(w^{(t)};i)) \|^2 
    % %
    % & = (1 + \varepsilon) \| h(w^{(t)};i) - h_i^* \|^2 - 2 ( 1 - 4 \alpha) \alpha_i^{(t)} [ \phi_i (h(w^{(t)};i)  ) - \phi_i (h_i^* ) ]\\
    % & \quad + \varepsilon(4\varepsilon+3) \left[2D^2 H^2 + c (4 \varepsilon^2 + 4 + 2 V)^2 G^2 D^4\right] \\
    % & \quad + \frac{4\varepsilon+3}{\varepsilon} \| \eta^{(t)} H_i^{(t)} v_{*\ \text{reg}}^{(t)} - \alpha_i^{(t)} \nabla_z \phi_i (h(w^{(t)};i)) \|^2 
\end{align*}

where the last inequality follows by Lemma~\ref{lem_bound_epsilon_closed_form}. 
\end{proof}

\begin{lemma}\label{lem_main_result_closed_form_02}
Let $w^{(t)}$ be generated by Algorithm \ref{alg_closed_form} where we use the closed form solution for the search direction.
We execute Algorithm \ref{alg_closed_form} for $T = \frac{\beta}{\varepsilon}$ outer loops for some constant $\beta>0$.
We assume Assumption \ref{ass_phi} holds.
%For $\varepsilon$ and $T$ in Algorithm \ref{alg_closed_form}, 
Suppose that Assumption \ref{ass_approx_h} holds for $G > 0$ and Assumption \ref{ass_solve_v} holds for $V > 0$.
We set the step size equal to $\eta^{(t)} = D\sqrt{\varepsilon}$ for some $D > 0$ and choose a learning rate 
$ \alpha_i^{(t)} = ( 1 + \varepsilon ) \alpha_i^{(t-1)} = ( 1 + \varepsilon )^t \alpha_i^{(0)}$.  Based on $\beta$, we define 
%, \ t = 1,\dots,T  %\tagthis \label{learning_rate_power}
%$
%for $\varepsilon > 0$ with $T = \frac{\beta}{\varepsilon}$ and 
$\alpha_i^{(0)} = \frac{\alpha}{e^{\beta} L_{\phi}}$
%for some constant $\beta > 0$,
with $\alpha \in (0, \frac{1}{3})$.
% Let $w^{(t)}$ be generated by Algorithm \ref{alg_closed_form} where we use the closed form solution $v_{*\ \text{reg}}^{(t)}$ for the search direction $v^{(t)}$.
% Suppose that Assumptions \ref{ass_approx_h} and \ref{ass_solve_v} hold. Consider $\eta^{(t)} = D\sqrt{\varepsilon}$ for some $D > 0$ and $\varepsilon > 0$ and the learning rate satisfies:
% \begin{align*}
%     \alpha_i^{(t)} = ( 1 + \varepsilon ) \alpha_i^{(t-1)} = ( 1 + \varepsilon )^t \alpha_i^{(0)}, \ t = 0,\dots,T -1 %\tagthis \label{learning_rate_power}
% \end{align*}
% for $\varepsilon > 0$ with $T = \frac{\beta}{\varepsilon}$ and $\alpha_i^{(0)} = \frac{\alpha}{e^{\beta} L_{\phi}}$ for some constant $\beta > 0$, $\alpha \in (0, \frac{1}{4})$.
We have
\allowdisplaybreaks
\begin{align*}
    \frac{1}{T} \sum_{t=0}^{T-1} \frac{1}{n} \sum_{i=1}^n [ f (w^{(t)}; i) - \phi_i (h_i^* )  ] &\leq \frac{e^{\beta} L_{\phi}( 1 + \varepsilon )}{2 (1-3\alpha) \alpha \beta}  \cdot  \frac{1}{n} \sum_{i=1}^n  \| h(w^{(0)};i) - h_i^* \|^2 \cdot \varepsilon\\
    & \quad + \frac{e^{\beta} L_{\phi}}{8\alpha(1-3\alpha)}  (3\varepsilon+2) \left[ c (4+(V+2) G D^2)^2  +8 + 4V\right]\cdot \varepsilon . \tagthis \label{eq_lem_main_result_cf_02}
\end{align*}
\end{lemma}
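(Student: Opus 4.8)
The plan is to average the one-step inequality \eqref{eq_lem_main_result_closed_form} of Lemma~\ref{lem_main_result_closed_form} over the data index, collapse all the $O(\varepsilon)$ error terms using the residual bound of Lemma~\ref{lem_stronglyconvex_prob}, and then telescope over $t$ with weights that track the geometric growth of the learning rate $\alpha_i^{(t)}$.

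First I would observe that $\alpha_i^{(t)}=(1+\varepsilon)^t\alpha_i^{(0)}$ with $\alpha_i^{(0)}=\frac{\alpha}{e^\beta L_\phi}$ is independent of $i$; write $\alpha^{(t)}$ for this common value, $A_{i,t}=\|h(w^{(t)};i)-h_i^*\|^2$, and recall $f(w^{(t)};i)=\phi_i(h(w^{(t)};i))$. Averaging \eqref{eq_lem_main_result_closed_form} over $i\in[n]$ and applying Lemma~\ref{lem_stronglyconvex_prob}, which yields $\frac1n\sum_i\|\eta^{(t)}H_i^{(t)}v_{*\ \text{reg}}^{(t)}-\alpha_i^{(t)}\nabla_z\phi_i(h(w^{(t)};i))\|^2\le(2+V)\varepsilon^2$, the error contributions merge into a single constant $C\varepsilon$ with $C=\frac{3\varepsilon+2}{4}\big[c(4+(V+2)GD^2)^2+8+4V\big]$, so that
$$\frac1n\sum_iA_{i,t+1}\le(1+\varepsilon)\frac1n\sum_iA_{i,t}-2(1-3\alpha)\alpha^{(t)}\Big(\frac1n\sum_i\big[f(w^{(t)};i)-\phi_i(h_i^*)\big]\Big)+C\varepsilon.$$
Before using this I would verify the hypothesis $\alpha_i^{(t)}\le\alpha/L_\phi$ of Lemma~\ref{lem_main_result_closed_form}: since $(1+\varepsilon)^{1/\varepsilon}\le e$ and $t<T=\beta/\varepsilon$, we get $\alpha^{(t)}\le(1+\varepsilon)^{\beta/\varepsilon}\alpha^{(0)}\le e^\beta\cdot\frac{\alpha}{e^\beta L_\phi}=\frac{\alpha}{L_\phi}$ with $\alpha\in(0,\tfrac13)$.

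Next I would rearrange to isolate the objective gap, divide by $2(1-3\alpha)\alpha^{(t)}$, and telescope. The key point is that with $\gamma_t:=\frac{\alpha^{(t)}}{1+\varepsilon}=(1+\varepsilon)^{t-1}\alpha^{(0)}$ one has $\gamma_{t+1}=\alpha^{(t)}$, hence, writing $a_t=\frac1n\sum_iA_{i,t}$,
$$\frac{(1+\varepsilon)a_t-a_{t+1}}{\alpha^{(t)}}=\frac{a_t}{\gamma_t}-\frac{a_{t+1}}{\gamma_{t+1}},$$
so summing over $t=0,\dots,T-1$ collapses the distance part to $\frac{a_0}{\gamma_0}-\frac{a_T}{\gamma_T}\le\frac{(1+\varepsilon)}{\alpha^{(0)}}a_0$ after dropping the nonnegative final term. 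For the error part I would simply use $\alpha^{(t)}\ge\alpha^{(0)}$, which gives $\sum_{t=0}^{T-1}\frac{C\varepsilon}{2(1-3\alpha)\alpha^{(t)}}\le\frac{T\,C\varepsilon}{2(1-3\alpha)\alpha^{(0)}}=\frac{\beta C}{2(1-3\alpha)\alpha^{(0)}}$ since $T\varepsilon=\beta$.

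Finally, dividing the accumulated inequality by $T=\beta/\varepsilon$ and substituting $\frac{1}{\alpha^{(0)}}=\frac{e^\beta L_\phi}{\alpha}$ reproduces exactly the right-hand side of \eqref{eq_lem_main_result_cf_02}: the term $\frac{e^\beta L_\phi(1+\varepsilon)}{2(1-3\alpha)\alpha\beta}\cdot\frac1n\sum_i\|h(w^{(0)};i)-h_i^*\|^2\cdot\varepsilon$ from the telescoped distances, and $\frac{e^\beta L_\phi}{2(1-3\alpha)\alpha}\,C\varepsilon=\frac{e^\beta L_\phi(3\varepsilon+2)}{8\alpha(1-3\alpha)}\big[c(4+(V+2)GD^2)^2+8+4V\big]\varepsilon$ from the errors. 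I expect no genuine obstacle beyond careful bookkeeping; the one step that must not be fudged is telescoping with the weights $\gamma_t$ exactly rather than crudely bounding $(1+\varepsilon)^t$ by a constant in the distance term, since the loose route would lose the clean $1/\beta$ dependence, together with checking that the growing $\alpha^{(t)}$ never exceeds $\alpha/L_\phi$.
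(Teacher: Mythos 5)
Your proposal is correct and follows essentially the same route as the paper's own proof: average the one-step inequality from Lemma~\ref{lem_main_result_closed_form}, absorb the residual term via Lemma~\ref{lem_stronglyconvex_prob}'s $(2+V)\varepsilon^2$ bound, telescope the distance terms using the identity $\tfrac{1}{\alpha_i^{(t)}} = \tfrac{1+\varepsilon}{\alpha_i^{(t+1)}}$ (your $\gamma_t$ reparametrization is exactly this), bound the error terms crudely by $1/\alpha_i^{(t)}\le 1/\alpha_i^{(0)}=e^\beta L_\phi/\alpha$, and divide by $T=\beta/\varepsilon$. The only cosmetic difference is that you average over $i$ first and then over $t$ while the paper does the reverse; the constants match.
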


\begin{proof}
Rearranging the terms in Lemma~\ref{lem_main_result_closed_form},
we have
\allowdisplaybreaks
\begin{align*}
     \phi_i (h(w^{(t)};i) ) - \phi_i (h_i^* ) & \leq \frac{1}{2 (1-3\alpha)}\left( \frac{(1 + \varepsilon)}{\alpha_i^{(t)}} \| h(w^{(t)};i) - h_i^* \|^2 - \frac{1}{\alpha_i^{(t)}} \| h(w^{(t+1)};i) - h_i^* \|^2 \right) \\
    & \quad + \frac{1}{8(1-3\alpha)} \cdot \frac{1}{\alpha_i^{(t)}} \cdot \varepsilon(3\varepsilon+2)  c (4+(V+2) G D^2)^2  \\
    & \quad
    + \frac{1}{2(1-3\alpha)} \cdot \frac{1}{\alpha_i^{(t)}} \cdot\frac{3\varepsilon+2}{\varepsilon} \| \eta^{(t)} H_i^{(t)} v_{*\ \text{reg}}^{(t)} - \alpha_i^{(t)} \nabla_z \phi_i (h(w^{(t)};i)) \|^2   \\
    & \leq \frac{1}{2 (1-3\alpha)}\left( \frac{(1 + \varepsilon)}{\alpha_i^{(t)}}  \| h(w^{(t)};i) - h_i^* \|^2 - \frac{(1 + \varepsilon)}{\alpha_i^{(t+1)}} \| h(w^{(t+1)};i) - h_i^* \|^2 \right) \\
    & \quad + \frac{e^{\beta} L_{\phi}}{8\alpha(1-3\alpha)}  \cdot \varepsilon(3\varepsilon+2)  c (4+(V+2) G D^2)^2  \\
    & \quad
    + \frac{e^{\beta} L_{\phi}}{2\alpha(1-3\alpha)}  \cdot\frac{3\varepsilon+2}{\varepsilon} \| \eta^{(t)} H_i^{(t)} v_{*\ \text{reg}}^{(t)} - \alpha_i^{(t)} \nabla_z \phi_i (h(w^{(t)};i)) \|^2 .
    %\tagthis \label{eq_0001_new}
\end{align*}
The last inequality follows because the learning rate satisfies $\alpha_i^{(0)} = \frac{\alpha}{e^{\beta} L_{\phi}} \leq \frac{\alpha}{L_{\phi}}$ and for $t = 1,\dots,T = \frac{\beta}{\varepsilon}$ for some $\beta > 0$
\begin{align*}
    \alpha_i^{(t)} = ( 1 + \varepsilon ) \alpha_i^{(t-1)} = ( 1 + \varepsilon )^t \alpha_i^{(0)} \leq ( 1 + \varepsilon )^T \alpha_i^{(0)} = ( 1 + \varepsilon )^{\beta/\varepsilon} \frac{\alpha}{e^{\beta} L_{\phi}} \leq \frac{\alpha}{L_{\phi}}, 
\end{align*}
since $(1 + x)^{1/x} \leq e$, $x > 0$. Moreover, we have $\frac{1}{\alpha_i^{(t)}} \leq \frac{1}{\alpha_i^{(0)}} = \frac{e^{\beta} L_{\phi}}{\alpha}$, $t = 0,\dots,T-1$. 

Taking the average sum from $t = 0,\dots,T-1$, we have
\begin{align*}
    \frac{1}{T} \sum_{t=0}^{T-1} [\phi_i (h(w^{(t)};i) ) - \phi_i (h_i^* ) ] & \leq \frac{1}{2 (1-3\alpha) T} \cdot \frac{( 1 + \varepsilon ) }{\alpha_i^{(0)}} \| h(w^{(0)};i) - h_i^* \|^2 \\
    & \quad + \frac{e^{\beta} L_{\phi}}{8\alpha(1-3\alpha)}  \cdot \varepsilon(3\varepsilon+2)  c (4+(V+2) G D^2)^2  \\
    & \quad
    + \frac{e^{\beta} L_{\phi}}{2\alpha(1-3\alpha)}  \cdot\frac{3\varepsilon+2}{\varepsilon} \frac{1}{T} \sum_{t=0}^{T-1} \| \eta^{(t)} H_i^{(t)} v_{*\ \text{reg}}^{(t)} - \alpha_i^{(t)} \nabla_z \phi_i (h(w^{(t)};i)) \|^2 \\
    & = \frac{e^{\beta} L_{\phi}( 1 + \varepsilon )}{2 (1-3\alpha) \alpha \beta} \varepsilon \cdot \| h(w^{(0)};i) - h_i^* \|^2 \\
    & \quad + \frac{e^{\beta} L_{\phi}}{8\alpha(1-3\alpha)}  \cdot \varepsilon(3\varepsilon+2) c (4+(V+2) G D^2)^2  \\
    & \quad
    + \frac{e^{\beta} L_{\phi}}{2\alpha(1-3\alpha)}  \cdot\frac{3\varepsilon+2}{\varepsilon} \frac{1}{T} \sum_{t=0}^{T-1} \| \eta^{(t)} H_i^{(t)} v_{*\ \text{reg}}^{(t)} - \alpha_i^{(t)} \nabla_z \phi_i (h(w^{(t)};i)) \|^2.
\end{align*}
Taking the average sum from $i = 1, \dots, n$, we have
\begin{align*}
    & \frac{1}{T} \sum_{t=0}^{T-1} \frac{1}{n} \sum_{i=1}^n [ \phi_i (h(w^{(t)};i) ) - \phi_i (h_i^* ) ]  \\ 
    & \leq \frac{e^{\beta} L_{\phi}( 1 + \varepsilon )}{2 (1-3\alpha) \alpha \beta} \varepsilon \cdot  \frac{1}{n} \sum_{i=1}^n  \| h(w^{(0)};i) - h_i^* \|^2 \\
    & \quad + \frac{e^{\beta} L_{\phi}}{8\alpha(1-3\alpha)}  \cdot \varepsilon(3\varepsilon+2)  c (4+(V+2) G D^2)^2  \\
    & \quad
    + \frac{e^{\beta} L_{\phi}}{2\alpha(1-3\alpha)}  \cdot\frac{3\varepsilon+2}{\varepsilon} \frac{1}{T} \sum_{t=0}^{T-1} \frac{1}{n} \sum_{i=1}^n  \| \eta^{(t)} H_i^{(t)} v_{*\ \text{reg}}^{(t)} - \alpha_i^{(t)} \nabla_z \phi_i (h(w^{(t)};i)) \|^2\\
    & \overset{\eqref{eq_lem_stronglyconvex_prob}}{\leq}  \frac{e^{\beta} L_{\phi}( 1 + \varepsilon )}{2 (1-3\alpha) \alpha \beta} \varepsilon \cdot  \frac{1}{n} \sum_{i=1}^n  \| h(w^{(0)};i) - h_i^* \|^2 \\
    & \quad + \frac{e^{\beta} L_{\phi}}{8\alpha(1-3\alpha)}  \cdot \varepsilon(3\varepsilon+2)  c (4+(V+2) G D^2)^2  \\
    & \quad
    + \frac{e^{\beta} L_{\phi}}{2\alpha(1-3\alpha)}  \cdot\frac{3\varepsilon+2}{\varepsilon} (2+V) \varepsilon^2. \tagthis \label{eq_0002_cf_new}
\end{align*}

Note that
\begin{align*}
     \frac{1}{T} \sum_{t=0}^{T-1} \frac{1}{n} \sum_{i=1}^n [ \phi_i (h(w^{(t)};i) ) - \phi_i (h_i^* ) ]  = \frac{1}{T} \sum_{t=0}^{T-1} \frac{1}{n} \sum_{i=1}^n [ f (w^{(t)}; i) -\phi_i (h_i^* ) ]. \tagthis \label{eq_average_phi_cf_02}
\end{align*}

Therefore, applying \eqref{eq_average_phi_cf_02} to \eqref{eq_0002_cf_new}, we have
\begin{align*}
    \frac{1}{T} \sum_{t=0}^{T-1} \frac{1}{n} \sum_{i=1}^n [ f (w^{(t)}; i) - \phi_i (h_i^* )  ] 
    & \leq  \frac{e^{\beta} L_{\phi}( 1 + \varepsilon )}{2 (1-3\alpha) \alpha \beta}  \cdot  \frac{1}{n} \sum_{i=1}^n  \| h(w^{(0)};i) - h_i^* \|^2 \cdot \varepsilon\\
    & \quad + \frac{e^{\beta} L_{\phi}}{8\alpha(1-3\alpha)}  (3\varepsilon+2) \left[ c (4+(V+2) G D^2)^2  +8 + 4V\right]\cdot \varepsilon . %\tagthis \label{eq_0002_new}
\end{align*}
which is our desired result. 
\end{proof}

\subsection*{Proof of Theorem~\ref{thm_main_result_closed_form}}

\begin{proof}
We have 
\begin{align*}
    F_* &= \min_{w \in \mathbb{R}^d} F ( w )  = \min_{w \in \mathbb{R}^d} \left(\frac{1}{n}\sum_{i=1}^n f_i (w) \right)= \frac{1}{n} \min_{w \in \mathbb{R}^d} \left(\sum_{i=1}^n f_i (w) \right) \\
    &\geq \frac{1}{n} \sum_{i=1}^n \min_{w \in \mathbb{R}^d} \left( f_i (w) \right) = \frac{1}{n} \sum_{i=1}^n f_i^* \geq \frac{1}{n} \sum_{i=1}^n \phi_i (h_i^* ). \tagthis \label{eq_loss_great_01}
\end{align*}
Hence $F_* - \frac{1}{n} \sum_{i=1}^n \phi_i (h_i^* ) \geq 0$. Therefore
\allowdisplaybreaks
\begin{align*}
    \frac{1}{T} \sum_{t=0}^{T-1} [ F (w^{(t)}) - F_* ] & = \frac{1}{T} \sum_{t=0}^{T-1} \left( \frac{1}{n} \sum_{i=1}^n [ f (w^{(t)}; i) - \phi_i (h_i^* )  ] - \left[ F_* - \frac{1}{n} \sum_{i=1}^{n} \phi_i (h_i^* )  \right] \right) \\
    & \leq \frac{1}{T} \sum_{t=0}^{T-1} \frac{1}{n} \sum_{i=1}^n [ f (w^{(t)}; i) - \phi_i (h_i^* )  ] \\% + P \varepsilon \\
    & \overset{\eqref{eq_lem_main_result_cf_02}%,\eqref{eq_ass_diff_optimal_eps}
    }{\leq}  \frac{e^{\beta} L_{\phi}( 1 + \varepsilon )}{2 (1-3\alpha) \alpha \beta} \cdot  \frac{1}{n} \sum_{i=1}^n  \| h(w^{(0)};i) - h_i^* \|^2 \cdot \varepsilon \\% + P \varepsilon \\
    & \quad + \frac{e^{\beta} L_{\phi}(3\varepsilon+2)}{8\alpha(1-3\alpha)}  \left[ c (4+(V+2) G D^2)^2  +8 + 4V\right] \cdot \varepsilon. 
\end{align*}
\end{proof}

% ==========================================================
% Proof for Theorem GD_solved.2
% ==========================================================

\section{Technical Proofs for Theorem \ref{thm_main_result_GD_solved.3}}\label{sec_proofs_GD_solved.3}

\begin{lemma}\label{lem_bound_v.3}
For $0 \leq t < T$, suppose that Assumption  \ref{ass_solve_v} holds for $V \geq 0$ and $v^{(t)}$ satisfies \eqref{criteria_eps_sol.3}. Then 
\begin{align*}
    \| v^{(t)} \|^2 \leq 2(\varepsilon^2 + V+2). 
\end{align*}
\end{lemma}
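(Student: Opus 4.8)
The plan is to bound $\|v^{(t)}\|$ by comparing it to the regularized minimizer $v_{*\ \text{reg}}^{(t)}$ and then invoke the bound on $\|v_{*\ \text{reg}}^{(t)}\|$ already established. First I would recall that, since Assumption~\ref{ass_solve_v} holds for $V\geq 0$, Lemma~\ref{lem_stronglyconvex_prob} gives $\|v_{*\ \text{reg}}^{(t)}\|^2 \leq 2 + V$ for every $0\leq t<T$, hence $\|v_{*\ \text{reg}}^{(t)}\| \leq \sqrt{2+V}$.

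Next I would use the defining property \eqref{criteria_eps_sol.3} of the iterate produced by Gradient Descent, namely $\|v^{(t)} - v_{*\ \text{reg}}^{(t)}\| \leq \varepsilon$, together with the triangle inequality, to write
\begin{align*}
    \|v^{(t)}\| \leq \|v^{(t)} - v_{*\ \text{reg}}^{(t)}\| + \|v_{*\ \text{reg}}^{(t)}\| \leq \varepsilon + \sqrt{2+V}.
\end{align*}
Squaring and applying the elementary inequality $(a+b)^2 \leq 2a^2 + 2b^2$ then yields
\begin{align*}
    \|v^{(t)}\|^2 \leq \left(\varepsilon + \sqrt{2+V}\right)^2 \leq 2\varepsilon^2 + 2(2+V) = 2(\varepsilon^2 + V + 2),
\end{align*}
which is exactly the claimed bound.

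There is no real obstacle here: the statement is a direct consequence of the triangle inequality combined with Lemma~\ref{lem_stronglyconvex_prob}, and the only minor choice is whether to keep the tighter bound $(\varepsilon+\sqrt{2+V})^2$ or relax it via $(a+b)^2\leq 2a^2+2b^2$ to obtain the cleaner form stated in the lemma. I would present the relaxed form since that is what is used downstream in the analysis of Algorithm~\ref{alg_GD_solved.2}.
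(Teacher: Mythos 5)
Your proposal is correct and is essentially the paper's proof: the paper applies the single inequality $\|a\|^2 \leq 2\|a-b\|^2 + 2\|b\|^2$ directly, whereas you pass through the triangle inequality and then square with $(a+b)^2 \leq 2a^2+2b^2$, which is the same estimate unfolded in two steps. Both invoke Lemma~\ref{lem_stronglyconvex_prob} and \eqref{criteria_eps_sol.3} identically.
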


\begin{proof} From $\| v^{(t)} - v_{*\ \text{reg}}^{(t)} \| \leq \varepsilon$.
Using $\| a \|^2 \leq 2\| a - b \|^2+2 \| b \|^2 $, we have
\begin{align*}
    \| v^{(t)} \|^2 \leq 2\| v^{(t)} - v_{*\ \text{reg}}^{(t)} \|^2  + 2 \| v_{*\ \text{reg}}^{(t)}\|^2 \overset{\eqref{criteria_eps_sol.3}}{\leq} 2\varepsilon^2  + 4 + 2 V. 
\end{align*}
where the last inequality follows since $\| v_{*\ \text{reg}}^{(t)} \|^2 \leq 2+ V$ for some $V > 0$ in Lemma~\ref{lem_stronglyconvex_prob}.

\end{proof}

\begin{lemma}\label{lem_bound_epsilon.3}
Suppose that Assumption \ref{ass_approx_h} holds for $G > 0$ and Assumption \ref{ass_solve_v} holds for $V > 0$. Consider $\eta^{(t)} = D\sqrt{\varepsilon}$ for some $D > 0$ and $\varepsilon > 0$. For $i \in [n]$ and $0 \leq t < T$, we have
\begin{align*}
    \| \epsilon_i^{(t)} \|^2 \leq c (2+(V+\varepsilon^2 +2) G D^2)^2\varepsilon^2  .  \tagthis \label{eq_lem_bound_epsilon.2}
\end{align*}
\end{lemma}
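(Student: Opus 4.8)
The plan is to mirror the proof of Lemma~\ref{lem_bound_epsilon_closed_form} (the analogous error bound for the closed-form algorithm), but now substituting the coarser bound on $\|v^{(t)}\|^2$ that is available when $v^{(t)}$ only \emph{approximately} solves the regularized subproblem, i.e. when $\|v^{(t)} - v_{*\ \text{reg}}^{(t)}\| \le \varepsilon$ as in Algorithm~\ref{alg_GD_solved.2}.

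First I would recall the componentwise Taylor-remainder bound: by Lemma~\ref{lem_approx_h} — or, equivalently, its generalization Lemma~\ref{lem_approx_h.2}, which is the version invoked elsewhere in the analysis and which contributes an extra harmless additive $2\varepsilon$ — for every $i \in [n]$, $j \in [c]$ and $0 \le t < T$ we have $|\epsilon_{i,j}^{(t)}| \le \tfrac12 (\eta^{(t)})^2 \|v^{(t)}\|^2 G + 2\varepsilon$. Next I would plug in the step size $\eta^{(t)} = D\sqrt{\varepsilon}$, so $(\eta^{(t)})^2 = D^2 \varepsilon$, together with the search-direction bound from Lemma~\ref{lem_bound_v.3}, namely $\|v^{(t)}\|^2 \le 2(\varepsilon^2 + V + 2)$. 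This yields
\[
|\epsilon_{i,j}^{(t)}| \;\le\; \tfrac12 D^2 \varepsilon \cdot 2(\varepsilon^2 + V + 2) G + 2\varepsilon \;=\; (V + \varepsilon^2 + 2)\,G D^2\, \varepsilon + 2\varepsilon \;=\; \varepsilon\big(2 + (V+\varepsilon^2+2)GD^2\big).
\]

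Finally, squaring this coordinate bound and summing over the $c$ entries of $\epsilon_i^{(t)}$ gives $\|\epsilon_i^{(t)}\|^2 = \sum_{j=1}^c |\epsilon_{i,j}^{(t)}|^2 \le c\,\varepsilon^2 \big(2 + (V+\varepsilon^2+2)GD^2\big)^2$, which is exactly \eqref{eq_lem_bound_epsilon.2}. There is no genuine obstacle here; the only points requiring care are bookkeeping ones: using the generalized Taylor lemma (Lemma~\ref{lem_approx_h.2}) so that the $+2\varepsilon$ term is accounted for, and using the \emph{approximate}-solution norm bound (Lemma~\ref{lem_bound_v.3}) in place of the exact-solution bound $\|v_{*\ \text{reg}}^{(t)}\|^2 \le 2 + V$ of Lemma~\ref{lem_stronglyconvex_prob}, which is what produces the $\varepsilon^2$ inside the $(V+\varepsilon^2+2)$ factor.
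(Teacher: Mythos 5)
Your proof is correct and matches the paper's own argument step for step: invoke the Taylor-remainder bound from Lemma~\ref{lem_approx_h.2} (which supplies the $+2\varepsilon$ term), substitute $\eta^{(t)} = D\sqrt{\varepsilon}$ and the approximate-solution norm bound $\|v^{(t)}\|^2 \leq 2(\varepsilon^2 + V + 2)$ from Lemma~\ref{lem_bound_v.3}, then square and sum over the $c$ coordinates. Your closing remark correctly identifies the only real bookkeeping difference from the closed-form Lemma~\ref{lem_bound_epsilon_closed_form}, namely replacing the exact-solution bound $\|v_{*\ \text{reg}}^{(t)}\|^2 \leq 2+V$ with the coarser bound for the GD-approximate $v^{(t)}$, which is what introduces the $\varepsilon^2$ inside the parenthesis.
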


\begin{proof}
From \eqref{eq_epsilon}, for $i \in [n]$, $j \in [c]$, and for $0 \leq t < T$, by Lemma~\ref{lem_approx_h} and Lemma \ref{lem_approx_h.2} we have
\begin{align*}
    | \epsilon_{i,j}^{(t)} | & \leq  \frac{1}{2}(\eta^{(t)})^2 \| v^{(t)} \|^2 G + 2\varepsilon \leq \frac{1}{2}2( \varepsilon^2 + V+2) G D^2 \varepsilon + 2\varepsilon = \varepsilon (2+(V+\varepsilon^2 +2) G D^2) ,  
\end{align*}
where the last inequality follows by the application of Lemma~\ref{lem_bound_v.3} and $\eta^{(t)} = D \sqrt {\varepsilon}$. Hence, 
\begin{align*}
    \| \epsilon_i^{(t)} \|^2 = \sum_{j=1}^c | \epsilon_{i,j}^{(t)} |^2 \leq c (2+(V+\varepsilon^2 +2) G D^2)^2\varepsilon^2  . 
\end{align*}
\end{proof}

\begin{lemma}\label{lem_main_result_01_new.3}
Let $w^{(t)}$ be generated by Algorithm \ref{alg_GD_solved.2}
where $v^{(t)}$ satisfies \eqref{criteria_eps_sol.3}.
 We execute Algorithm \ref{alg_GD_solved.2} for $T = \frac{\beta}{\varepsilon}$ outer loops for some constant $\beta>0$.
 %Suppose that Assumptions \ref{ass_approx_h} and \ref{ass_solve_v} hold, and that
We assume Assumption \ref{ass_phi} holds.
%For $\varepsilon$ and $T$ in Algorithm \ref{alg_GD_solved.2}, 
Suppose that Assumption \ref{ass_approx_h} holds for $G > 0$, Assumption \ref{ass_solve_v} holds for $V > 0$
and Assumption \ref{ass_bounded_hessian.2} holds for $H > 0$. %; we assume $\phi_i$ is convex and $L_{\phi}$-smooth, $i \in [n]$. %and  Assumption~\ref{ass_diff_optimal_eps} hold. 
We set the step size equal to $\eta^{(t)} = D\sqrt{\varepsilon}$ for some $D > 0$ and choose a learning rate 
%Suppose that Assumptions \ref{ass_approx_h}, \ref{ass_solve_v} and \ref{ass_bounded_hessian.2} hold. Additionally we assume that $v^{(t)}$ satisfies \eqref{criteria_eps_sol.3}. Consider $\eta^{(t)} = D\sqrt{\varepsilon}$ for some $\varepsilon > 0$;
$\alpha_i^{(t)} \leq \frac{\alpha}{L_{\phi}}$, for some $\alpha \in (0, \frac{1}{4})$. For $i \in [n]$ and $0 \leq t < T$, we have

\begin{align*}
    \| h(w^{(t+1)};i) - h_i^* \|^2 &\leq  (1 + \varepsilon) \| h(w^{(t)};i) - h_i^* \|^2 - 2 ( 1 - 4 \alpha) \alpha_i^{(t)} [ \phi_i (h(w^{(t)};i)  ) - \phi_i (h_i^* ) ]\\
    & \quad + \varepsilon(4\varepsilon+3)\left[D^2H^2 + c (2+(V+\varepsilon^2 +2) G D^2)^2\right]\\
    & \quad + \frac{4\varepsilon+3}{\varepsilon} \| \eta^{(t)} H_i^{(t)} v_{*\ \text{reg}}^{(t)} - \alpha_i^{(t)} \nabla_z \phi_i (h(w^{(t)};i)) \|^2 \tagthis \label{eq_lem_main_result_01_new.2}
\end{align*}
\end{lemma}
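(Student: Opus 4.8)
The plan is to follow the proof of Lemma~\ref{lem_main_result_closed_form} almost verbatim, introducing exactly one additional error vector to account for the fact that $v^{(t)}$ is now only an $\varepsilon$-approximation of $v_{*\ \text{reg}}^{(t)}$ rather than equal to it. First I would apply Lemma~\ref{lem_approx_h} (together with its generalization Lemma~\ref{lem_approx_h.2}) to write $h(w^{(t+1)};i) = h(w^{(t)};i) - \eta^{(t)} H_i^{(t)} v^{(t)} + \epsilon_i^{(t)}$, and then rewrite the update as the ideal gradient step plus three error terms:
\begin{align*}
 h(w^{(t+1)};i) - h_i^* &= \bigl[h(w^{(t)};i) - h_i^* - \alpha_i^{(t)}\nabla_z\phi_i(h(w^{(t)};i))\bigr] + \epsilon_i^{(t)} \\
 &\quad - \eta^{(t)} H_i^{(t)}\bigl(v^{(t)} - v_{*\ \text{reg}}^{(t)}\bigr) - \bigl[\eta^{(t)} H_i^{(t)} v_{*\ \text{reg}}^{(t)} - \alpha_i^{(t)}\nabla_z\phi_i(h(w^{(t)};i))\bigr].
\end{align*}
The only genuinely new quantity relative to the closed-form argument is $\delta_i^{(t)} := \eta^{(t)} H_i^{(t)}(v^{(t)} - v_{*\ \text{reg}}^{(t)})$, which by Assumption~\ref{ass_bounded_hessian.2}, criterion~\eqref{criteria_eps_sol.3} and $\eta^{(t)} = D\sqrt{\varepsilon}$ satisfies $\|\delta_i^{(t)}\|^2 \le (\eta^{(t)})^2\|H_i^{(t)}\|^2\|v^{(t)} - v_{*\ \text{reg}}^{(t)}\|^2 \le D^2 H^2 \varepsilon^2$.

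Next I would square this identity and expand into the five squared norms and ten inner products. The inner product $-2\alpha_i^{(t)}\langle h(w^{(t)};i) - h_i^*, \nabla_z\phi_i(h(w^{(t)};i))\rangle$ is kept intact and bounded by $-2\alpha_i^{(t)}[\phi_i(h(w^{(t)};i)) - \phi_i(h_i^*)]$ via convexity of $\phi_i$ (Definition~\ref{defn_convex}). For the three cross terms pairing $h(w^{(t)};i) - h_i^*$ with each of the three error vectors $\epsilon_i^{(t)}$, $\delta_i^{(t)}$, and the residual $\eta^{(t)} H_i^{(t)} v_{*\ \text{reg}}^{(t)} - \alpha_i^{(t)}\nabla_z\phi_i$, I would apply the asymmetric Young inequality $2|\langle u,v\rangle| \le \tfrac{\varepsilon}{3}\|u\|^2 + \tfrac{3}{\varepsilon}\|v\|^2$, so that they accumulate exactly $\varepsilon\|h(w^{(t)};i) - h_i^*\|^2$; for all remaining cross terms (those among $\nabla_z\phi_i(h(w^{(t)};i))$, $\epsilon_i^{(t)}$, $\delta_i^{(t)}$ and the residual) I would use $2|\langle u,v\rangle| \le \|u\|^2 + \|v\|^2$. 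Tallying coefficients then gives $\|h(w^{(t)};i) - h_i^*\|^2$ a factor $1+\varepsilon$, $\|\nabla_z\phi_i(h(w^{(t)};i))\|^2$ a factor $4(\alpha_i^{(t)})^2$, and each of $\|\epsilon_i^{(t)}\|^2$, $\|\delta_i^{(t)}\|^2$, $\|\eta^{(t)} H_i^{(t)} v_{*\ \text{reg}}^{(t)} - \alpha_i^{(t)}\nabla_z\phi_i\|^2$ a factor $4 + \tfrac{3}{\varepsilon}$.

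Finally I would apply $\|\nabla_z\phi_i(h(w^{(t)};i))\|^2 \le 2L_\phi[\phi_i(h(w^{(t)};i)) - \phi_i(h_i^*)]$ from~\eqref{eq_phi_convex_smooth} together with $\alpha_i^{(t)} \le \alpha/L_\phi$, so that $4(\alpha_i^{(t)})^2\|\nabla_z\phi_i\|^2 \le 8\alpha\,\alpha_i^{(t)}[\phi_i(h(w^{(t)};i)) - \phi_i(h_i^*)]$, which combined with the $-2\alpha_i^{(t)}[\cdots]$ term produces the coefficient $-2(1-4\alpha)\alpha_i^{(t)}$ (the hypothesis $\alpha<1/4$ is exactly what keeps this negative). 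Substituting the bound $\|\epsilon_i^{(t)}\|^2 \le c(2+(V+\varepsilon^2+2)GD^2)^2\varepsilon^2$ from Lemma~\ref{lem_bound_epsilon.3} and the bound $\|\delta_i^{(t)}\|^2 \le D^2H^2\varepsilon^2$ above, and collecting them into $\varepsilon(4\varepsilon+3)[D^2H^2 + c(2+(V+\varepsilon^2+2)GD^2)^2]$ while leaving the residual term as $\tfrac{4\varepsilon+3}{\varepsilon}\|\eta^{(t)} H_i^{(t)} v_{*\ \text{reg}}^{(t)} - \alpha_i^{(t)}\nabla_z\phi_i(h(w^{(t)};i))\|^2$, yields precisely the claimed inequality. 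I expect the only real difficulty to be the coefficient bookkeeping: because the extra vector $\delta_i^{(t)}$ introduces three new inner products, one must split the weight $\varepsilon$ into thirds (rather than halves, as in the closed-form case) across the cross terms with $h(w^{(t)};i)-h_i^*$, so that simultaneously the $\|h(w^{(t)};i)-h_i^*\|^2$ coefficient stays at $1+\varepsilon$ and the squared-gradient coefficient lands at exactly $4$, giving the $(1-4\alpha)$ and $\tfrac{4\varepsilon+3}{\varepsilon}$ constants of the statement.
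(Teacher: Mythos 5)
Your proposal matches the paper's proof of Lemma~\ref{lem_main_result_01_new.3} essentially line for line: the same five-term decomposition of $h(w^{(t+1)};i)-h_i^*$ with the extra vector $\eta^{(t)}H_i^{(t)}(v^{(t)}-v_{*\ \text{reg}}^{(t)})$, the same $\tfrac{\varepsilon}{3}$/$\tfrac{3}{\varepsilon}$ Young split on the three cross terms against $h(w^{(t)};i)-h_i^*$, the same $\|u\|^2+\|v\|^2$ bound on the remaining cross terms, and the same invocations of \eqref{eq_phi_convex}, \eqref{eq_phi_convex_smooth}, Assumption~\ref{ass_bounded_hessian.2}, \eqref{criteria_eps_sol.3}, and Lemma~\ref{lem_bound_epsilon.3}. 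The coefficient bookkeeping you anticipated (the $\varepsilon/3$ split yielding $1+\varepsilon$, $4(\alpha_i^{(t)})^2$, and $4+3/\varepsilon$) is exactly what the paper does, so the proposal is correct and takes the same route.
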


\begin{proof}
Note that $v^{(t)}$ is obtained from the optimization problem \eqref{opt_prob_L2} for $0 \leq t < T$. From \eqref{eq_002.1}, we have, for $i \in [n]$,
\begin{align*}
    h(w^{(t+1)};i) & =   h(w^{(t)} - \eta^{(t)} v^{(t)} ;i) \\
    & = h(w^{(t)};i) - \eta^{(t)} H_i^{(t)} v^{(t)} + \epsilon_i^{(t)} \\
    &= h(w^{(t)};i) - \eta^{(t)} H_i^{(t)} (v^{(t)} - v_{*\ \text{reg}}^{(t)}) - \alpha_i^{(t)} \nabla_z \phi_i (h(w^{(t)};i)) + \epsilon_i^{(t)} \\ & \qquad - [ \eta^{(t)} H_i^{(t)} v_{*\ \text{reg}}^{(t)} - \alpha_i^{(t)} \nabla_z \phi_i (h(w^{(t)};i)) ]. 
\end{align*}

Hence, we have
\allowdisplaybreaks
\begin{align*}
    &\qquad \| h(w^{(t+1)};i) - h_i^* \|^2\\
    &= \| h(w^{(t)};i) - h_i^* - \eta^{(t)} H_i^{(t)} (v^{(t)} - v_{*\ \text{reg}}^{(t)}) - \alpha_i^{(t)} \nabla_z \phi_i (h(w^{(t)};i)) \\&\qquad + \epsilon_i^{(t)}  - [ \eta^{(t)} H_i^{(t)} v_{*\ \text{reg}}^{(t)} - \alpha_i^{(t)} \nabla_z \phi_i (h(w^{(t)};i)) ] \|^2 \\
    &= \| h(w^{(t)};i) - h_i^* \|^2 +  \| \eta^{(t)}  H_i^{(t)} (v^{(t)} - v_{*\ \text{reg}}^{(t)})  \|^2  + ( \alpha_i^{(t)} )^2 \| \nabla_z \phi_i (h(w^{(t)};i)) \|^2  \\ & \quad + \| \epsilon_i^{(t)} \|^2+ \| \eta^{(t)} H_i^{(t)} v_{*\ \text{reg}}^{(t)} - \alpha_i^{(t)} \nabla_z \phi_i (h(w^{(t)};i)) \|^2 \\
    & \quad - 2 \cdot \langle h(w^{(t)};i) - h_i^* , \eta^{(t)} H_i^{(t)} (v^{(t)} - v_{*\ \text{reg}}^{(t)}) \rangle \\
    & \quad- 2 \cdot \langle  h(w^{(t)};i) - h_i^* ,\alpha_i^{(t)}  \nabla_z \phi_i (h(w^{(t)};i)) \rangle \\
    & \quad+ 2 \cdot \langle h(w^{(t)};i) - h_i^* , \epsilon_i^{(t)} \rangle  \\
    & \quad- 2 \cdot\langle h(w^{(t)};i) - h_i^* , \eta^{(t)} H_i^{(t)} v_{*\ \text{reg}}^{(t)} - \alpha_i^{(t)} \nabla_z \phi_i (h(w^{(t)};i))  \rangle \\
    & \quad + 2 \cdot \langle \eta^{(t)} H_i^{(t)} (v^{(t)} - v_{*\ \text{reg}}^{(t)}) ,  \alpha_i^{(t)} \nabla_z \phi_i (h(w^{(t)};i)) \rangle \\
    & \quad - 2 \cdot\langle \eta^{(t)} H_i^{(t)} (v^{(t)} - v_{*\ \text{reg}}^{(t)}) , \epsilon_i^{(t)} \rangle \\
    & \quad + 2 \cdot \langle \eta^{(t)} H_i^{(t)} (v^{(t)} - v_{*\ \text{reg}}^{(t)}), \eta^{(t)} H_i^{(t)} v_{*\ \text{reg}}^{(t)} - \alpha_i^{(t)} \nabla_z \phi_i (h(w^{(t)};i)) \rangle \\
    & \quad- 2 \cdot \langle \alpha_i^{(t)} \nabla_z \phi_i (h(w^{(t)};i)) , \epsilon_i^{(t)} \rangle \\
    & \quad + 2 \cdot \langle \alpha_i^{(t)} \nabla_z \phi_i (h(w^{(t)};i)), \eta^{(t)} H_i^{(t)} v_{*\ \text{reg}}^{(t)} - \alpha_i^{(t)} \nabla_z \phi_i (h(w^{(t)};i))  \rangle \\
    & \quad - 2 \cdot\langle \epsilon_i^{(t)}, \eta^{(t)} H_i^{(t)} v_{*\ \text{reg}}^{(t)} - \alpha_i^{(t)} \nabla_z \phi_i (h(w^{(t)};i))  \rangle,
\end{align*}
where we expand the square term. Now applying Young's inequalities: $2| \langle u , v \rangle |  \leq \frac{\| u \|^2}{\varepsilon/3} + (\varepsilon/3) \| v \|^2$ for $\varepsilon > 0$ and $2| \langle u , v \rangle | \leq \| u \|^2 + \| v \|^2$ we have:
\begin{align*}
    &\qquad \| h(w^{(t+1)};i) - h_i^* \|^2\\
    &= \| h(w^{(t)};i) - h_i^* \|^2 +  \|\eta^{(t)} H_i^{(t)} (v^{(t)} - v_{*\ \text{reg}}^{(t)})  \|^2  + ( \alpha_i^{(t)} )^2 \| \nabla_z \phi_i (h(w^{(t)};i)) \|^2  \\ & \quad + \| \epsilon_i^{(t)} \|^2+ \| \eta^{(t)} H_i^{(t)} v_{*\ \text{reg}}^{(t)} - \alpha_i^{(t)} \nabla_z \phi_i (h(w^{(t)};i)) \|^2 \\
   & \quad + \frac{\varepsilon}{3} \| h(w^{(t)};i) - h_i^* \|^2 + \frac{3}{\varepsilon} \| \eta^{(t)} H_i^{(t)} (v^{(t)} - v_{*\ \text{reg}}^{(t)}) \|^2 \\ 
    & \quad - 2 \alpha_i^{(t)} \langle  h(w^{(t)};i) - h_i^* , \nabla_z \phi_i (h(w^{(t)};i)) \rangle\\ 
    & \quad
    + \frac{\varepsilon}{3} \| h(w^{(t)};i) - h_i^* \|^2 +  \frac{3}{\varepsilon} \| \epsilon_i^{(t)} \|^2 \\
    & \quad + \frac{\varepsilon}{3} \| h(w^{(t)};i) - h_i^* \|^2 + \frac{3}{\varepsilon} \| \eta^{(t)} H_i^{(t)} v_{*\ \text{reg}}^{(t)} - \alpha_i^{(t)} \nabla_z \phi_i (h(w^{(t)};i))  \|^2\\
    & \quad + 3 ( \eta^{(t)} )^2 \| H_i^{(t)} (v^{(t)} - v_{*\ \text{reg}}^{(t)})  \|^2 + 3 ( \alpha_i^{(t)} )^2 \| \nabla_z \phi_i (h(w^{(t)};i)) \|^2 + 3 \| \epsilon_i^{(t)} \|^2 \\
    & \quad + 3 \| \eta^{(t)} H_i^{(t)} v_{*\ \text{reg}}^{(t)} - \alpha_i^{(t)} \nabla_z \phi_i (h(w^{(t)};i)) \|^2 \\
    & \overset{\eqref{eq_phi_convex}}{\leq} (1 + \varepsilon) \| h(w^{(t)};i) - h_i^* \|^2  + 4 ( \alpha_i^{(t)} )^2 \| \nabla_z \phi_i (h(w^{(t)};i)) \|^2\\
    & \quad +\left( 4+ \frac{3}{\varepsilon} \right)\| \eta^{(t)} H_i^{(t)} (v^{(t)} - v_{*\ \text{reg}}^{(t)}) \|^2 + \left( 4+ \frac{3}{\varepsilon} \right) \| \epsilon_i^{(t)} \|^2  \\
    & \quad + \left( 4+ \frac{3}{\varepsilon} \right) \| \eta^{(t)} H_i^{(t)} v_{*\ \text{reg}}^{(t)} - \alpha_i^{(t)} \nabla_z \phi_i (h(w^{(t)};i)) \|^2 \\ 
    & \quad - 2 \alpha_i^{(t)} [ \phi_i (h(w^{(t)};i) ) - \phi_i (h_i^* ) ] 
\end{align*}

Note that from \eqref{eq_phi_convex_smooth} we get that $\| \nabla_z \phi_i (h(w^{(t)};i)) \|^2 \leq 2 L_{\phi} [ \phi_i (h(w^{(t)};i)) - \phi_i (h_i^* ) ]$. Applying this and using the fact that $\alpha_i^{(t)} \leq \frac{\alpha}{L_{\phi}}$, for some $\alpha \in (0, \frac{1}{4})$, we are able to derive:
\begin{align*}
    &\qquad \| h(w^{(t+1)};i) - h_i^* \|^2 \\
    & \leq (1 + \varepsilon) \| h(w^{(t)};i) - h_i^* \|^2 - 2 ( 1 - 4 \alpha) \alpha_i^{(t)} [ \phi_i (h(w^{(t)};i)  ) - \phi_i (h_i^* ) ] \\
    & \quad +\frac{4\varepsilon+3}{\varepsilon}\| \eta^{(t)} H_i^{(t)} (v^{(t)} - v_{*\ \text{reg}}^{(t)}) \|^2 + \frac{4\varepsilon+3}{\varepsilon}\| \epsilon_i^{(t)} \|^2  \\
    & \quad + \frac{4\varepsilon+3}{\varepsilon} \| \eta^{(t)} H_i^{(t)} v_{*\ \text{reg}}^{(t)} - \alpha_i^{(t)} \nabla_z \phi_i (h(w^{(t)};i)) \|^2 \\ 
    & \overset{(a)}{\leq} (1 + \varepsilon) \| h(w^{(t)};i) - h_i^* \|^2 - 2 ( 1 - 4 \alpha) \alpha_i^{(t)} [ \phi_i (h(w^{(t)};i)  ) - \phi_i (h_i^* ) ]\\
    & \quad +\frac{4\varepsilon+3}{\varepsilon} D^2\varepsilon \frac{H^2}{\varepsilon} \| v^{(t)} - v_{*\ \text{reg}}^{(t)}  \|^2  + \frac{4\varepsilon+3}{\varepsilon}\| \epsilon_i^{(t)} \|^2  \\
    & \quad + \frac{4\varepsilon+3}{\varepsilon} \| \eta^{(t)} H_i^{(t)} v_{*\ \text{reg}}^{(t)} - \alpha_i^{(t)} \nabla_z \phi_i (h(w^{(t)};i)) \|^2 \\
    & \overset{(b)}{\leq} (1 + \varepsilon) \| h(w^{(t)};i) - h_i^* \|^2 - 2 ( 1 - 4 \alpha) \alpha_i^{(t)} [ \phi_i (h(w^{(t)};i)  ) - \phi_i (h_i^* ) ]\\
    & \quad +\frac{4\varepsilon+3}{\varepsilon} D^2 H^2 \cdot \varepsilon^2 + \frac{4\varepsilon+3}{\varepsilon} \cdot c(2+(V+\varepsilon^2 +2) G D^2)^2 \varepsilon^2  \\
    & \quad + \frac{4\varepsilon+3}{\varepsilon} \| \eta^{(t)} H_i^{(t)} v_{*\ \text{reg}}^{(t)} - \alpha_i^{(t)} \nabla_z \phi_i (h(w^{(t)};i)) \|^2 \\
    & = (1 + \varepsilon) \| h(w^{(t)};i) - h_i^* \|^2 - 2 ( 1 - 4 \alpha) \alpha_i^{(t)} [ \phi_i (h(w^{(t)};i)  ) - \phi_i (h_i^* ) ]\\
    & \quad +  \varepsilon(4\varepsilon+3)\left[D^2H^2 + c (2+(V+\varepsilon^2 +2) G D^2)^2\right] \\
    & \quad + \frac{4\varepsilon+3}{\varepsilon} \| \eta^{(t)} H_i^{(t)} v_{*\ \text{reg}}^{(t)} - \alpha_i^{(t)} \nabla_z \phi_i (h(w^{(t)};i)) \|^2 
\end{align*}

where
 $(a)$ follows by using matrix vector inequality $\| H v \| \leq \| H \| \|v \| $, where $H \in \mathbb{R}^{c \times d}$ and $v \in \mathbb{R}^d$ and Assumption~\ref{ass_bounded_hessian.2} in \eqref{eq_ass_bounded_hessian.2} and $\eta^{(t)} = D\sqrt{\varepsilon}$ for some $D > 0$ and $\varepsilon > 0$; $(b)$ follows by the fact that $\| v^{(t)} - v_{*\ \text{reg}}^{(t)} \|^2 \leq \varepsilon^2$ in  \eqref{criteria_eps_sol.3} and Lemma \ref{lem_bound_epsilon.3}. 
\end{proof}

\begin{lemma}\label{lem_main_result_02_new.3}
Let $w^{(t)}$ be generated by Algorithm \ref{alg_GD_solved.2}
where $v^{(t)}$ satisfies \eqref{criteria_eps_sol.3}.
 We execute Algorithm \ref{alg_GD_solved.2} for $T = \frac{\beta}{\varepsilon}$ outer loops for some constant $\beta>0$.
 %Suppose that Assumptions \ref{ass_approx_h} and \ref{ass_solve_v} hold, and that
We assume Assumption \ref{ass_phi} holds.
%For $\varepsilon$ and $T$ in Algorithm \ref{alg_GD_solved.2}, 
Suppose that Assumption \ref{ass_approx_h} holds for $G > 0$, Assumption \ref{ass_solve_v} holds for $V > 0$
and Assumption \ref{ass_bounded_hessian.2} holds for $H > 0$.%; we assume $\phi_i$ is convex and $L_{\phi}$-smooth, $i \in [n]$. %and  Assumption~\ref{ass_diff_optimal_eps} hold. 
We set the step size equal to $\eta^{(t)} = D\sqrt{\varepsilon}$ for some $D > 0$ and choose a learning rate 
$ \alpha_i^{(t)} = ( 1 + \varepsilon ) \alpha_i^{(t-1)} = ( 1 + \varepsilon )^t \alpha_i^{(0)}$. Based on $\beta$, we define 
%, \ t = 1,\dots,T  %\tagthis \label{learning_rate_power}
%$
%for $\varepsilon > 0$ with $T = \frac{\beta}{\varepsilon}$ and 
$\alpha_i^{(0)} = \frac{\alpha}{e^{\beta} L_{\phi}}$
%for some constant $\beta > 0$,
with $\alpha \in (0, \frac{1}{4})$.
% Suppose that Assumptions \ref{ass_approx_h}, \ref{ass_solve_v} and \ref{ass_bounded_hessian.2} hold. Additionally we assume that $v^{(t)}$ satisfies \eqref{criteria_eps_sol.3}. Consider $\eta^{(t)} = D\sqrt{\varepsilon}$ for some $D > 0$ and $\varepsilon > 0$ and the learning rate satisfies:
% \begin{align*}
%     \alpha_i^{(t)} = ( 1 + \varepsilon ) \alpha_i^{(t-1)} = ( 1 + \varepsilon )^t \alpha_i^{(0)}, \ t = 0,\dots,T - 1  %\tagthis \label{learning_rate_power}
% \end{align*}
% for $\varepsilon > 0$ with $T = \frac{\beta}{\varepsilon}$ and $\alpha_i^{(0)} = \frac{\alpha}{e^{\beta} L_{\phi}}$ for some constant $\beta > 0$, $\alpha \in (0, \frac{1}{4})$.
We have
\allowdisplaybreaks
\begin{align*}
    \frac{1}{T} \sum_{t=0}^{T-1} \frac{1}{n} \sum_{i=1}^n [ f (w^{(t)}; i) - \phi_i (h_i^* )  ] 
    & \leq  \frac{e^{\beta} L_{\phi}( 1 + \varepsilon )}{2 ( 1 - 4 \alpha) \alpha \beta} \cdot  \frac{1}{n} \sum_{i=1}^n  \| h(w^{(0)};i) - h_i^* \|^2 \cdot \varepsilon\\
    & \quad + \frac{e^{\beta} L_{\phi}(4\varepsilon+3)}{2\alpha( 1 - 4 \alpha)}   \left[D^2H^2 + c (2+(V+\varepsilon^2 +2) G D^2)^2 + 2+ V\right] \cdot \varepsilon. \tagthis \label{eq_lem_main_result_02_new.2}
\end{align*}
\end{lemma}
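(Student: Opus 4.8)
The plan is to follow verbatim the argument used for Lemma~\ref{lem_main_result_closed_form_02}, now starting from the one-step inequality \eqref{eq_lem_main_result_01_new.2} of Lemma~\ref{lem_main_result_01_new.3}. First I would rearrange \eqref{eq_lem_main_result_01_new.2}: dividing through by $2(1-4\alpha)\alpha_i^{(t)}>0$ (which is positive because $\alpha\in(0,\tfrac14)$) gives
\[
\phi_i(h(w^{(t)};i))-\phi_i(h_i^*) \le \frac{1}{2(1-4\alpha)}\left(\frac{1+\varepsilon}{\alpha_i^{(t)}}\|h(w^{(t)};i)-h_i^*\|^2 - \frac{1}{\alpha_i^{(t)}}\|h(w^{(t+1)};i)-h_i^*\|^2\right) + \frac{E_i^{(t)}}{2(1-4\alpha)\alpha_i^{(t)}},
\]
where $E_i^{(t)}=\varepsilon(4\varepsilon+3)\big[D^2H^2 + c(2+(V+\varepsilon^2+2)GD^2)^2\big] + \tfrac{4\varepsilon+3}{\varepsilon}\|\eta^{(t)}H_i^{(t)}v_{*\ \text{reg}}^{(t)}-\alpha_i^{(t)}\nabla_z\phi_i(h(w^{(t)};i))\|^2$ collects the error terms of \eqref{eq_lem_main_result_01_new.2}.

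The key observation is that the geometric schedule $\alpha_i^{(t)}=(1+\varepsilon)\alpha_i^{(t-1)}$ forces $\tfrac{1}{\alpha_i^{(t)}}=\tfrac{1+\varepsilon}{\alpha_i^{(t+1)}}$, so the parenthesized expression equals $\tfrac{1+\varepsilon}{\alpha_i^{(t)}}\|h(w^{(t)};i)-h_i^*\|^2-\tfrac{1+\varepsilon}{\alpha_i^{(t+1)}}\|h(w^{(t+1)};i)-h_i^*\|^2$, which telescopes. Summing over $t=0,\dots,T-1$, dividing by $T=\beta/\varepsilon$, and dropping the nonnegative terminal term leaves only $\tfrac{1}{T}\tfrac{1+\varepsilon}{\alpha_i^{(0)}}\|h(w^{(0)};i)-h_i^*\|^2$ from the telescoping part. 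For the error terms I would use, exactly as in Lemma~\ref{lem_main_result_closed_form_02}, that $(1+x)^{1/x}\le e$ gives $\alpha_i^{(t)}\le\alpha_i^{(0)}(1+\varepsilon)^T=(1+\varepsilon)^{\beta/\varepsilon}\tfrac{\alpha}{e^\beta L_\phi}\le\tfrac{\alpha}{L_\phi}$ (so $\alpha_i^{(t)}\le\alpha/L_\phi$ is legitimate throughout) and hence $1/\alpha_i^{(t)}\le 1/\alpha_i^{(0)}=e^\beta L_\phi/\alpha$ for all $0\le t<T$; substituting $\alpha_i^{(0)}=\alpha/(e^\beta L_\phi)$ and $T=\beta/\varepsilon$ turns the surviving telescoping contribution into $\tfrac{e^\beta L_\phi(1+\varepsilon)}{2(1-4\alpha)\alpha\beta}\|h(w^{(0)};i)-h_i^*\|^2\cdot\varepsilon$.

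Next I would average over $i=1,\dots,n$. The $\varepsilon$-independent part of $E_i^{(t)}$ (the $D^2H^2$ and $c(2+(V+\varepsilon^2+2)GD^2)^2$ contributions, after multiplication by $1/(2(1-4\alpha)\alpha_i^{(t)})\le \tfrac{e^\beta L_\phi}{2\alpha(1-4\alpha)}$) passes through the average unchanged; for the last piece I would invoke \eqref{eq_lem_stronglyconvex_prob} of Lemma~\ref{lem_stronglyconvex_prob}, which bounds $\tfrac{1}{n}\sum_{i}\|\eta^{(t)}H_i^{(t)}v_{*\ \text{reg}}^{(t)}-\alpha_i^{(t)}\nabla_z\phi_i(h(w^{(t)};i))\|^2$ by $2(1+V/2)\varepsilon^2=(2+V)\varepsilon^2$ uniformly in $t$, so after multiplication by $\tfrac{4\varepsilon+3}{\varepsilon}$ it contributes $(4\varepsilon+3)(2+V)\varepsilon$ times the same prefactor. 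Collecting the three contributions under the common factor $\tfrac{e^\beta L_\phi}{2\alpha(1-4\alpha)}$ and pulling out $(4\varepsilon+3)\varepsilon$ yields the second line of \eqref{eq_lem_main_result_02_new.2}; finally, since $f(w^{(t)};i)=\phi_i(h(w^{(t)};i))$ by \eqref{eq_component_loss_01}, the left-hand averages coincide. This is a pure bookkeeping argument with no genuinely hard step; the only point requiring care is aligning the $(1+\varepsilon)$ growth rate of $\alpha_i^{(t)}$ with the $(1+\varepsilon)$ factor in \eqref{eq_lem_main_result_01_new.2} so that the telescoping is exact, and then using $(1+x)^{1/x}\le e$ to keep the accumulated multiplicative constant bounded as $\varepsilon\to0$.
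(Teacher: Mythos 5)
Your proposal is correct and matches the paper's own proof step for step: the same rearrangement of \eqref{eq_lem_main_result_01_new.2}, the same observation that the geometric rate $\alpha_i^{(t+1)}=(1+\varepsilon)\alpha_i^{(t)}$ makes $\tfrac{1}{\alpha_i^{(t)}}=\tfrac{1+\varepsilon}{\alpha_i^{(t+1)}}$ so the first two terms telescope exactly, the same use of $(1+x)^{1/x}\le e$ to keep $\alpha_i^{(t)}\le \alpha/L_{\phi}$ and $1/\alpha_i^{(t)}\le e^{\beta}L_{\phi}/\alpha$, the same appeal to Lemma~\ref{lem_stronglyconvex_prob} for the residual term after averaging over $i$, and the same identification $f(w^{(t)};i)=\phi_i(h(w^{(t)};i))$ at the end. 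No gaps and no meaningful divergence from the paper's argument.
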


\begin{proof}
Rearranging the terms in Lemma~\ref{lem_main_result_01_new.3},
we have
\allowdisplaybreaks
\begin{align*}
     \phi_i (h(w^{(t)};i) ) - \phi_i (h_i^* ) & \leq \frac{1}{2 ( 1 - 4 \alpha)}\left( \frac{(1 + \varepsilon)}{\alpha_i^{(t)}} \| h(w^{(t)};i) - h_i^* \|^2 - \frac{1}{\alpha_i^{(t)}} \| h(w^{(t+1)};i) - h_i^* \|^2 \right) \\
    & \quad + \frac{1}{2( 1 - 4 \alpha)} \cdot \frac{1}{\alpha_i^{(t)}} \cdot \varepsilon(4\varepsilon+3) \left[D^2H^2 + c (2+(V+\varepsilon^2 +2) G D^2)^2\right] \\
    & \quad
    + \frac{1}{2( 1 - 4 \alpha)} \cdot \frac{1}{\alpha_i^{(t)}} \cdot\frac{4\varepsilon+3}{\varepsilon} \| \eta^{(t)} H_i^{(t)} v_{*\ \text{reg}}^{(t)} - \alpha_i^{(t)} \nabla_z \phi_i (h(w^{(t)};i)) \|^2   \\
    & \leq \frac{1}{2 ( 1 - 4 \alpha)}\left( \frac{(1 + \varepsilon)}{\alpha_i^{(t)}}  \| h(w^{(t)};i) - h_i^* \|^2 - \frac{(1 + \varepsilon)}{\alpha_i^{(t+1)}} \| h(w^{(t+1)};i) - h_i^* \|^2 \right) \\
    & \quad + \frac{e^{\beta} L_{\phi}}{2\alpha( 1 - 4 \alpha)}  \cdot \varepsilon(4\varepsilon+3) \left[D^2H^2 + c (2+(V+\varepsilon^2 +2) G D^2)^2\right] \\
    & \quad
    + \frac{e^{\beta} L_{\phi}}{2\alpha( 1 - 4 \alpha)}  \cdot\frac{4\varepsilon+3}{\varepsilon} \| \eta^{(t)} H_i^{(t)} v_{*\ \text{reg}}^{(t)} - \alpha_i^{(t)} \nabla_z \phi_i (h(w^{(t)};i)) \|^2 .
    \tagthis \label{eq_0001_new.2}
\end{align*}
The last inequality follows because the learning rate satisfies $\alpha_i^{(0)} = \frac{\alpha}{e^{\beta} L_{\phi}} \leq \frac{\alpha}{L_{\phi}}$ and for $t = 1,\dots,T = \frac{\beta}{\varepsilon}$ for some $\beta > 0$
\begin{align*}
    \alpha_i^{(t)} = ( 1 + \varepsilon ) \alpha_i^{(t-1)} = ( 1 + \varepsilon )^t \alpha_i^{(0)} \leq ( 1 + \varepsilon )^T \alpha_i^{(0)} = ( 1 + \varepsilon )^{\beta/\varepsilon} \frac{\alpha}{e^{\beta} L_{\phi}} \leq \frac{\alpha}{L_{\phi}}, 
\end{align*}
since $(1 + x)^{1/x} \leq e$, $x > 0$. Moreover, we have $\frac{1}{\alpha_i^{(t)}} \leq \frac{1}{\alpha_i^{(0)}} = \frac{e^{\beta} L_{\phi}}{\alpha}$, $t = 0,\dots,T-1$. 

Taking the average sum from $t = 0,\dots,T-1$, we have
\begin{align*}
    \frac{1}{T} \sum_{t=0}^{T-1} [\phi_i (h(w^{(t)};i) ) - \phi_i (h_i^* ) ] & \leq \frac{1}{2 ( 1 - 4 \alpha) T} \cdot \frac{( 1 + \varepsilon ) }{\alpha_i^{(0)}} \| h(w^{(0)};i) - h_i^* \|^2 \\
    & \quad + \frac{e^{\beta} L_{\phi}}{2\alpha( 1 - 4 \alpha)}  \cdot \varepsilon(4\varepsilon+3) \left[D^2H^2 + c (2+(V+\varepsilon^2 +2) G D^2)^2\right] \\
    & \quad
    + \frac{e^{\beta} L_{\phi}}{2\alpha( 1 - 4 \alpha)}  \cdot\frac{4\varepsilon+3}{\varepsilon} \frac{1}{T} \sum_{t=0}^{T-1} \| \eta^{(t)} H_i^{(t)} v_{*\ \text{reg}}^{(t)} - \alpha_i^{(t)} \nabla_z \phi_i (h(w^{(t)};i)) \|^2 \\
    & = \frac{e^{\beta} L_{\phi}( 1 + \varepsilon )}{2 ( 1 - 4 \alpha) \alpha \beta} \varepsilon \cdot \| h(w^{(0)};i) - h_i^* \|^2 \\
    & \quad + \frac{e^{\beta} L_{\phi}}{2\alpha( 1 - 4 \alpha)}  \cdot \varepsilon(4\varepsilon+3) \left[D^2H^2 + c (2+(V+\varepsilon^2 +2) G D^2)^2\right] \\
    & \quad
    + \frac{e^{\beta} L_{\phi}}{2\alpha( 1 - 4 \alpha)}  \cdot\frac{4\varepsilon+3}{\varepsilon} \frac{1}{T} \sum_{t=0}^{T-1} \| \eta^{(t)} H_i^{(t)} v_{*\ \text{reg}}^{(t)} - \alpha_i^{(t)} \nabla_z \phi_i (h(w^{(t)};i)) \|^2.
\end{align*}
Taking the average sum from $i = 1,\dots,n$, we have
\begin{align*}
    & \frac{1}{T} \sum_{t=0}^{T-1} \frac{1}{n} \sum_{i=1}^n [ \phi_i (h(w^{(t)};i) ) - \phi_i (h_i^* ) ]  \\ 
    & \leq \frac{e^{\beta} L_{\phi}( 1 + \varepsilon )}{2 ( 1 - 4 \alpha) \alpha \beta} \varepsilon \cdot  \frac{1}{n} \sum_{i=1}^n  \| h(w^{(0)};i) - h_i^* \|^2 \\
    & \quad + \frac{e^{\beta} L_{\phi}}{2\alpha( 1 - 4 \alpha)}  \cdot \varepsilon(4\varepsilon+3) \left[D^2H^2 + c (2+(V+\varepsilon^2 +2) G D^2)^2\right] \\
    & \quad
    + \frac{e^{\beta} L_{\phi}}{2\alpha( 1 - 4 \alpha)}  \cdot\frac{4\varepsilon+3}{\varepsilon} \frac{1}{T} \sum_{t=0}^{T-1} \frac{1}{n} \sum_{i=1}^n  \| \eta^{(t)} H_i^{(t)} v_{*\ \text{reg}}^{(t)} - \alpha_i^{(t)} \nabla_z \phi_i (h(w^{(t)};i)) \|^2\\
    & \overset{\eqref{eq_lem_stronglyconvex_prob}}{\leq}  \frac{e^{\beta} L_{\phi}( 1 + \varepsilon )}{2 ( 1 - 4 \alpha) \alpha \beta} \varepsilon \cdot  \frac{1}{n} \sum_{i=1}^n  \| h(w^{(0)};i) - h_i^* \|^2 \\
    & \quad + \frac{e^{\beta} L_{\phi}}{2\alpha( 1 - 4 \alpha)}  \cdot \varepsilon(4\varepsilon+3) \left[D^2H^2 + c (2+(V+\varepsilon^2 +2) G D^2)^2\right] \\
    & \quad
    + \frac{e^{\beta} L_{\phi}}{2\alpha( 1 - 4 \alpha)}  \cdot\frac{4\varepsilon+3}{\varepsilon} (2+V) \varepsilon^2. \tagthis \label{eq_0002_new.2}
\end{align*}

Note that
\begin{align*}
     \frac{1}{T} \sum_{t=0}^{T-1} \frac{1}{n} \sum_{i=1}^n [ \phi_i (h(w^{(t)};i) ) - \phi_i (h_i^* ) ]  = \frac{1}{T} \sum_{t=0}^{T-1} \frac{1}{n} \sum_{i=1}^n [ f (w^{(t)}; i) - \phi_i (h_i^* )  ]. \tagthis \label{eq_average_phi_02.2}
\end{align*}

Therefore, applying \eqref{eq_average_phi_02.2} to \eqref{eq_0002_new.2}, we have
\begin{align*}
     \frac{1}{T} \sum_{t=0}^{T-1} \frac{1}{n} \sum_{i=1}^n [ f (w^{(t)}; i) - \phi_i (h_i^* )  ]  
    & \leq  \frac{e^{\beta} L_{\phi}( 1 + \varepsilon )}{2 ( 1 - 4 \alpha) \alpha \beta}  \cdot  \frac{1}{n} \sum_{i=1}^n  \| h(w^{(0)};i) - h_i^* \|^2 \cdot \varepsilon\\
    & \quad + \frac{e^{\beta} L_{\phi}}{2\alpha( 1 - 4 \alpha)}  (4\varepsilon+3) \left[D^2H^2 + c (2+(V+\varepsilon^2 +2) G D^2)^2 + 2+ V\right]\cdot \varepsilon . %\tagthis \label{eq_0002_new}
\end{align*}
\end{proof}

\subsection*{Proof of Theorem~\ref{thm_main_result_GD_solved.3}}

% \begin{thm}\label{thm_main_result_GD_solved.3}
% % Suppose that Assumptions ... hold and Assumption~\ref{ass_diff_optimal_eps} holds with $\delta = P \varepsilon$ for some $P > 0$ and $\varepsilon > 0$. 
% Suppose that Assumptions \ref{ass_approx_h}, \ref{ass_solve_v}, \ref{ass_diff_optimal_eps} and \ref{ass_bounded_hessian.2} hold. Additionally we assume that $v^{(t)}$ satisfies \eqref{criteria_eps_sol.2}. Consider $\eta^{(t)} = D{\varepsilon}$ for some $D > 0$ and $\varepsilon > 0$ and the learning rate satisfies:
% \begin{align*}
%     \alpha_i^{(t)} = ( 1 + \varepsilon ) \alpha_i^{(t-1)} = ( 1 + \varepsilon )^t \alpha_i^{(0)}, \ t = 1,\dots,T  %\tagthis \label{learning_rate_power}
% \end{align*}
% for $\varepsilon > 0$ with $T = \frac{\beta}{\varepsilon}$ and $\alpha_i^{(0)} = \frac{\alpha}{e^{\beta} L_{\phi}}$ for some constant $\beta > 0$, $\alpha \in (0, \frac{1}{4})$.

% We have
% \allowdisplaybreaks

% \begin{align*}
%     \frac{1}{T} \sum_{t=0}^{T-1} [ F (w^{(t)}) - F_* ] & \leq  \frac{e^{\beta} L_{\phi}( 1 + \varepsilon )}{2 ( 1 - 4 \alpha) \alpha \beta} \cdot  \frac{1}{n} \sum_{i=1}^n  \| h(w^{(0)};i) - h_i^* \|^2 \cdot \varepsilon \\% + P \varepsilon \\
%     & \quad + \frac{e^{\beta} L_{\phi}(4\varepsilon+3)}{2\alpha( 1 - 4 \alpha)}  \left[D^2H^2 + c (2+(V+\varepsilon +2) G D^2 \varepsilon)^2 + 2+ V\right] \cdot \varepsilon.  \tagthis \label{eq_thm_main_result_GD_solved_new.2}
% \end{align*}

% \end{thm}

\begin{proof}
From \eqref{eq_loss_great_01} we have $F_* - \frac{1}{n} \sum_{i=1}^n \phi_i (h_i^* ) \geq 0$. This leads to
\allowdisplaybreaks
\begin{align*}
    \frac{1}{T} \sum_{t=0}^{T-1} [ F (w^{(t)}) - F_* ] & = \frac{1}{T} \sum_{t=0}^{T-1} \left( \frac{1}{n} \sum_{i=1}^n [ f (w^{(t)}; i) - \phi_i (h_i^* )  ] - \left[ F_* - \frac{1}{n} \sum_{i=1}^{n} \phi_i (h_i^* )  \right] \right) \\
    & \overset{\eqref{eq_loss_great_01}}{\leq} \frac{1}{T} \sum_{t=0}^{T-1} \frac{1}{n} \sum_{i=1}^n [ f (w^{(t)}; i) - \phi_i (h_i^* )  ] \\% + P \varepsilon \\
    & \overset{\eqref{eq_lem_main_result_02_new.2}%,\eqref{eq_ass_diff_optimal_eps}
    }{\leq}  \frac{e^{\beta} L_{\phi}( 1 + \varepsilon )}{2 ( 1 - 4 \alpha) \alpha \beta} \cdot  \frac{1}{n} \sum_{i=1}^n  \| h(w^{(0)};i) - h_i^* \|^2 \cdot \varepsilon \\% + P \varepsilon \\
    & \quad + \frac{e^{\beta} L_{\phi}(4\varepsilon+3)}{2\alpha( 1 - 4 \alpha)}  \left[D^2H^2 + c (2+(V+\varepsilon^2 +2) G D^2)^2 + 2+ V\right] \cdot \varepsilon. \tagthis \label{eq_thm_main_result_GD_solved_new.2}
\end{align*}

\end{proof}

\subsection*{Proof of Corollary~\ref{cor_thm_main_result_GD_solved_new.3}}
\begin{proof} %\textcolor{blue}{We may bring this proof to appendix.}
For each iteration $0 \leq t < T$, we need to find $v^{(t)}$ satisfying the following criteria: 
%Notice that, for strongly convex problem we can find $v^{(t)}$ so that, for $0 \leq t < T$,
% \begin{align*}
%      \Psi(v^{(t)}) - \Psi(v_{*\ \text{reg}}^{(t)}) \leq \varepsilon^2, 
% \end{align*}
\begin{align*}
     \| v^{(t)} - v_{*\ \text{reg}}^{(t)} \|^2 \leq \varepsilon^2, 
\end{align*}
for some $\varepsilon > 0$. Using Gradient Descent we need $\mathcal{O}(n \frac{L}{\mu} \log(\frac{1}{\varepsilon^2})) = \mathcal{O}(2 n \frac{L}{\mu} \log(\frac{1}{\varepsilon}))$ number of gradient evaluations \citep{nesterov2004}, where $L$ and $\mu = \varepsilon^2$ are the smooth and strongly convex constants, respectively, of $\Psi$. Let 
\begin{align*}
    \psi_i^{(t)} (v) = \frac{1}{2} \| \eta^{(t)} H_i^{(t)} v - \alpha_i^{(t)} \nabla_z \phi_i (h(w^{(t)};i))  \|^2, \ i \in [n]. \tagthis \label{eq_func_v.2}
\end{align*}
Then, for any $v \in \mathbb{R}^c$
\begin{align*}
    \nabla_v \psi_i^{(t)} (v) = \eta^{(t)} H_i^{(t)}{^\top} [ \eta^{(t)} H_i^{(t)} v - \alpha_i^{(t)} \nabla_z \phi_i (h(w^{(t)};i)) ], \ i \in [n]. \tagthis \label{eq_grad_v.2}
\end{align*}
% and
% \begin{align*}
%     H_i^{(t)}{^\top} H_i^{(t)} = \left(  \sum_{j=1}^{c} \frac{\partial h(w^{(t)} ; i)_j}{\partial w_{m_r}} \cdot \frac{\partial h(w^{(t)} ; i)_j}{\partial w_{m_c}} \right) \ , \ m_r = 1,\dots,d, \ m_c = 1,\dots,d. \tagthis \label{matrix_H_H.2}
% \end{align*}
Consider $\eta^{(t)} = D\sqrt {\varepsilon}$ for some $D > 0$ and $\varepsilon > 0$, we have for $i \in [n]$ and $0 \leq t < T$
\begin{align*}
    \| \nabla^2_v \psi_i^{(t)} ( v ) \| = (\eta^{(t)})^2 \| H_i^{(t)}{^\top} H_i^{(t)} \| \leq (\eta^{(t)})^2 \| H_i^{(t)} \| \cdot \| H_i^{(t)} \| \overset{\eqref{eq_ass_bounded_hessian.2}}{\leq} D^2 H^2. 
\end{align*}

Hence, $\| \nabla^2_v \Psi^{(t)}(v) \| \leq D^2 H^2 + \varepsilon^2 $ for any $v \in \mathbb{R}^c$ which implies that $L = D^2 H^2 + \varepsilon^2$ (\cite{nesterov2004}) and $\frac{L}{\mu} = \frac{D^2H^2  + \varepsilon^2}{\varepsilon^2}$. Therefore, the complexity to find $v^{(t)}$ for each iteration $t$ is $\mathcal{O}(2n \frac{D^2H^2 + \varepsilon^2}{\varepsilon^2}\log(\frac{1}{\varepsilon}))$.  

Let us choose $0 < \varepsilon \leq 1$. From \eqref{eq_thm_main_result_GD_solved_new.2}, we have
\begin{align*}
    \frac{1}{T} \sum_{t=0}^{T-1} [ F (w^{(t)}) - F_* ] & \leq  \frac{ e^{\beta} L_{\phi}}{ ( 1 - 4 \alpha) \alpha \beta} \cdot  \frac{1}{n} \sum_{i=1}^n  \| h(w^{(0)};i) - h_i^* \|^2 \cdot \varepsilon \\% + P \varepsilon \\
    & \quad + \frac{7 e^{\beta} L_{\phi}}{2\alpha( 1 - 4 \alpha)}  \left[D^2H^2 + c (2+(V+3) G D^2)^2 + 2+ V\right] \cdot \varepsilon  = N \varepsilon, 
\end{align*}
where 
\begin{align*}
    N = \tfrac{ e^{\beta} L_{\phi}}{( 1 - 4 \alpha) \alpha \beta}  \frac{1}{n} \sum_{i=1}^n  \| h(w^{(0)};i) - h_i^* \|^2 + \tfrac{7 e^{\beta} L_{\phi}}{2\alpha( 1 - 4 \alpha)}  \left[D^2H^2 + c (2+(V+3) G D^2)^2 + 2+ V\right]. 
\end{align*}

Let $\hat{\varepsilon} = N \varepsilon$ with $0< \hat{\varepsilon} \leq N$. Then, we need $T = \frac{N \beta}{\hat{\varepsilon}}$ for some $\beta > 0$ to guarantee $\min_{0\leq t\leq T-1} [ F (w^{(t)}) - F_* ] \leq \frac{1}{T} \sum_{t=0}^{T-1} [ F (w^{(t)}) - F_* ] \leq \hat{\varepsilon}$. Hence, the total complexity is %$\mathcal{O}\left(\frac{n N \beta (D^2 H^2 + 1)}{\hat{\varepsilon}} \log(1/\varepsilon) \right)$.
$\mathcal{O}\left(n \frac{N^3 \beta}{\hat{\varepsilon}^3} (D^2H^2 + (\hat{\varepsilon}^2/N))\log(\frac{N}{\hat{\varepsilon}}) \right)$.
\end{proof}

% \newpage
\bibliographystyle{plainnat}
\bibliography{reference}

% keywords can be removed
%\keywords{First keyword \and Second keyword \and More}

% \bibliographystyle{unsrt}
% \bibliography{references}

%\bibliography{references}  %%% Remove comment to use the external .bib file (using bibtex).
%%% and comment out the ``thebibliography'' section.

%%% Comment out this section when you \bibliography{references} is enabled.

\end{document}